\def\eqref#1{Eq.~(\ref{#1})}
\def\1{\bm{1}}
\DeclareMathAlphabet{\mathsfit}{\encodingdefault}{\sfdefault}{m}{sl}
\SetMathAlphabet{\mathsfit}{bold}{\encodingdefault}{\sfdefault}{bx}{n}
\newtheorem*{rep@theorem}{\rep@title}
\newcommand{\newreptheorem}[2]{%
\newenvironment{rep#1}[1]{%
 \def\rep@title{#2 \ref{##1}}%
 \begin{rep@theorem}}%
 {\end{rep@theorem}}}
\newtheorem{definition}{Definition}
\newtheorem{thm}{Theorem}
\newtheorem{lemma}{Lemma}
\definecolor{mydarkblue}{rgb}{0,0.08,0.45}
\definecolor{myblue}{HTML}{3b75c3}
\definecolor{myred}{HTML}{E33222}
\definecolor{mygreen}{HTML}{438773}
\definecolor{mymaroon}{RGB}{142,27,19}
\definecolor{maroon}{HTML}{800000}
\definecolor{mycite}{cmyk}{0.55,1,0,0.15}
\definecolor{codeblue}{rgb}{0.25,0.5,0.5}
\definecolor{codekw}{rgb}{0.85, 0.18, 0.50}
\definecolor{codegreen}{rgb}{0,0.6,0}
\definecolor{codegray}{rgb}{0.5,0.5,0.5}
\definecolor{codepurple}{rgb}{0.58,0,0.82}
\definecolor{backcolour}{rgb}{0.95,0.95,0.92}
\title{A Topological Perspective on Demystifying GNN-Based Link Prediction Performance}
\author{Yu Wang$^1$, Tong Zhao$^2$, Yuying Zhao$^1$, Yunchao Liu$^1$, Xueqi Cheng$^1$, Neil Shah$^2$, Tyler Derr$^1$ 
\\
$^1$Vanderbilt University ~~~ $^2$Snap Inc.\\
\footnotesize\texttt{\{yu.wang.1,yuying.zhao,yunchao.liu,xueqi.cheng,tyler.derr\}@vanderbilt.edu} \\
\footnotesize\texttt{\{tzhao,nshah\}@snap.com} \\
}
\begin{document}
\maketitle
\doparttoc
\faketableofcontents

\begin{abstract}
Graph Neural Networks (GNNs) have shown great promise in learning node embeddings for link prediction (LP). While numerous studies aim to improve the overall LP performance of GNNs, none have explored its varying performance across different nodes and its underlying reasons. To this end, we aim to demystify which nodes will perform better from the perspective of their local topology. Despite the widespread belief that low-degree nodes exhibit poorer LP performance, our empirical findings provide nuances to this viewpoint and prompt us to propose a better metric, Topological Concentration (TC), based on the intersection of the local subgraph of each node with the ones of its neighbors. We empirically demonstrate that TC has a higher correlation with LP performance than other node-level topological metrics like degree and subgraph density, offering a better way to identify low-performing nodes than using cold-start. With TC, we discover a novel topological distribution shift issue in which newly joined neighbors of a node tend to become less interactive with that node's existing neighbors, compromising the generalizability of node embeddings for LP at testing time. To make the computation of TC scalable, We further propose Approximated Topological Concentration (ATC) and theoretically/empirically justify its efficacy in approximating TC and reducing the computation complexity. Given the positive correlation between node TC and its LP performance, we explore the potential of boosting LP performance via enhancing TC by re-weighting edges in the message-passing and discuss its effectiveness with limitations. Our code is publicly available at \url{https://github.com/YuWVandy/Topo\_LP\_GNN}.

\end{abstract}

\vspace{-2ex}
\section{Introduction}\label{sec-introduction}
\vspace{-1ex}
Recent years have witnessed unprecedented success in applying link prediction (LP) in real-world applications~\citep{tian2022reciperec, rozemberczki2022chemicalx}. Compared with heuristic-based~\citep{liben2003link} and shallow embedding-based LP approaches~\citep{grover2016node2vec}, GNN-based ones~\citep{seal, sketch} have achieved state-of-the-art (SOTA) performance; these methods first learn node/subgraph embeddings by applying linear transformations with message-passing and a decoder/pooling layer to predict link scores/subgraph class. While existing works are dedicated to boosting overall LP performance by more expressive message-passing or data augmentation, it is heavily under-explored whether different nodes within a graph would obtain embeddings of different quality and have varying LP performance.

Previous works have explored GNNs' varying performance across nodes, considering factors like local topology (e.g., degree and homophily/heterophily)~\citep{tang2020investigating, mao2023demystifying}, feature quality~\citep{taguchi2021graph}, and class quantity~\citep{zhao2021graphsmote}. While these studies have provided significant insights, their focus has primarily remained on node/graph-level tasks, leaving the realm of LP unexplored. A more profound examination of the node-varying LP performance can enhance our comprehension of network dynamics~\citep{liben2003link}, facilitate the timely detection of nodes with ill-topology~\citep{lika2014facing}, and pave the way for customized data-driven strategies to elevate specific nodes' LP performance. Recognizing the criticality of studying the node-varying LP performance and the apparent gap in the existing literature, we ask:

\begin{center}
\textbf{\textit{Can we propose a metric that measures GNNs' varying LP performance across different nodes?}}
\end{center}

\begin{figure}[t!]
     \centering
     \includegraphics[width=1.0\textwidth]{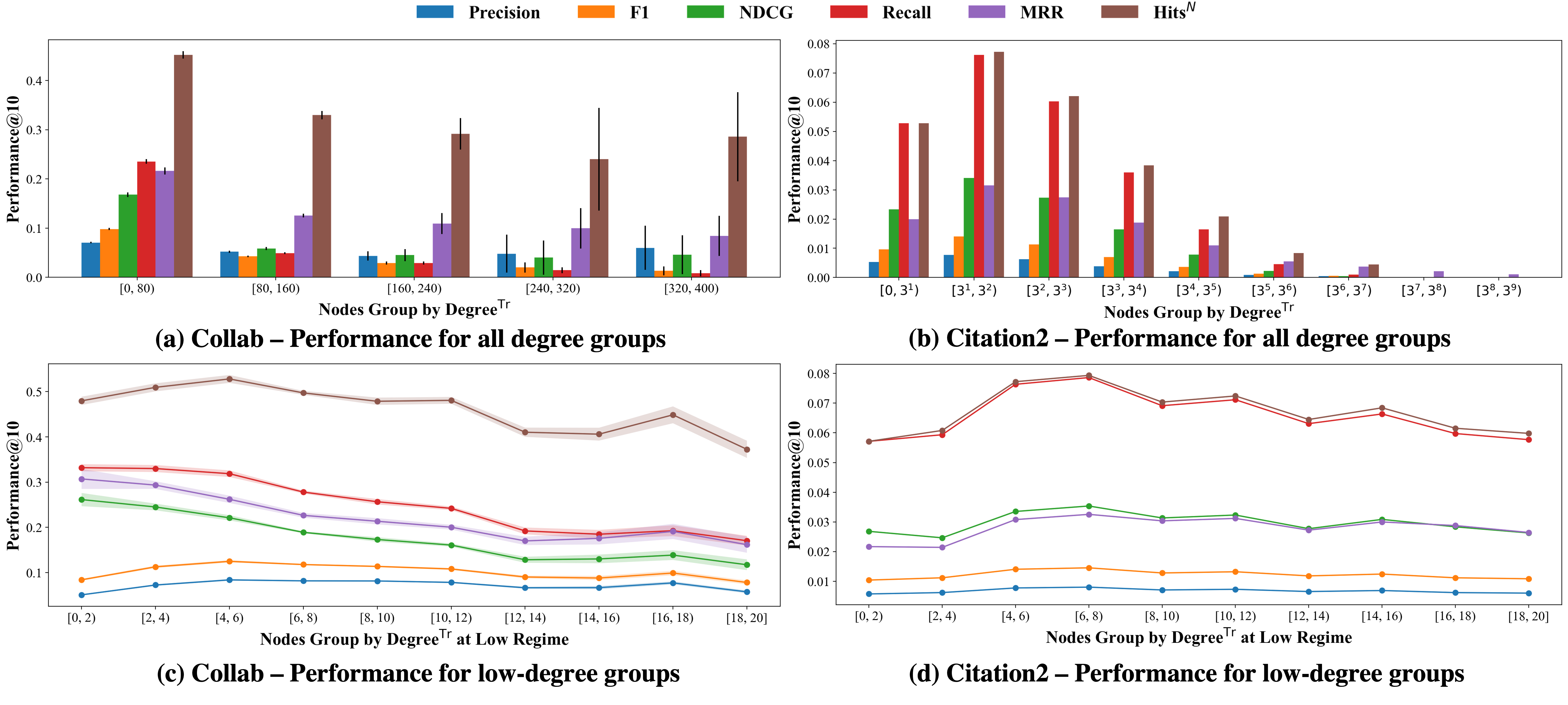}
     \vspace{-3.5ex}
     \caption{Average LP performance of nodes across different degree groups based on Degree$^{\text{Tr}}$(i.e., node degree by training edges) on Collab/Citation2. In \textbf{(a)-(b)}, Performance@10 does not increase as the node degree increases. In \textbf{(c)-(d)}, cold-start (few/lower-degree) nodes do not perform worse than their higher-degree counterparts. Detailed experimental setting is included in Appendix~\ref{app-experiment}}
     \label{fig-motivation}
     \vspace{-3ex}
\end{figure}

To answer a related question in the node classification task, prior works observed that GNNs perform better on high-degree nodes than low-degree nodes~\citep{tang2020investigating, liu2021tail}. Similarly, the persistent cold-start issue in the general LP domain and recommender systems~\citep{leroy2010cold, hao2021pre, li2021user} indicates that nodes with zero-to-low degrees lag behind their high-degree counterparts. However, as surprisingly shown in Figure~\ref{fig-motivation}(a)-(b), GNN-based LP on these two large-scale social networks does not exhibit a consistent performance trend as the node degree increases. For example, the performance@10 on Collab under all evaluation metrics decreases as the node degree increases, while on Citation2, performance@10 first increases and then decreases. This counter-intuitive observation indicates the weak correlation between the node degree and LP performance, which motivates us to design a more correlated metric to answer the above question.

Following~\citep{seal} that the link formation between each pair of nodes depends on the interaction between their local subgraphs, we probe the relation between the local subgraphs around each node (i.e., its computation tree) and its GNN-based LP performance. Specifically, we propose Topological Concentration (TC) and its scalable version, Approximated Topological Concentration (ATC), to measure the topological interaction between the local subgraph of each node and the local subgraphs of the neighbors of that node. Our empirical observations show that TC offers a superior characterization of node LP performance in GNNs, leading to 82.10\% more correlation with LP performance and roughly 200\% increase in the performance gap between the identified under-performed nodes and their counterparts than degree. Moreover, with TC, we discover a novel topological distribution shift (TDS) in which newly joined neighbors of a node tend to become less interactive with that node's existing neighbors. This TDS would compromise the generalizability of the learned node embeddings in LP at the testing time. Given the closer correlation between TC and LP performance, we reweigh the edges in message-passing to enhance TC and discuss its efficacy/limitations in boosting LP performance. Our contributions are summarized as follows:
\begin{itemize}[leftmargin=*]
    \item We propose Topological Concentration (TC) and demonstrate it leads to 82.10\% more correlation with LP performance and roughly 200\% increase in the performance gap between the identified under-performed nodes and their counterparts than node degree, shedding new insights on cold-start issues. We further propose Approximated Topological Concentration (ATC) and demonstrate it maintains high correlations to the LP performance similar to TC while significantly reducing the computation complexity. 

    \item We uncover a novel Topological Distribution Shift (TDS) issue according to TC and demonstrate its negative impact at the node/graph level for link prediction at the testing time. Moreover, we discover that different nodes within the same graph can have varying amounts of TDS.
    
    \item 
    We design a TC inspired message-passing where a node aggregates more from neighbors who are better connected within its computational tree, which can enhance the node's weighted TC. We observe this empirically boosts LP performance and lastly discuss its noncausal limitations.
\end{itemize}

\newpage 

\section{Related Work}
\textbf{Varying Performance of GNNs on Node/Graph Classification.}
GNNs' efficacy in classification differs across nodes/graphs with varying label quantity (e.g., imbalanced node/graph classification~\citep{zhao2021graphsmote, wang2022imbalanced}) and varying topology quality (e.g., long-tailed~\citep{tang2020investigating, liu2021tail}/heterophily node classification~\citep{zhu2020beyond, mao2023demystifying}). To enhance GNNs' performance for the disadvantaged nodes/graphs in these two varying conditions, previous works either apply data augmentations to derive additional supervision~\citep{wang2021mixup, counterfactual} or design expressive graph convolutions to mitigate structural bias~\citep{zhu2021graph}. However, none of them tackle the varying performance of nodes in LP. We fill this gap by studying the relationship between node LP performance and its local topology.

\textbf{GNN-based LP and Node-Centric Evaluation.}
GNN-based LP works by first learning node embeddings/subgraph embeddings through linear transformation and message-passing, and then applying the scoring function to predict link probability/subgraph class~\citep{seal,shiao2022link,guo2023linkless, dong2022fakeedge}. It has achieved new SOTA performance owing to using the neural network to extract task-related information and the message-passing to encode the topological properties (e.g., common neighbors)~\citep{yun2021neo, sketch}. Existing GNN-based LP baselines evaluate performance by computing the average rank of each link against the randomly sampled negative links~\citep{hu2020open}. However, because these sampled negative links only count a tiny portion of the quadratic node pairs, this evaluation contains positional bias~\citep{li2023evaluating}. In view of this issue, we leverage the node-centric evaluation metrics (Precision/F1/NDCG/Recall/Hits$^{N}$@K) that are frequently used in recommender systems~\citep{gori2007itemrank, lightgcn} and rank each node against all other nodes in predicting the incoming neighbors. Detailed definitions of these evaluation metrics are provided in Appendix~\ref{app-definition}. 

\textbf{Varying Performance of GNNs on LP.}
Although no efforts have been investigated into the node-varying performance in GNN-based LP, prior work has studied the underlying causes for varying performance in the general LP setting. For example, the cold-start issue~\citep{leroy2010cold, hao2020inductive, li2021user, hao2021pre} is a long-standing problem that newly landed entities with insufficient interactions (nodes with few-to-no degrees) tend to have lower LP performance. However, this problem is investigated solely from the conventional LP approaches that are different from GNNs, and Figure~\ref{fig-motivation}(c)-(d) has already raised concern over the validity of this claim. Only two previous studies in recommender systems~\citep{li2021user, rahmani2022experiments} have investigated the relationship of node LP performance with its degree, and both claimed that users/nodes with higher activity levels/degrees tend to possess better recommendation performance than their less active counterparts. However, we follow~\citep{wang2022degree} and theoretically discover that some node-centric evaluation metrics have degree-related bias in Appendix~\ref{app-thm-eval-bias}, implying that the GNNs' varying LP performance could be partially attributed to the choice of evaluation metrics. To mitigate this bias, we employ a full spectrum of evaluation metrics and find that degree is not so correlated with the node LP performance. This motivates us to devise a better topological metric than degree.

\section{Topological Concentration}\label{sec-tc}

\subsection{Notations}
Let $G = (\mathcal{V}, \mathcal{E}, \mathbf{X})$ be an attributed graph, where $\mathcal{V} = \{v_i\}_{i = 1}^{n}$ is the set of $n$ nodes (i.e., $n = |\mathcal{V}|$) and $\mathcal{E} \subseteq \mathcal{V}\times \mathcal{V}$ is the set of $m$ observed training edges (i.e., $m = |\mathcal{E}|$) with $e_{ij}$ denoting the edge between the node $v_i$ and $v_j$, and $\mathbf{X}\in\mathbb{R}^{n\times d}$ represents the node feature matrix. The observed adjacency matrix of the graph is denoted as $\mathbf{A}\in\{0, 1\}^{n\times n}$ with $\mathbf{A}_{ij} = 1$ if an observed edge exists between node $v_i$ and $v_j$ and $\mathbf{A}_{ij} = 0$ otherwise. The diagonal matrix of node degree is notated as $\mathbf{D}\in \mathbb{Z}^{n\times n}$ with the degree of node $v_i$ being $d_i = \mathbf{D}_{ii} = \sum_{j=1}^n\mathbf{A}_{ij}$. For the LP task, edges are usually divided into three groups notated as $\mathcal{T} = \{\text{Tr}, \text{Val}, \text{Te}\}$, i.e., training, validation, and testing sets, respectively. We denote $\mathcal{N}_i^{t}, t\in\mathcal{T}$ as node $v_i$'s 1-hop neighbors according to edge group $t$. Furthermore, we denote the set of nodes that have at least one path of length $k$ to node $i$ based on observed training edges as $\mathcal{H}_i^k$ and naturally $\mathcal{H}_i^1 = \mathcal{N}_i^{\text{tr}}$. Note that $\mathcal{H}_i^{k_1}\cap \mathcal{H}_i^{k_2}$ is not necessarily empty since neighbors that are $k_1$-hops away from $v_i$ could also have paths of length $k_2$ reaching $v_i$. We collect $v_i$'s neighbors at all different hops away until $K$ to form the $K$-hop computation tree centered on $v_i$ as $\mathcal{S}_i^K = \{\mathcal{H}_i^k\}_{k = 1}^{K}$. We summarize all notations in Table~\ref{tab-symbols} in Appendix~\ref{app-notations}.

\begin{figure}[ht!]
     \centering
     \includegraphics[width=1\textwidth]{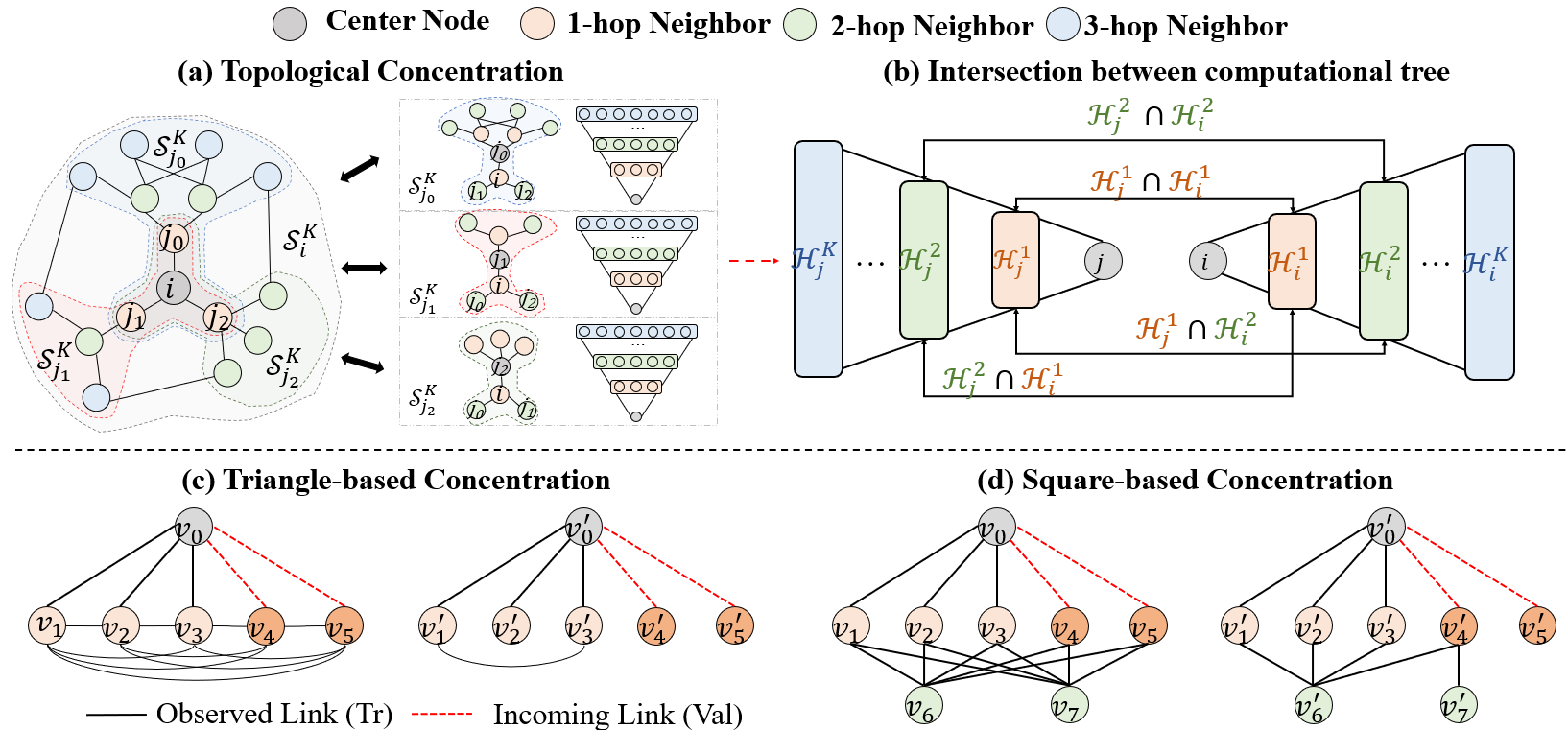}
     \caption{\textbf{(a)-(b)}: $v_i$'s Topological Concentration: we calculate the average intersection between $v_i$'s computation tree and each of $v_i$'s neighbor's computation tree. The intersection between two computation trees is the ratio of the observed intersections to all possible intersections. \textbf{(c)-(d)}: two specifications of TC, corresponding to social and e-commerce networks. A higher triangle/square-based concentration indicates more triangles/squares are formed among $v_0$'s local subgraph.}
     \label{fig-concentration}
     \vspace{-3ex}
\end{figure}

\subsection{Topological Concentration: Intuition and Formalization} 
As the link formation between a node pair heavily depends on the intersection between their local subgraphs~\citep{seal, sketch}, we similarly hypothesize the predictability of a node's neighbors relates to the intersection between this node's subgraph and the subgraphs of that node's neighbors, e.g., the prediction of the links $\{(i, j_k)\}_{k = 0}^{2}$ in Figure~\ref{fig-concentration}(a) depends on the intersection between $\mathcal{S}_i^{K}$ and $\{\mathcal{S}_{j_k}^{K}\}_{k=0}^{2}$. A higher intersection leads to higher LP performance. For example, in Figure~\ref{fig-concentration}(c)-(d), $v_0$ neighbors closely interact with themselves while $v_0'$ neighbors do not, posing different topological conditions for the LP on $v_0$ and $v_0'$. From graph heuristics perspective, $v_0$ shares common neighbors $v_1, v_2, v_3$ with its incoming validation neighbors $v_4, v_5$ while $v_0'$ shares no neighbors with $v_4', v_5'$. From the message-passing perspective, the propagated embeddings of $v_0$ and $v_4, v_5$ share common components since they all aggregate $\{v_k\}_{k = 1}^{3}$ embeddings while $v_0'$ and $v_4', v_5'$ do not share any common embeddings among $\{v_k'\}_{k = 1}^{3}$. When the subgraph (i.e., computation tree) surrounding a node increasingly overlaps with the subgraphs of its neighbors, more paths originating from that node are likely to loop nearby and eventually return to it, resulting in a more dense/concentrated local topology for that node. Inspired by this observation, we introduce \textit{Topological Concentration} to measure the average level of intersection among these local subgraphs as follows:

\begin{definition}
\textbf{Topological Concentration (TC):}
The Topological Concentration $C_i^{K, t}$ for node $v_i \in \mathcal{V}$ is defined as the average intersection between $v_i$'s $K$-hop computation tree ($\mathcal{S}_i^{K}$) and the computation trees of each of $v_i$'s type $t$ neighbors:
\begin{equation}\label{eq-TC}
    C_i^{K, t} = \mathbb{E}_{v_j \sim \mathcal{N}_i^{t}}I(\mathcal{S}_i^{K}, \mathcal{S}_j^{K}) = \mathbb{E}_{v_j \sim \mathcal{N}_i^{t}}\frac{\sum_{k_1 = 1}^{K}{\sum_{k_2 = 1}^K}\beta^{k_1 + k_2 - 2}|\mathcal{H}_i^{k_1}\cap \mathcal{H}_j^{k_2}|}{\sum_{k_1 = 1}^K\sum_{k_2 = 1}^K\beta^{k_1 + k_2 - 2}g(|\mathcal{H}_i^{k_1}|, |\mathcal{H}_j^{k_2}|)}
\end{equation}
\end{definition}
$\forall v_i \in \mathcal{V}, \forall t \in \mathcal{T}$, where $I(\mathcal{S}_i^{K}, \mathcal{S}_j^{K})$ quantifies the intersection between the $K$-hop computation trees around $v_i$ and $v_j$, and is decomposed into the ratio of the observed intersections $|\mathcal{H}_i^{k_1}\cap \mathcal{H}_j^{k_2}|$ to the total possible intersections $g(\mathcal{H}_i^{k_1}, \mathcal{H}_j^{k_2})$ between neighbors that are $k_1$ and $k_2$ hops away as shown in Figure~\ref{fig-concentration}(b). $\beta^{k_1 + k_2 - 2}$ accounts for the exponential discounting effect as the hop increases. The normalization term $g$ is a function of the size of the computation trees of node $v_i, v_j$~\citep{fu2022revisiting}. Although computation trees only consist of edges from the training set, $v_i$'s neighbors $\mathcal{N}_i^t$ in \eqref{eq-TC} could come from training/validation/testing sets, and we term the corresponding TC as TC$^\text{Tr}$, TC$^\text{Val}$, TC$^\text{Te}$ and their values as $C_i^{K, \text{Tr}}, C_i^{K, \text{Val}}, C_i^{K, \text{Te}}$. We verify the correlation between TC and the node LP performance in Section~\ref{sec-applicationtc}.

\begin{figure}[t!]
     \centering
     \includegraphics[width=1\textwidth]{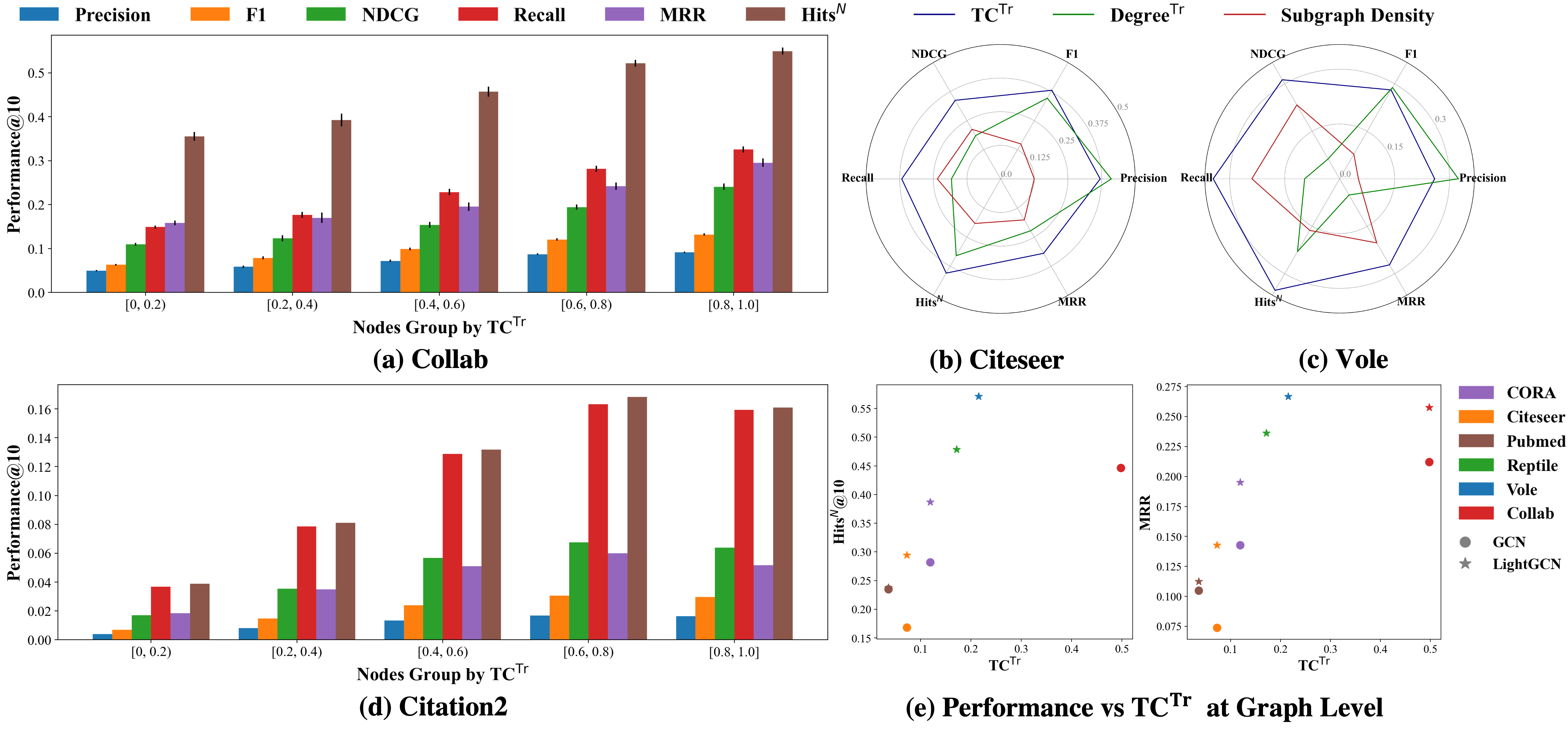}
     \caption{\textbf{(a)/(d)}: The average LP Performance of nodes on Collab/Citation2 monotonically increases as the Train-TC increases. \textbf{(b)/(c)}: TC$^{\text{Tr}}$ mostly achieves the highest Pearson Correlation with LP performance on Citeseer/Vole than Degree$^{\text{Tr}}$ and Subgraph Density metrics. \textbf{(e)}: LP performance is positively correlated to TC$^{\text{Tr}}$ across different network datasets.
     }
     \label{fig-analysis1}
\end{figure}

\subsection{Topological Concentration: Observation and Analysis}\label{sec-applicationtc}
In this section, we draw three empirical observations to delve into the role of TC in GNN-based LP. For all experiments, we evaluate datasets with only the topology information using LightGCN and those also having node features using GCN/SAGE~\citep{GCN, SAGE, lightgcn}\footnote{Due to GPU memory limitation, we choose SAGE for Citation2.}. Detailed experimental settings are described in Appendix~\ref{app-experiment}. Please note that while the findings illustrated in this section are limited to the presented datasets due to page limitation, more comprehensive results are included in Appendix~\ref{app-add-res}.

\textbf{Obs. 1. TC correlates to LP performance more than other node topological properties.} In Figure~\ref{fig-analysis1} (a)/(d), we group nodes in Collab/Citation2 based on their TC$^{\text{Tr}}$ and visualize the average performance of each group. Unlike Figure~\ref{fig-motivation}(a)/(b), where there is no apparent relationship between the performance and the node degree, the performance almost monotonically increases as the node TC$^{\text{Tr}}$ increases regardless of the evaluation metrics. This demonstrates the capability of TC$^{\text{Tr}}$ in characterizing the quality of nodes' local topology for their LP performance. Moreover, we quantitatively compare the Pearson Correlation of the node LP performance with TC$^{\text{Tr}}$ and other commonly used node local topological properties, Degree$^{\text{Tr}}$ (i.e., the number of training edges incident to a node) and SubGraph Density (i.e., the density of the 1-hop training subgraph centering around a node). As shown in Figure~\ref{fig-analysis1}(b)/(c), TC$^{\text{Tr}}$ almost achieves the highest Pearson Correlation with the node LP performance across every evaluation metric than the other two topological properties except for the precision metric. This is due to the degree-related evaluation bias implicitly encoded in the precision metric, i.e., even for the untrained link predictor, the precision of a node still increases linearly as its degree increases, as proved in Theorem~\ref{thm-eval}. Note that the node's 1-hop Subgraph Density equals its local clustering coefficient (LCC), and one previous work~\citep{walkpool} has observed its correlation with node LP performance. To justify the advantages of TC$^{\text{Tr}}$ over LCC, we provide a concrete example in Appendix~\ref{app-adv-tc-lcc}. Additionally, Figure~\ref{fig-analysis1}(e) shows that TC$^{\text{Tr}}$ also positively correlated with LP performance across various networks, depicting a preliminary benchmark for GNNs' LP performance at the graph level~\citep{palowitch2022graphworld}. The LightGCN architecture exhibits a steeper slope than GCN, as it relies exclusively on network topology without leveraging node features and thus is more sensitive to changes in the purely topological metric, TC.  The deviation of Collab under both GCN and LightGCN baselines from the primary linear trend might be attributed to the duplicated edges in the network that create the illusion of a higher train-TC~\citep{hu2020open}.

\begin{figure}[t!]
     \centering
     \includegraphics[width=1\textwidth]{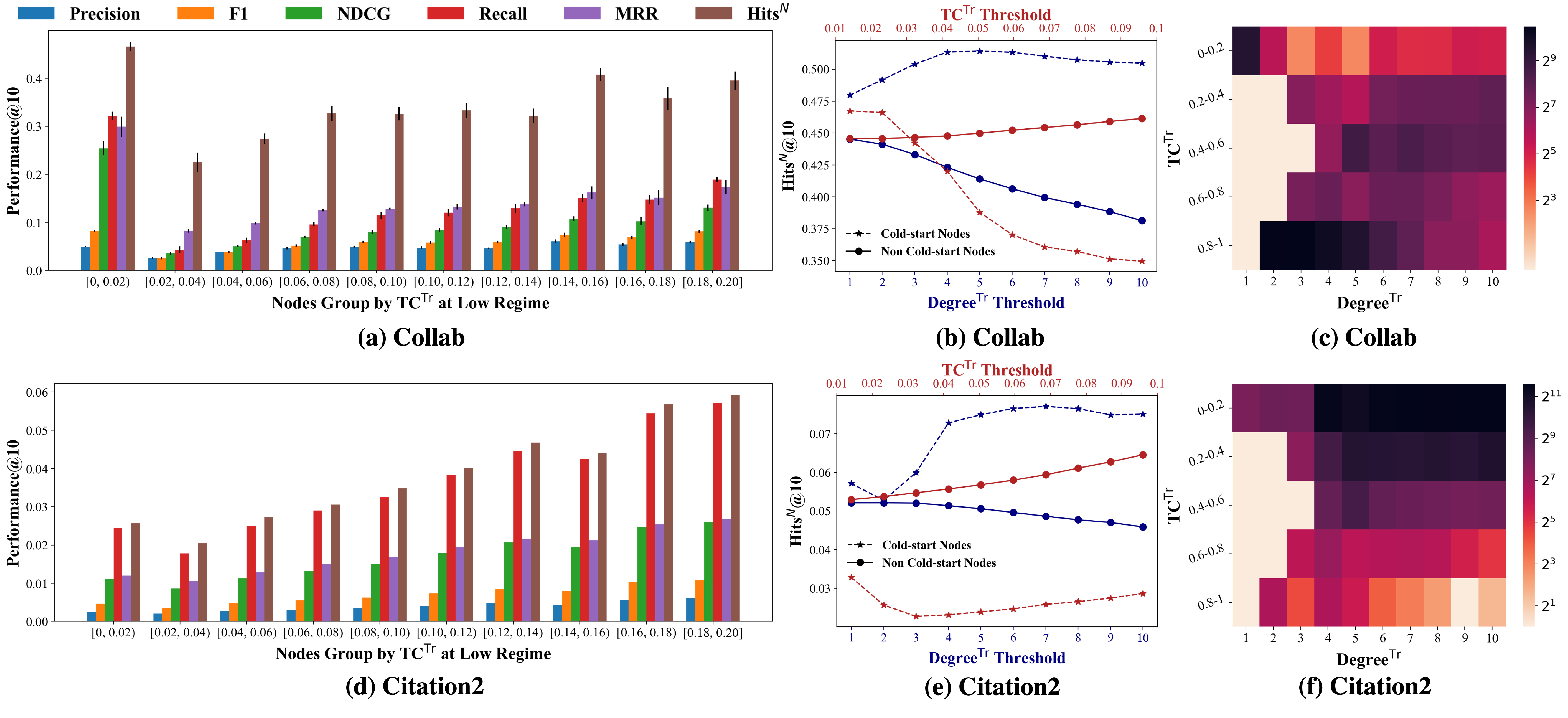}
     \caption{\textbf{(a)/(d)}: The average LP performance of nodes with extremely low TC${^\text{Tr}}$ on Collab/Citation2 almost monotonically increases as TC$^{\text{Tr}}$ increases. \textbf{(b)/(e)}: Cold-Start nodes identified by owning lower Degree$^{\text{Tr}}$ surprisingly perform better than their non-cold-start counterparts (\textcolor{blue!80!black}{\textbf{Blue}} curves). In contrast, Non-concentrated nodes identified by owning lower TC$^{\text{Tr}}$ in most cases perform worse than their concentrated counterparts (\textcolor{red!70!black}{\textbf{Red}} curves). \textbf{(c)/(f)}: As node Degree$^{\text{Tr}}$ increases, the ratio of nodes owning higher TC$^{\text{Tr}}$ increases first and then decreases, corresponding to the observed first-increase-and-then-decrease performance trend in Figure~\ref{fig-motivation}(c)/(d).}
     \label{fig-analysis2}
     \vspace{-2ex}
\end{figure}

\textbf{Obs. 2. TC better identifies low-performing nodes than degree, and Cold-start nodes may not necessarily have lower LP performance.}  

As previously shown in Figure~\ref{fig-motivation}(c)/(d), when the node degree is at the very low regime, we do not observe a strict positive relationship between node Degree$^{\text{Tr}}$ and its LP performance. For example, the node Recall/MRR/NDCG@10 in Collab decreases as Degree$^{\text{Tr}}$ increases and Hits$^{N}$/F1/Precision@10 first increases and then decreases. These contradicting observations facilitate our hypothesis that the degree might not fully capture the local topology in characterizing the lower performance of cold-start nodes. Conversely, in Figure~\ref{fig-analysis2}(a)/(d) on Collab/Citation2, nodes with lower TC$^{\text{Tr}}$ almost always have worse LP performance under all evaluation metrics except when TC$^{\text{Tr}}$ is between $[0, 0.02)$. For this extreme case, we ascribe it to the distribution shift as nodes with extremely low TC$^{\text{Tr}}$ generally have a decent TC$^{\text{Te}}$ (shown in Figure~\ref{fig-train-test-tc-rela}) and sustain a reasonable LP performance. We thoroughly investigate this distribution shift issue in \textbf{Obs. 3}. Furthermore, we adjust the Degree$^{\text{Tr}}$ from 1 to 10 to divide nodes into `Cold-Start/Non-Cold-Start' groups and adjust TC$^{\text{Tr}}$ from 0.01 to 0.1 to divide nodes into `Concentrated/Non-Concentrated' groups. We compare their average LP performance on Collab/Citation2 in Figure~\ref{fig-analysis2}(b)/(e). Intriguingly, Cold-Start nodes identified via lower Degree$^{\text{Tr}}$ always perform better than their Non-Cold-Start counterparts with higher Degree$^{\text{Tr}}$ across all Degree$^{\text{Tr}}$ thresholds. This brings nuances into the conventional understanding that nodes with a weaker topology (lower degree) would yield inferior performance. In contrast, with our defined Tc$^{\text{Tr}}$ metric, Non-concentrated nodes (lower Tc$^{\text{Tr}}$) generally underperform by a noticeable margin than their concentrated counterparts (higher TC$^{\text{Tr}}$). 

We further visualize the relation between Degree$^{\text{Tr}}$ and TC$^{\text{Tr}}$ in Figure~\ref{fig-analysis2}(c)/(f). When node Degree$^{\text{Tr}}$ increases from 1 to 4, the ratio of nodes owning higher TC$^{\text{Tr}}$ also increases because these newly landed nodes start interactions and create their initial topological context. Since we have already observed the positive correlation of TC$^{\text{Tr}}$ to nodes' LP performance previously, the LP performance for some evaluation metrics also increases as the Degree$^{\text{Tr}}$ initially increases from 0 to 4 observed in Figure~\ref{fig-motivation}(c)/(d). When Degree$^{\text{Tr}}$ increases further beyond 5, the ratio of nodes owning higher TC$^{\text{Tr}}$ gradually decreases, leading to the decreasing performance observed in the later stage of Figure~\ref{fig-motivation}(c)/(d). This decreasing Train$^{\text{Tr}}$ is because, for high Degree$^{\text{Tr}}$ nodes, their neighbors are likely to lie in different communities and share fewer connections among themselves. For example, in social networks, high-activity users usually possess diverse relations in different online communities, and their interacted people are likely from significantly different domains and hence share less common social relations themselves~\citep{zhaok2021autoemb}.

\textbf{Obs. 3. Topological Distribution Shift compromises the LP performance at testing time, and TC can measure its negative impact at both graph and node level.}
In real-world LP scenarios, new nodes continuously join the network and form new links with existing nodes, making the whole network evolve dynamically~\citep{ma2020streaming, rossi2020temporal}. \textit{Here, we discover a new Topological Distribution Shift (TDS) issue, i.e., as time goes on, the newly joined neighbors of a node become less interactive with that node's old neighbors.} Since the edges serving message-passing and providing supervision only come from the training set, TDS would compromise the capability of the learned node embeddings for predicting links in the testing set. As verified in Figure~\ref{fig-analysis3}(a), the performance gap between validation and testing sets on Collab where edges are split according to time is much more significant than the one on Cora/Citeseer where edges are split randomly. Note that the significantly higher performance on predicting training edges among all these three datasets is because they have already been used in the training phase~\citep{wang2023neural}, and this distribution shift is different from TDS. As TC essentially measures the interaction level among neighbors of a particular node, we further visualize the distribution of the difference between TC$^{\text{Val}}$ and TC$^{\text{Te}}$ in Figure~\ref{fig-analysis3}(b). We observe a slight shift towards the right on Collab rather than on Cora/Citeseer, demonstrating nodes' testing neighbors become less interactive with their training neighbors than their validation neighbors. Figure~\ref{fig-analysis2}(c) further demonstrates the influence of this shift at the node level by visualizing the relationship between TDS and the performance gap. We can see that as the strength of such shift increases (evidenced by the larger difference between TC$^{\text{Val}}$ and TC$^{\text{Test}}$), the performance gap also increases. This suggests that nodes within the same graph display varying TDS levels. As one potential application, we can devise adaptive data valuation techniques to selectively retain or remove stale edges for individual nodes in LP. 

\begin{figure}[htbp!]
     \centering
     \vspace{-1ex}
     \includegraphics[width=1\textwidth]{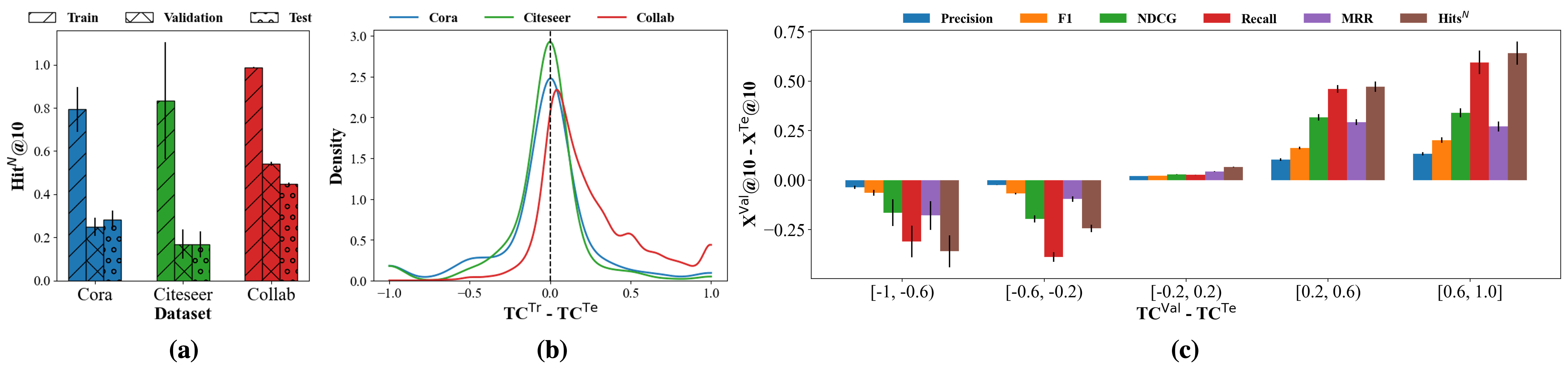}
     \vspace{-3.5ex}
     \caption{\textbf{(a)} Hits$^{N}$@10 of predicting training/validation/testing edges on Cora/Citeseer/Collab. The gap between validation and testing performance is much bigger on Collab than on Cora/Citeseer. \textbf{(b)} Compared with Cora/Citeseer where edges are randomly split, the distribution of the difference between TC$^{\text{Val}}$ and TC$^{\text{Te}}$ shifts slightly right on Collab where edges are split according to time, indicating the interaction between training and testing neighbors become less than the one between training and validation neighbors. \textbf{(c)} As the gap between TC$^{\text{Val}}$ and TC$^{\text{Te}}$ increases for different nodes, their corresponding performance gap also increases, demonstrating TDS varies among different nodes even within the same graph.}
     \label{fig-analysis3}
     \vspace{-2.5ex}
\end{figure}

\subsection{Topological Concentration: Computational Complexity and Optimization}
\vspace{-0.5ex}
Calculating TC following \eqref{eq-TC} involves counting the intersection between two neighboring sets that are different hops away from the centering nodes in two computation trees. Assuming the average degree of the network is $\widehat{d}$, the time complexity of computing $C_i^{K,t}$ for all nodes in the network is $\mathcal{O}(|\mathcal{E}|\sum_{k=1}^K\sum_{k=1}^K\min(\widehat{d}^{k_1}, \widehat{d}^{k_2})) = \mathcal{O}(K^2|\mathcal{E}||\mathcal{V}|) \approx \mathcal{O}(K^{2}|\mathcal{V}|^2)$ for sparse networks, which increases quadratically as the size of the network increases and is hence challenging for large-scale networks. To handle this issue, we propagate the randomly initialized Gaussian embeddings in the latent space to approximate TC in the topological space and propose \textit{Approximated Topological Concentration} as follows:

\begin{definition}
\textbf{Approximated Topological Concentration (ATC):} 
Approximated topological concentration $\widetilde{C}_i^{K, t}$ for $v_i \in \mathcal{V}$ is the average similarity between $v_i$ and its neighbors' embeddings initialized from Gaussian Random Projection~\citep{chen2019fast} followed by row-normalized graph diffusion $\widetilde{\mathbf{A}}^k$~\citep{gasteiger2019diffusion}, with $\phi$ as the similarity metric function:
\vspace{-1.5ex}
\begin{equation}\label{eq-ATC}
    \widetilde{C}_i^{K, t} = \mathbb{E}_{v_j \sim \mathcal{N}_i^t}\phi(\mathbf{N}_i, \mathbf{N}_j), ~~~~\mathbf{N} = \sum_{k = 1}^{K}\alpha_k \widetilde{\mathbf{A}}^{k}\mathbf{R},~~~~\mathbf{R} \sim \mathcal{N}(\mathbf{0}^{d}, \boldsymbol{\Sigma}^d)
\end{equation}
\end{definition}

\begin{thm}\label{thm-eq}
Assuming $g(|\mathcal{H}_i^{k_1}|, |\mathcal{H}_j^{k_2}|) = |\mathcal{H}_i^{k_1}||\mathcal{H}_j^{k_2}|$ in \eqref{eq-TC} and let $\phi$ be the dot-product based similarity metric~\citep{lightgcn}, then node $v_i$'s 1-layer Topological Concentration $C_i^{1, t}$ is linear correlated with the mean value of the 1-layer Approximated Topological Concentration $\mu_{\widetilde{C}_i^{K, t}}$ as:
\vspace{-1.25ex}
\begin{equation}
    C_{i}^{1, t} \approx d^{-1}\mu_{\mathbb{E}_{v_j \sim \mathcal{N}_i^{t}}(\mathbf{E}^{1}_{j})^{\top}\mathbf{E}^{1}_{i}} = d^{-1}\mu_{\widetilde{C}_i^{1, t}},
\end{equation}
\end{thm}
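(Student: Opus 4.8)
The plan is to unwind both definitions at the one-layer level ($K=1$) and show they coincide up to the factor $d^{-1}$. First I would specialize Equation~\eqref{eq-TC} with $K=1$: since $\mathcal{H}_i^1 = \mathcal{N}_i^{\text{tr}}$, the double sum over $k_1,k_2$ collapses to the single term $k_1=k_2=1$, the discount factor $\beta^{k_1+k_2-2}$ becomes $\beta^0 = 1$, and under the stated hypothesis $g(|\mathcal{H}_i^1|,|\mathcal{H}_j^1|) = |\mathcal{H}_i^1||\mathcal{H}_j^1| = d_i d_j$. Hence $C_i^{1,t} = \mathbb{E}_{v_j \sim \mathcal{N}_i^t}\bigl(|\mathcal{N}_i^{\text{tr}}\cap\mathcal{N}_j^{\text{tr}}|/(d_i d_j)\bigr)$, i.e. the expected normalized common-neighbor count.

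Next I would compute the right-hand side. Taking $K=1$ in Equation~\eqref{eq-ATC} gives $\mathbf{N} = \alpha_1 \widetilde{\mathbf{A}}\mathbf{R}$ where $\widetilde{\mathbf{A}}$ is the row-normalized adjacency, so $\mathbf{N}_i = \alpha_1 d_i^{-1}\sum_{u \in \mathcal{N}_i^{\text{tr}}}\mathbf{R}_u$ (absorbing notation: write $\mathbf{E}_i^1 := \mathbf{N}_i$). With $\phi$ the dot product, $\phi(\mathbf{N}_i,\mathbf{N}_j) = \alpha_1^2 (d_i d_j)^{-1}\sum_{u \in \mathcal{N}_i^{\text{tr}}}\sum_{w \in \mathcal{N}_j^{\text{tr}}}\mathbf{R}_u^\top \mathbf{R}_w$. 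Now take expectation over the random projection $\mathbf{R} \sim \mathcal{N}(\mathbf{0}^d,\boldsymbol{\Sigma}^d)$: the rows $\mathbf{R}_u$ are independent with $\mathbb{E}[\mathbf{R}_u^\top\mathbf{R}_w] = 0$ for $u \neq w$ and $\mathbb{E}[\mathbf{R}_u^\top\mathbf{R}_u] = \Tr(\boldsymbol{\Sigma}^d)$. Choosing (as is standard for Gaussian random projection) $\boldsymbol{\Sigma}^d = d^{-1}\mathbf{I}$ so that $\Tr(\boldsymbol{\Sigma}^d)=1$, and absorbing $\alpha_1$, only the diagonal terms $u=w$ survive, and these are exactly the indices in $\mathcal{N}_i^{\text{tr}}\cap\mathcal{N}_j^{\text{tr}}$. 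Therefore $\mu_{\phi(\mathbf{N}_i,\mathbf{N}_j)} = (d_i d_j)^{-1}|\mathcal{N}_i^{\text{tr}}\cap\mathcal{N}_j^{\text{tr}}|$, and taking the expectation over $v_j\sim\mathcal{N}_i^t$ and pulling out the global normalization $d^{-1}$ (the dimension/variance constant) yields $\mu_{\widetilde{C}_i^{1,t}} = d\cdot C_i^{1,t}$ up to the absorbed constants, i.e. $C_i^{1,t} \approx d^{-1}\mu_{\widetilde{C}_i^{1,t}}$.

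The main obstacle — and the reason the statement says ``$\approx$'' and ``linear correlated'' rather than exact equality — is bookkeeping the normalization constants: the paper's $\boldsymbol{\Sigma}^d$, the diffusion coefficients $\alpha_k$, and whether $\widetilde{\mathbf{A}}$ is row-normalized adjacency or something symmetrized all contribute multiplicative factors that I would need to fix by an explicit convention, and the per-node degree normalizations $d_i^{-1}, d_j^{-1}$ coming from the row-normalized diffusion must be reconciled with the $g$-normalization $d_i d_j$ in TC. I would handle this by stating the convention $\Tr(\boldsymbol{\Sigma}^d)=1$ (or equivalently $\boldsymbol{\Sigma}^d = d^{-1}\mathbf{I}_d$, which is what makes $d^{-1}$ appear) and $\alpha_1=1$ up front, then the cross-term vanishing under independence of the Gaussian rows does all the real work. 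A secondary subtlety is that the identity is about the \emph{mean} $\mu$ of the random quantity $\widetilde{C}_i^{1,t}$ over the draw of $\mathbf{R}$, so I must be careful to take $\mathbb{E}_{\mathbf{R}}$ before (or commuting with) the neighbor-average $\mathbb{E}_{v_j}$ — both are linear so Fubini applies, but it should be stated. No concentration/variance bound is needed for this particular claim; that would belong to a separate statement about ATC approximating TC with high probability.
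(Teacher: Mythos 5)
Your proposal follows essentially the same route as the paper's proof: expand the one-layer diffusion $\widetilde{\mathbf{A}}\mathbf{R}$, split the resulting double sum of inner products $\mathbf{R}_u^{\top}\mathbf{R}_w$ over $\mathcal{N}_i^{\text{tr}}\times\mathcal{N}_j^{\text{tr}}$ into non-common-neighbor pairs (mean zero by independence) and common-neighbor self-products, and observe that only the latter survive in expectation, yielding $|\mathcal{N}_i^{\text{tr}}\cap\mathcal{N}_j^{\text{tr}}|/(d_i d_j)$ times a constant. Your version is actually cleaner on two points: you need only linearity of expectation and Fubini to commute $\mathbb{E}_{\mathbf{R}}$ with $\mathbb{E}_{v_j}$, whereas the paper detours through the central limit theorem and the full $\chi^2_d$ distribution to establish facts about means that follow directly from $\mathbb{E}[\mathbf{R}_u^{\top}\mathbf{R}_w]=0$ for $u\ne w$ and $\mathbb{E}[\mathbf{R}_u^{\top}\mathbf{R}_u]=\Tr(\boldsymbol{\Sigma}^d)$.

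The one concrete slip is the covariance convention, and it points the wrong way. You propose $\boldsymbol{\Sigma}^d=d^{-1}\mathbf{I}$ so that $\Tr(\boldsymbol{\Sigma}^d)=1$ and assert this ``is what makes $d^{-1}$ appear''; in fact it makes the factor disappear: with $\Tr(\boldsymbol{\Sigma}^d)=1$ your own computation gives $\mu_{\widetilde{C}_i^{1,t}}=C_i^{1,t}$, not $d\,C_i^{1,t}$. The theorem's factor $d^{-1}$ arises from taking $\boldsymbol{\Sigma}^d=\mathbf{I}$, under which $\mathbf{R}_k^{\top}\mathbf{R}_k\sim\chi^2_d$ has mean $d$ --- this is the convention the paper's proof and its Monte-Carlo check use --- so that $\mu_{\widetilde{C}_i^{1,t}}\approx d\,C_i^{1,t}$. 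Since you explicitly flagged the constant bookkeeping as the step to be pinned down, this is a fixable convention error rather than a structural gap, but as written your stated normalization contradicts the factor of $d^{-1}$ in the claim you are proving.
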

\vspace{-1.5ex}
where $\mathbf{E}^1 \in \mathbb{R}^{n \times d}$ denotes the node embeddings after 1-layer SAGE-style message-passing and $d$ is the embedding dimension. The full proof is in Appendix~\ref{app-theorem}. \textit{This theorem bridges the gap between TC defined in the topological space and ATC defined in the latent space, which theoretically justifies the effectiveness of this approximation.} Computationally, obtaining node embeddings $\mathbf{N}$ in \eqref{eq-ATC} is free from optimization, and the graph diffusion can be efficiently executed via power iteration, which reduces the complexity to $\mathcal{O}(Kd(|\mathcal{E}| + |\mathcal{V}|))$. Note that although we only demonstrate the approximation power for the case of 1-layer message-passing, we empirically verify the efficacy for higher-layer message-passing in the following.

Here, we compare TC and ATC under various number of hops in terms of their computational time and their correlation with LP performance in Figure~\ref{fig-analysis4}. As the number of hops increases, the running time for computing TC increases exponentially (especially for large-scale datasets like Collab, we are only affordable to compute its Train-TC up to 3 hops) while ATC stays roughly the same. This aligns with the quadratic/linear time complexity $\mathcal{O}(K^2|\mathcal{V}|^2)$/$\mathcal{O}(Kd(|\mathcal{E}| + |\mathcal{V}|))$ we derived earlier for TC/ATC. Moreover, ATC achieves a similar level of correlation to TC at all different hops. For both TC and ATC, their correlations to LP performance increase as the number of hops $K$ used in \eqref{eq-TC}-\eqref{eq-ATC} increases. This is because larger $K$ enables us to capture intersections among larger subgraphs and hence accounts for more common neighbor signals~\citep{sketch}.

\begin{figure}[htbp!]
     \centering
     \includegraphics[width=1\textwidth]{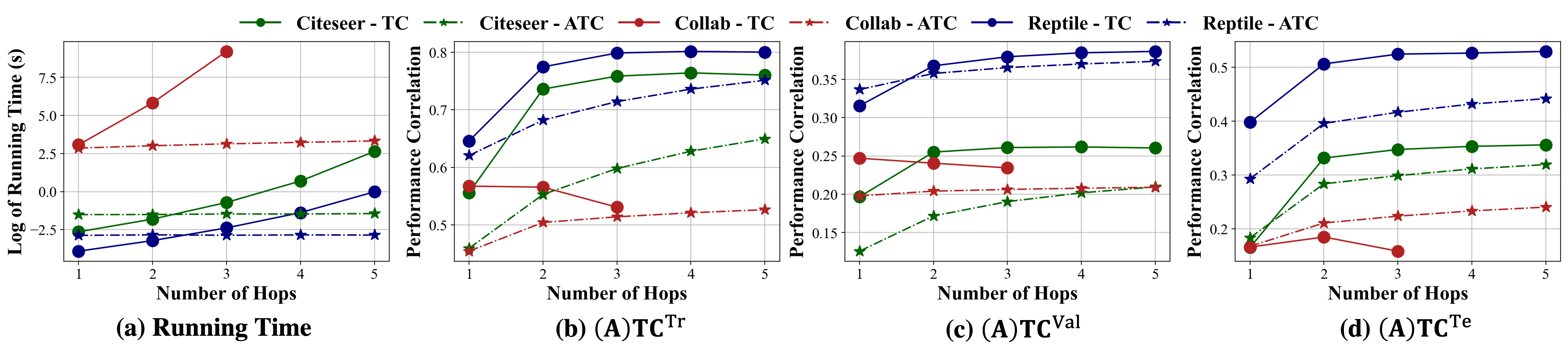}
     \vspace{-2.5ex}
     \caption{ATC maintains a similar level of correlation to TC while significantly reducing the computational time. Using TC computed with higher hops of neighbors leads to a higher correlation.} 
     \label{fig-analysis4}
     \vspace{-1.5ex}
\end{figure}

\section{Topological Concentration: Boosting GNNs' LP performance}\label{sec-method}

From the aforementioned observations, TC consistently exhibits a stronger correlation with GNNs' LP performance than other commonly used node topological metrics. This insight motivates us to explore the potential of boosting GNNs' LP performance via enhancing TC$^{\text{Tr}}$. Specifically, we propose to vary the edge weight used in message-passing by aggregating more information from neighbors that contribute more to TC$^{\text{Tr}}$. We theoretically and empirically show that using this way could enhance 1-layer TC$^{\text{Tr}}$ in Theorem~\ref{thm-wtc} and Figure~\ref{fig-analysis_res}(a). Since neighbors owning more connections with the whole neighborhood have higher LP scores with the average neighborhood embeddings, we update the adjacency matrix used for message-passing as:
\begin{equation}
    \widetilde{{\mathbf{A}}}_{ij}^{\tau} = \begin{cases}
    \widetilde{\mathbf{A}}_{ij}^{\tau - 1} + \gamma \frac{\exp(g_{\boldsymbol{\Theta}_g}(\mathbf{N}_i^{\tau - 1}, \mathbf{H}_j^{\tau - 1}))}{\sum_{j = 1}^{n}{\exp(g_{\boldsymbol{\Theta}_g}(\mathbf{N}_i^{\tau - 1}, \mathbf{H}_j^{\tau - 1}))}}, & \text{if} \mathbf{A}_{ij} = 1 \\
    0, & \text{if}  \mathbf{A}_{ij} = 0
    \end{cases}, \forall v_i, v_j \in \mathcal{V},
\end{equation}

where $\mathbf{H}^{\tau - 1} = f_{\boldsymbol{\Theta}_{f}}(\widetilde{\mathbf{A}}^{\tau - 1}, \mathbf{X})$ is the node embeddings obtained from GNN model $f_{\boldsymbol{\Theta}_f}$, $\mathbf{N}^{\tau - 1} = \widetilde{\mathbf{A}}\mathbf{H}^{\tau - 1}$ is the average neighborhood embeddings by performing one more SAGE-style propagation, $\gamma$ is the weight coefficient, $\widetilde{\mathbf{A}}^0 = \widetilde{\mathbf{A}}$, and $g_{\Theta_g}$ is the link predictor. The detailed algorithm is presented in Appendix~\ref{alg-augment}. Note that this edge reweighting strategy directly operates on the original adjacency matrix and hence shares the same time/space complexity as the conventional message-passing $\mathcal{O}(Ld(|\mathcal{E} + \mathcal{V}|))$ with $L$ being the number of message-passing layers.

Specifically, we equip three baselines, GCN/SAGE/NCN, with our designed edge reweighting strategy and present their LP performance in Table~\ref{tab-main-res}. Appendix~\ref{app-experiment} thoroughly describes the experimental settings. For GCN and SAGE, equipping with our proposed edge reweighting strategy enhances their LP performance in most cases. This demonstrates that by pushing up the TC$^\text{TR}$ of the whole graph, the LP performance can be boosted to a certain level, which also verified by the increasing TC shown in Figure~\ref{fig-analysis_res}(a). We hypothesize that neighbors connecting more to the overall neighborhood likely have greater interactions with incoming neighbors. Thus, aggregating more information from them inherently captures the common neighbor signals of these incoming neighbors. Meanwhile, as NCN already explicitly accounts for this common neighbor signal in its decoder, the performance gains from our strategy are relatively modest. Furthermore, the positive trend observed in Figure~\ref{fig-analysis_res}(b) verifies that the larger enhancement in node TC$^{\mathrm{Tr}}$ leads to a larger performance boost in its Hits$^{N}$@10. However, we note that LP performance is evaluated on the incoming testing neighbors rather than training neighbors, so TC$^\text{TR}$ is more of a correlated metric than causal. To illustrate, consider an extreme case where a node $v_i$ has training neighbors forming a complete graph with no connection to its incoming neighbors. In such a scenario, even if we achieve the maximum TC$^{\text{TR}} = 1$ and a significantly high training performance, it still hardly generalizes to predicting testing links. This could be reflected in the inconsistent trend in Figure~\ref{fig-collab-aug-diff-GCN}/\ref{fig-collab-aug-diff-SAGE} in Appendix~\ref{app-aug-diff}. In addition, Figure~\ref{fig-analysis_res}(c) verifies the assumption made in Theorem~\ref{thm-wtc} and in the reweighting model design that nodes with larger TC$^{\text{Tr}}$ have higher average embedding similarity to their neighbors.

\begin{table}[t!]
\centering 
\scriptsize 
\setlength\tabcolsep{1.85pt}
\vspace{-1ex}
\caption{Results on LP benchmarks. X$_{rw}$ denotes weighted message-passing added to baseline X.}
\begin{tabular}{lccccccc}
\toprule
\multirow{2}{*}{\textbf{Baseline}} & \textbf{Cora} & \textbf{Citeseer} & \textbf{Pubmed} & \textbf{Collab} & \textbf{Citation2} & \textbf{Reptile} & \textbf{Vole}\\
& \textbf{Hits@100} & \textbf{Hits@100} & \textbf{Hits@100} & \textbf{Hits@50} & \textbf{MRR} & \textbf{Hits@100} & \textbf{Hits@100}\\
\midrule
GCN & 70.63$\pm$0.67 & 65.96$\pm$2.12 & 69.35$\pm$1.02 & 49.52$\pm$0.52 & 84.42$\pm$0.05 & 65.52$\pm$2.73 & 73.84$\pm$0.98\\
GCN$_{rw}$ & \underline{75.98$\pm$1.28} & \underline{74.40$\pm$1.13} & 68.87$\pm$0.99
 & \underline{52.85$\pm$0.14} & \underline{85.34$\pm$0.30} & \underline{70.79$\pm$2.00} & \underline{74.50$\pm$0.84}\\
\cmidrule(r){1-8}
SAGE & 74.27$\pm$2.08 & 61.57$\pm$3.28 & 66.25$\pm$1.08 & 50.01$\pm$0.50 & 80.44$\pm$0.10 & 72.59$\pm$3.19 & 80.55$\pm$1.59\\
SAGE$_{rw}$ & \underline{74.62$\pm$2.30} & \underline{69.89$\pm$1.66} & \underline{66.77$\pm$0.69} & \underline{52.59$\pm$0.37} & \underline{80.61$\pm$0.10} & \underline{74.35$\pm$3.20} & \underline{81.27$\pm$0.96}\\
\cmidrule(r){1-8}
NCN & 87.73$\pm$1.41 & 90.93$\pm$0.83  & 76.20$\pm$1.55  & 54.43$\pm$0.17 & 88.64$\pm$0.14 & 68.37$\pm$3.57 & 66.10$\pm$1.13 \\
NCN$_{rw}$ & \underline{87.95$\pm$1.30} & \underline{91.86$\pm$0.82} & \underline{76.51$\pm$1.41} & 54.16$\pm$0.24 & -- & \underline{73.81$\pm$4.71} & \underline{67.32$\pm$0.81}\\
\bottomrule
\end{tabular}
\label{tab-main-res}

\smallskip\scriptsize\centering Note that for Citation2, due to memory limitation, we directly take the result from the original NCN paper~\citep{wang2023neural}. 
\vspace{-3.5ex}
\end{table}

\begin{figure}[htbp!]
     \centering
     \includegraphics[width=1\textwidth]{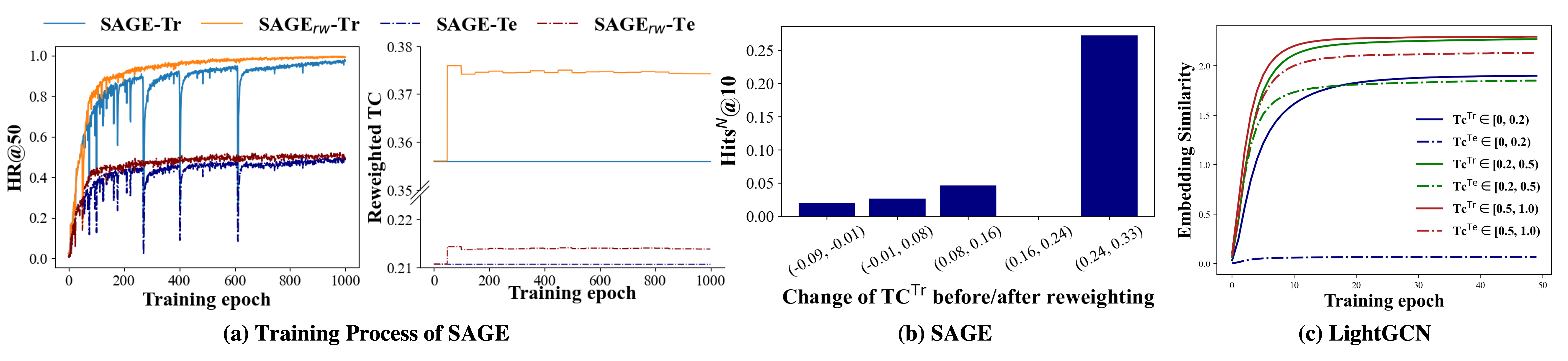}
     \vspace{-3.5ex}
     \caption{\textbf{(a)} Change of Hits@50 and Reweighted TC along the training process in SAGE. (b) Nodes with enhanced TC$^{\text{Tr}}$ exhibit a surge in performance. (c) Embedding similarity during LightGCN training increases faster and higher for nodes with higher TC$^{\mathrm{Tr}}$.} 
     \label{fig-analysis_res}
     \vspace{-2.25ex}
\end{figure}

\section{Conclusion}
\vspace{-1ex}
Although many recent works have achieved unprecedented success in enhancing link prediction (LP) performance with GNNs, demystifying the varying levels of embedding quality and LP performance across different nodes within the graph is heavily under-explored yet fundamental. In this work, we take the lead in understanding the nodes' varying performance from the perspective of their local topology. In view of the connection between link formation and the subgraph interaction, we propose Topological Concentration (TC) to characterize the node LP performance and demonstrate its superiority in leading higher correlation and identifying more low-performing nodes than other common node topological properties. Moreover, we discover a novel topological distribution shift (TDS) issue by observing the changing LP performance over time and demonstrate the capability of using TC to measure this distribution shift. Our work offers the community strong insights into which local topology enables nodes to have better LP performance with GNNs. Moving forward, we plan to investigate the causal relationship between TC and LP performance. Additionally, we aim to utilize TC for data valuation to select consistently crucial edges in dynamic link prediction.

\bibliography{iclr2024_conference}
\bibliographystyle{iclr2024_conference}

\appendix
\renewcommand \thepart{} 
\renewcommand \partname{}

\newpage
\part{Appendix} 
\parttoc

\clearpage
\section{Notations}\label{app-notations}
This section summarizes all notations used throughout this paper.

\begin{table}[htbp!]
\caption{Notations used throughout this paper.} 
\setlength{\extrarowheight}{.095pt}
\setlength\tabcolsep{2pt}
\centering
\label{tab-symbols}
\begin{tabular}{cc}
\Xhline{2\arrayrulewidth}
\textbf{Notations} & \textbf{Definitions or Descriptions}\\
\Xhline{2\arrayrulewidth}
$G = (\mathcal{V}, \mathcal{E}, \mathbf{X})$ & Graph with node set $\mathcal{V}$, edge set $\mathcal{E}$ and node feature $\mathbf{X}$\\
$m, n$ & Number of nodes $m = |\mathcal{V}|$ and number of edges $n = |\mathcal{E}|$\\
$v_{i}, e_{ij}$ & Node $v_i$ and the edge $e_{ij}$ between node $v_i$ and $v_j$\\
$\mathbf{A}$ & Adjacency matrix $\mathbf{A}_{ij} = 1$ indicates an edge $e_{ij}$ between $v_i, v_j$\\

$\widetilde{\mathbf{A}}$ & Row-based normalized graph adjacency matrix $\widetilde{\mathbf{A}} = \mathbf{D}^{-1}\mathbf{A}$\\
$\widehat{\mathbf{A}}$ & GCN-based normalized graph adjacency matrix $\widehat{\mathbf{A}} = \mathbf{D}^{-0.5}\mathbf{A}\mathbf{D}^{-0.5}$\\

$\widetilde{\mathbf{A}}^t$ & Updated adjacency matrix at iteration $t$\\

$\mathbf{D}$ & Diagonal degree matrix $\mathbf{D}_{ii} = \sum_{j = 1}^{n}\mathbf{A}_{ij}$\\
$\widehat{d}$ & Average degree of the network\\
\hline
$\mathcal{T} = \{\text{Tr}, \text{Val}, \text{Te}\}$ & Set of  Training/Validation/Testing edge groups \\
Degree$^{\text{Tr/Val/Te}}$ & Degree based on Training/Validation/Testing Edges\\
TC$^{\text{Tr/Val/Te}}$ & \makecell{Topological Concentration quantifying intersection\\ with Training/Validation/Testing neighbors}\\
\hline
$\mathcal{N}_i^t$ & Node $v_i$'s 1-hop neighbors of type $t, t \in \mathcal{T}$\\
$\mathcal{H}_i^k$ & Nodes having at least one path of length $k$ to $v_i$ based on training edges $\mathcal{E}^{\text{Tr}}$\\
$\mathcal{S}_i^K = \{\mathcal{H}_i^k\}_{k = 1}^K$ & K-hop computational tree centered on the node $v_i$\\
$C_i^{K, t}\backslash\widetilde{C}_i^{K, t}$ &  \makecell{(Approximated) Topological concentration for node $v_i$ considering \\the intersection among $K$-hop computational trees among its type $t$ neighbors.}\\

$\mathbf{E}^k_i$ & Embedding of the node $v_i$ after $k^{\text{th}}$-layer message-passing\\
$\mathbf{R}_{ij}$ & Sample from gaussian random variable $\mathcal{N}(0, 1/d)$\\

$g_{\boldsymbol{\Theta}_g}$ & Link predictor parameterized by $\boldsymbol{\Theta}_g$\\

$\widetilde{\mathcal{E}}_i, \widehat{\mathcal{E}}_i$ & Predicted and ground-truth neighbors of node $v_i$\\
$\mathcal{HG}$ & Hypergeometric distribution\\

\hline
LP & Link Prediction \\
(A)TC & (Approxminated) Topological Concentration \\
TDS & Topological Distribution Shift\\
\hline
$\beta$ & Exponential discounting effect as the hop increases\\
$\alpha_k$ & Weighted coefficient of layer $k$ in computing ATC\\
$\mu$ & Mean of the distribution\\
$L$ & Number of message-passing layers\\
$\gamma$ & Coefficients measuring the contribution of updating adjacency matrix\\
\hline
\end{tabular}
\end{table}

\clearpage
\section{Link-centric and Node-centric Evaluation Metrics}\label{app-definition}
In addition to the conventional link-centric evaluation metrics used in this work, node-centric evaluation metrics are also used to mitigate the positional bias caused by the tiny portion of the sampled negative links. We introduce their mathematical definition respectively as follows:

\subsection{Link-Centric Evaluation}
Following~\citep{hu2020open}, we rank the prediction score of each link among a set of randomly sampled negative node pairs and calculate the link-centric evaluation metric Hits$@K$ as the ratio of positive edges that are ranked at $K^{\text{th}}$-place or above. Note that this evaluation may cause bias as the sampled negative links only count a tiny portion of the quadratic node pairs~\citep{li2023evaluating}. Hereafter, we introduce the node-centric evaluation metrics and specifically denote the node-level Hit ratio as Hits$^{N}@K$ to differentiate it from the link-centric evaluation metric Hits$@K$.

\subsection{Node-Centric Evaluation}
For each node $v_i \in \mathcal{V}$, the model predicts the link formation score between $v_i$ and \textit{every other node}, and selects the top-$K$ nodes to form the potential candidates $\widetilde{\mathcal{E}}_i$. Since the ground-truth candidates for node $v_i$ is $\mathcal{N}_i^{\text{Te}}$ (hereafter, we notate as $\widehat{\mathcal{E}}_i$), we can compute the Recall(R), Precision(P), F1, NDCG(N), MRR and Hits$^N$ of $v_i$ as follows:

\begin{equation}
    \text{R}@K_i = \frac{|\widetilde{\mathcal{E}}_i \cap \widehat{\mathcal{E}}_i|}{|\widehat{\mathcal{E}}_i|}, \quad\quad \text{P}@K_i = \frac{|\widetilde{\mathcal{E}}_i \cap \widehat{\mathcal{E}}_i|}{K}
\end{equation}
\begin{equation}
    \text{F1}@K_i = \frac{2|\widetilde{\mathcal{E}}_i\cap \widehat{\mathcal{E}}_i|}{K + |\widehat{\mathcal{E}}_i|}, \quad\quad \text{N@}K_i = \frac{\sum_{k = 1}^{K}\frac{\mathds{1}[v_{\phi_i^k}\in (\widetilde{\mathcal{E}}_i\cap \widehat{\mathcal{E}}_i)]}{\log_{2}(k + 1)}}{\sum_{k = 1}^{K}{\frac{1}{\log_2(k + 1)}}}
\end{equation}
\begin{equation}
    \text{MRR}@K_i = \frac{1}{\min_{v\in (\widetilde{\mathcal{E}}_i\cap \widehat{\mathcal{E}}_i)}\text{Rank}_{v}},
    \quad\quad
    \text{Hits}^{N}@K_i = \mathds{1}{[|\widehat{\mathcal{E}}_i\cap \widetilde{\mathcal{E}}_i| > 0]},
\end{equation}
where $\phi_i^k$ denotes $v_i$'s $k^{\text{th}}$ preferred node according to the ranking of the link prediction score, $\text{Rank}_v$ is the ranking of the node $v$ and $\mathds{1}$ is the indicator function equating 0 if the intersection between $\widehat{\mathcal{E}}^{i}\cap \widetilde{\mathcal{E}}_i$ is empty otherwise 1. The final performance of each dataset is averaged across each node:
\begin{equation}
    \mathrm{X}@K = \mathbb{E}_{v_i\in\mathcal{V}}X@K_i, \mathrm{X}\in\{\text{R}, \text{P}, \text{F1}, \text{N}, \text{MRR}, \text{Hits}^{N}\}
\end{equation}

Because for each node, the predicted neighbors will be compared against all the other nodes, there is no evaluation bias compared with the link-centric evaluation, where only a set of randomly selected negative node pairs are used.

\clearpage
\section{Proof of Theorems}\label{app-theorem}
\subsection{Approximation power of ATC for TC}
\begin{repthm}{thm-eq}
Assuming $g(|\mathcal{H}_i^{k_1}|, |\mathcal{H}_j^{k_2}|) = |\mathcal{H}_i^{k_1}||\mathcal{H}_j^{k_2}|$ in \eqref{eq-TC} and let $\phi$ be the dot-product based similarity metric~\citep{lightgcn}, then node $v_i$'s 1-layer Topological Concentration $C_i^{1, t}$ is linear correlated with the mean value of the 1-layer Approximated Topological Concentration $\mu_{\widetilde{C}_i^{K, t}}$ as:
\begin{equation}
    C_{i}^{1, t} \approx d^{-1}\mu_{\mathbb{E}_{v_j \sim \mathcal{N}_i^{t}}(\mathbf{E}^{1}_{j})^{\top}\mathbf{E}^{1}_{i}} = d^{-1}\mu_{\widetilde{C}_i^{1, t}},
\end{equation}
where $\mathbf{E}^1 \in \mathbb{R}^{n \times d}$ denotes the node embeddings after 1-layer SAGE-style message-passing over the node embeddings $\mathbf{R}\sim\mathcal{N}(\mathbf{0}^d, \boldsymbol{\Sigma}^d)$ and $\chi$ is the approximation error.
\end{repthm}

\begin{proof}
Assuming without loss of generalizability that the row-normalized adjacency matrix $\widetilde{\mathbf{A}} = \mathbf{D}^{-1}\mathbf{A}$ is used in aggregating neighborhood embeddings. We focus on a randomly selected node $\mathbf{E}_i \in \mathbb{R}^{d}, \forall v_i\in\mathcal{V}$ and its 1-layer ATC given by \eqref{eq-ATC} is:

\begin{equation}\label{eq-derive}
\begin{split}
\widetilde{C}_i^{1, t} = \mathbb{E}_{v_j \sim \mathcal{N}_i^{t}}(\mathbf{E}^{1}_j)^{\top}\mathbf{E}^{1}_i & = \mathbb{E}_{v_j \sim \mathcal{N}_i^{t}}(\widetilde{\mathbf{A}}\mathbf{R})_j^{\top}(\widetilde{\mathbf{A}}\mathbf{R})_i \\
&= \mathbb{E}_{v_j \sim \mathcal{N}_i^{t}}\frac{1}{|\mathcal{N}_j^{\text{tr}}||\mathcal{N}_i^{\text{tr}}|}(\sum_{v_m\in\mathcal{N}^{\text{tr}}_j}{\mathbf{R}_m})^{\top}(\sum_{v_n\in\mathcal{N}^{\text{tr}}_i}{\mathbf{R}_n}) \\
&= \mathbb{E}_{v_j \sim \mathcal{N}_i^{t}}\frac{1}{|\mathcal{N}_j^{\text{tr}}||\mathcal{N}_i^{\text{tr}}|}\sum_{(v_m, v_n)\in\mathcal{N}_j^{\text{tr}}\times \mathcal{N}_i^{\text{tr}}}(\mathbf{R}_m)^{\top}\mathbf{R}_n \\
& = \mathbb{E}_{v_j \sim \mathcal{N}_i^t}\frac{1}{|\mathcal{H}_i^1||\mathcal{H}_j^1|}(\underbrace{\sum_{\substack{(v_m, v_n) \in \mathcal{N}_j^{\text{tr}}\times \mathcal{N}_i^{\text{tr}}, \\ v_m \ne v_n}}({\mathbf{R}_m)^{\top}\mathbf{R}_n}}_{\text{Non-common neighbor embedding pairs}} +  \underbrace{\sum_{v_k \in \mathcal{N}_j^{\text{tr}}\cap \mathcal{N}_i^{\text{tr}}}{(\mathbf{R}_k)^{\top}\mathbf{R}_k}}_{\text{Common neighbor embedding pairs}}),
\end{split}
\end{equation}
Note that the first term is the dot product between any pair of two non-common neighbor embeddings, which is essentially the dot product between two independent samples from the same multivariate Gaussian distribution (\textit{note that here we do not perform any training optimization, so the embeddings of different nodes are completely independent}) and by central limit theorem~\citep{kwak2017central} approaches the standard Gaussian distribution with 0 as the mean, $i.e., \mu_{(\mathbf{R}_m)^{\top}\mathbf{R}_n} = 0$. In contrast, the second term is the dot product between any Gaussian-distributed sample and itself, which can be essentially characterized as the sum of squares of $d$ independent standard normal random variables and hence follows the chi-squared distribution with $d$ degrees of freedom, i.e., $(\mathbf{R}_k)^{\top}\mathbf{R}_k\sim \chi^2_{d}$~\citep{sanders2009characteristic}. By Central Limit Theorem, $\lim_{d\rightarrow \infty}P(\frac{\chi_{d}^2 - d}{\sqrt{2d}} \le z) = P{_{\mathcal{N}(0, 1)}}(z)$ and hence $\lim_{d\rightarrow \infty}\chi_{d}^2 = \mathcal{N}(d, 2d), i.e., \mu_{(\mathbf{R}_k)^{\top}\mathbf{R}_k} = d$. Then we obtain the mean value of $\mathbb{E}_{v_j \sim \mathcal{N}_i^{t}}(\mathbf{E}_j^{1})^{\top}\mathbf{E}_i^1$:
\begin{equation}\label{eq-proof}
\begin{split}
    \mu_{\widetilde{C}_i^{1, t}} = \mu_{\mathbb{E}_{v_j \sim \mathcal{N}_i^{t}}(\mathbf{E}^{1}_j)^{\top}\mathbf{E}^{1}_i} &\approx \mathbb{E}_{v_j \sim \mathcal{N}_i^t}\frac{1}{|\mathcal{H}_i^1||\mathcal{H}_j^1|}(\mu_{\sum_{\substack{(v_m, v_n) \in \mathcal{N}_j^{\text{tr}}\times \mathcal{N}_i^{\text{tr}}, \\ v_m \ne v_n}}{(\mathbf{R}_m)^{\top}\mathbf{R}_n}} +  \mu_{\sum_{v_k \in \mathcal{N}_j^{\text{tr}}\cap \mathcal{N}_i^{\text{tr}}}{(\mathbf{R}_k)^{\top}\mathbf{R}_k}}) \\
    & \approx \mathbb{E}_{v_j \in \mathcal{N}^{t}_i}\frac{d |\mathcal{N}_i^{\text{tr}}\cap \mathcal{N}_j^{\text{tr}}|}{|\mathcal{H}_i^1||\mathcal{H}_j^1|} = \mathbb{E}_{v_j \in \mathcal{N}^{t}_i}\frac{d |\mathcal{H}_i^1\cap \mathcal{H}_j^1|}{|\mathcal{H}_i^1||\mathcal{H}_j^1|} = d C_i^{1, t}.
\end{split}
\end{equation}
The second approximation holds since we set $d$ to be at least $64$ for all experiments in this paper. We next perform Monte-Carlo Simulation to verify that by setting $d=64$, the obtained distribution is very similar to the Gaussian distribution. Assuming without loss of generality that the embedding dimension is 64 with the mean vector $\boldsymbol{\mu} = \mathbf{0}^{64} \in \mathbb{R}^{64}$ and the identity covariance matrix $\boldsymbol{\Sigma}^{64} = \mathbf{I} \in \mathbb{R}^{64\times 64}$, we randomly sample 1000 embeddings from $\mathcal{N}(\boldsymbol{\mu}, \boldsymbol{\Sigma})$. 

We visualize the distributions of the inner product between the pair of non-common neighbor embeddings, i.e., the first term in \eqref{eq-derive} $(\mathbf{R}_m)^{\top}\mathbf{R}_n, v_m \ne v_n$, and the pair of common neighbor embeddings, i.e., the second term in \eqref{eq-derive} $(\mathbf{R}_k)^{\top}\mathbf{R}_k, v_k \in \mathcal{N}_j^{\text{tr}}\cap \mathcal{N}_i^{\text{tr}}$ in Figure~\ref{fig-dot-product}. We can see that the distribution of the dot product between the pair of non-common neighbor embeddings behaves like a Gaussian distribution centering around 0. In contrast, the distribution of the dot product between the pair of common neighbor embeddings behaves like a chi-square distribution of degree 64, which also centers around 64, and this in turn verifies the Gaussian approximation. Note that the correctness of the first approximation in \eqref{eq-proof} relies on the assumption that the average of the inverse of the node's neighbors should be the same across all nodes. Although it cannot be theoretically satisfied, we still empirically verify the positive correlation between TC and the link prediction performance shown in Figure~\ref{fig-analysis1}.

\textit{The above derivation bridges the gap between the Topological Concentration (TC) defined in the topological space and the Approximated Topological Concentration (ATC) defined in the latent space, which theoretically justifies the approximation efficacy of ATC.}
\end{proof}

\begin{figure}[htbp!]
     \centering
     \includegraphics[width=0.5\textwidth]{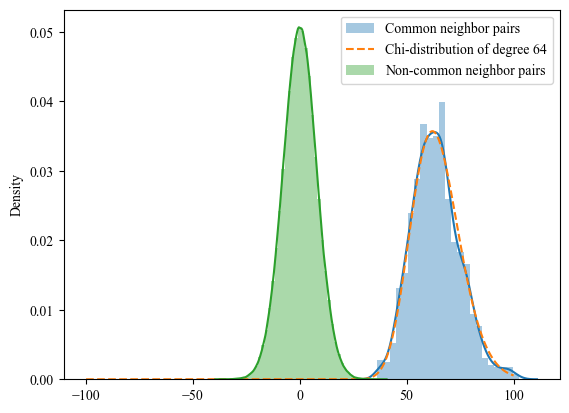}
     \caption{The distribution of the inner product between common neighbor pairs is statistically higher than that between non-common neighbor pairs.}
     \label{fig-dot-product}
\end{figure}

\subsection{Degree-related Bias of Evaluation Metrics}\label{app-thm-eval-bias}
One previous work~\citep{wang2022degree} has empirically shown the degree-related bias of evaluation metrics used in link prediction models. Following that, we go one step further and theoretically derive the concrete format of the evaluation bias in this section. We leverage an untrained link prediction model to study the bias. This avoids any potential supervision signal from training over observed links and enables us to study the evaluation bias exclusively. Since two nodes with the same degree may end up with different performances, i.e., $\mathrm{X}@K_i \ne \mathrm{X}@K_j, d_i = d_j$, we model $\mathrm{X}@K|d$ as a random variable and expect to find the relationship between its expectation and the node degree $d$, i.e., $f: E(\mathrm{X}@K|d) = f(d)$.

Following many existing ranking works~\citep{lightgcn, chen2021structured}, we assume without loss of generalizability that the link predictor $\mathscr{P}$ ranking the predicted neighbors based on their embedding similarity with embeddings noted as $\mathbf{E}$, then we have: 
\begin{lemma}\label{lemma-equalprob}
    For any untrained embedding-based link predictor $\mathscr{P}$, given the existing $k-1$ predicted neighbors for the node $v_i \in \mathcal{V}$, the $k^{\text{th}}$ predicted neighbor is generated by randomly selecting a node without replacement from the remaining nodes with equal opportunities, i.e., $P(v_{\phi_i^k} = v|\{v_{\phi_i^1}, v_{\phi_i^2}, ..., v_{\phi_i^{k - 1}}\}) = \frac{1}{N - (k - 1)}$.
\end{lemma}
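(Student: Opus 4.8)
The plan is to reduce the claim to the textbook fact that a family of i.i.d.\ continuous random variables induces a uniformly random ranking. First I would pin down what ``untrained embedding-based link predictor'' means in this context: the node embeddings $\mathbf{E}_1,\dots,\mathbf{E}_n$ are drawn i.i.d.\ from the initialization distribution and, because no gradient step has been taken, carry no information about $G$; the predictor $\mathscr{P}$ then scores each candidate $v\neq v_i$ by $s_i(v)=\phi(\mathbf{E}_i,\mathbf{E}_v)$ for its similarity function $\phi$, and $v_{\phi_i^1},\dots,v_{\phi_i^N}$ are the $N$ candidates listed in decreasing order of $s_i(\cdot)$. The whole statement is about the law of this sorted order, so it suffices to show that the induced permutation of the candidate set is uniform over all $N!$ orderings.

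The core step is to condition on $\mathbf{E}_i$. Given $\mathbf{E}_i$, the scores $\{s_i(v)\}_{v\neq v_i}$ are i.i.d., since they are the single deterministic map $\phi(\mathbf{E}_i,\cdot)$ applied to the i.i.d.\ family $\{\mathbf{E}_v\}_{v\neq v_i}$; and for each fixed $\mathbf{E}_i$ the law of $\phi(\mathbf{E}_i,\mathbf{E}_v)$ is atomless (this is where a mild regularity hypothesis on the initialization and on $\phi$ enters — e.g.\ absolute continuity of the embeddings and non-degeneracy of $\phi$). Hence, almost surely, the $N$ scores are pairwise distinct, the sort defines a genuine permutation $\pi$ of the candidates, and by exchangeability of i.i.d.\ variables the rank permutation $\pi$ is uniform over the symmetric group on the candidate set. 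Because this conditional law of $\pi$ does not depend on $\mathbf{E}_i$, it is also the unconditional law of $\pi$.

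Finally I would read the lemma off from uniformity of $\pi$: for a uniform permutation, the first $k-1$ entries form an ordered sample drawn without replacement, and conditioned on those entries the $k$-th entry is uniform on the $N-(k-1)$ as-yet-unselected candidates, which is exactly $P(v_{\phi_i^k}=v\mid\{v_{\phi_i^1},\dots,v_{\phi_i^{k-1}}\})=\frac{1}{N-(k-1)}$ for every $v$ not among the first $k-1$; one can equivalently phrase this as a one-line induction on $k$. The only genuine obstacle is the exchangeability/continuity step — one must be explicit that ``untrained'' really forces the embeddings to be exchangeable across nodes and independent of the graph (so no node is a priori preferred), and that the similarity $\phi$ in use (dot product, cosine, etc.) does not manufacture deterministic ties or atoms in the score distribution; granting those modeling assumptions, the rest is the standard ``random scores give a uniform ranking'' argument.
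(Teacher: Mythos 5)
Your proof is correct, but it takes a genuinely different route from the paper's. The paper disposes of this lemma in a single sentence --- ``Without any training, Lemma~\ref{lemma-equalprob} trivially holds since embeddings of all nodes are the same'' --- i.e., it models the untrained predictor as assigning \emph{identical} embeddings to every node, so that all similarity scores are tied and the ranking is implicitly resolved by uniformly random tie-breaking. You instead model ``untrained'' as i.i.d.\ randomly initialized embeddings, condition on $\mathbf{E}_i$ so that the candidate scores $\phi(\mathbf{E}_i,\mathbf{E}_v)$ become i.i.d.\ and atomless, invoke exchangeability to conclude the induced ranking is a uniform permutation of the candidate set, and then read off the conditional uniformity of the $k$-th pick. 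Your version is the more defensible formalization: it covers the practically relevant case of random initialization (identical untrained embeddings is not what happens with a randomly initialized embedding layer, nor with an untrained GNN applied to distinct node features), it makes the no-ties condition explicit, and it exposes the assumption the paper leaves silent --- under the all-tied model, the conclusion holds only if ties are broken uniformly at random, which is essentially the content of the lemma rather than a proof of it. The price is your extra regularity hypotheses on the initialization law and on $\phi$ (non-degeneracy, absolute continuity), which the paper's one-liner avoids by fiat. Either formalization delivers the uniform-without-replacement sampling needed for the hypergeometric distribution in Theorem~\ref{thm-geomdist}, so the downstream results are unaffected by the choice.
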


Without any training, Lemma~\ref{lemma-equalprob} trivially holds since embeddings of all nodes are the same, which trivially leads to the following theorem:

\begin{thm}\label{thm-geomdist}
    Given the untrained embedding-based link predictor $\mathscr{P}$, the size of the intersection between any node's predicted list $\widetilde{\mathcal{E}}_i$ and its ground-truth list $\widehat{\mathcal{E}}_i$ follows a hypergeometric distribution: $|\widetilde{\mathcal{E}}_i \cap \widehat{\mathcal{E}}_i|\sim \mathcal{HG}(|\mathcal{V}|, K, |\widehat{\mathcal{E}}_i|)$ where $|\mathcal{V}|$ is the population size (the whole node space), $K$ is the number of trials and $|\widehat{\mathcal{E}}_i|$ is the number of successful states (the number of node's ground-truth neighbors).
\end{thm}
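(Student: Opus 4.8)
The plan is to derive the distribution of $|\widetilde{\mathcal{E}}_i \cap \widehat{\mathcal{E}}_i|$ directly from the sampling description in Lemma~\ref{lemma-equalprob}, recognizing it as the classical urn model that defines the hypergeometric distribution. First I would fix a node $v_i$ and think of the node population $\mathcal{V}$ as an urn with $|\mathcal{V}|$ balls, of which exactly $|\widehat{\mathcal{E}}_i|$ are ``successes'' (the ground-truth testing neighbors of $v_i$) and the remaining $|\mathcal{V}| - |\widehat{\mathcal{E}}_i|$ are ``failures.'' By Lemma~\ref{lemma-equalprob}, the predicted list $\widetilde{\mathcal{E}}_i$ of size $K$ is built by drawing nodes one at a time, each time uniformly at random from the nodes not yet drawn; this is precisely sampling $K$ balls without replacement from the urn. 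The number of successes among the $K$ drawn balls is $|\widetilde{\mathcal{E}}_i \cap \widehat{\mathcal{E}}_i|$.

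Next I would make the exchangeability argument explicit: because at each step the next node is chosen uniformly from the remaining pool, every $K$-subset of $\mathcal{V}$ is equally likely to be the (unordered) predicted set $\widetilde{\mathcal{E}}_i$, each with probability $1/\binom{|\mathcal{V}|}{K}$. Counting the $K$-subsets that contain exactly $r$ of the $|\widehat{\mathcal{E}}_i|$ ground-truth neighbors gives $\binom{|\widehat{\mathcal{E}}_i|}{r}\binom{|\mathcal{V}| - |\widehat{\mathcal{E}}_i|}{K - r}$, so
\begin{equation}
P\big(|\widetilde{\mathcal{E}}_i \cap \widehat{\mathcal{E}}_i| = r\big) = \frac{\binom{|\widehat{\mathcal{E}}_i|}{r}\binom{|\mathcal{V}| - |\widehat{\mathcal{E}}_i|}{K - r}}{\binom{|\mathcal{V}|}{K}},
\end{equation}
for $\max(0, K - (|\mathcal{V}| - |\widehat{\mathcal{E}}_i|)) \le r \le \min(K, |\widehat{\mathcal{E}}_i|)$, which is exactly the p.m.f. of $\mathcal{HG}(|\mathcal{V}|, K, |\widehat{\mathcal{E}}_i|)$ with population size $|\mathcal{V}|$, number of draws $K$, and number of success states $|\widehat{\mathcal{E}}_i|$. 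To be fully rigorous I would either prove the ``all $K$-subsets equally likely'' claim by a short induction on $k$ using the conditional probabilities in Lemma~\ref{lemma-equalprob} (the probability of drawing any particular ordered $K$-tuple is $\prod_{j=0}^{K-1}\frac{1}{|\mathcal{V}| - j}$, independent of the tuple, and each unordered set corresponds to $K!$ orderings), or alternatively compute $P(|\widetilde{\mathcal{E}}_i \cap \widehat{\mathcal{E}}_i| = r)$ by summing over which of the $K$ draws land in $\widehat{\mathcal{E}}_i$ and telescoping the resulting products of conditional probabilities.

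I do not expect a genuine obstacle here — the statement is essentially the definition of the hypergeometric distribution once Lemma~\ref{lemma-equalprob} is in hand. The only point requiring a little care is making sure the sampling ``without replacement from the remaining nodes with equal opportunities'' is correctly matched to the urn model, i.e., that the ground-truth neighbors $\widehat{\mathcal{E}}_i = \mathcal{N}_i^{\text{Te}}$ are treated as a fixed deterministic subset of the population while the randomness lies entirely in the predictor's draws, and that one does not accidentally exclude $v_i$ itself or already-known training neighbors from the population in a way inconsistent with how $\widetilde{\mathcal{E}}_i$ was defined. I would state that convention explicitly at the start of the proof and then the combinatorial identification is immediate.
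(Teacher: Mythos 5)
Your proposal is correct and follows essentially the same route as the paper: both identify the predictor's output, via Lemma~\ref{lemma-equalprob}, with sampling $K$ nodes without replacement from the population $\mathcal{V}$ containing $|\widehat{\mathcal{E}}_i|$ successes, which is the defining urn model of $\mathcal{HG}(|\mathcal{V}|, K, |\widehat{\mathcal{E}}_i|)$. Your version merely spells out the exchangeability argument and the explicit p.m.f. that the paper's proof asserts directly.
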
\label{thm-geomdist}
\begin{proof}
    Given the ground-truth node neighbors $\widehat{\mathcal{E}}_i$, the predicted neighbors $\widetilde{\mathcal{E}}_i = \{v_{\phi_i^k}\}_{k=1}^K$ is formed by selecting one node at a time without replacement $K$ times from the whole node space $\mathcal{V}$. Since any selected node $v_{\phi_i^k}$ can be classified into one of two mutually exclusive categories $\widehat{\mathcal{E}}_i$ or $\mathcal{V} \backslash \widehat{\mathcal{E}}_i$ and by Lemma~\ref{lemma-equalprob}, we know that for any untrained link predictor, each unselected node has an equal opportunity to be selected in every new trial, we conclude that $|\widetilde{\mathcal{E}}_i \cap \widehat{\mathcal{E}}_i|\sim \mathcal{HG}(|\mathcal{V}|, K, |\widehat{\mathcal{E}}_i|)$ and by default $E(|\widetilde{\mathcal{E}}_i \cap \widehat{\mathcal{E}}_i|) = |\widetilde{\mathcal{E}}_i|\frac{|\widehat{\mathcal{E}}_i|}{|\mathcal{V}|} = K\frac{|\widehat{\mathcal{E}}_i|}{|\mathcal{V}|}$.
\end{proof}

Furthermore, we present Theorem~\ref{thm-eval} to state the relationships between the LP performance under each evaluation metric and the node degree: 

\begin{thm}\label{thm-eval}
Given that $|\widetilde{\mathcal{E}}_i\cap \widehat{\mathcal{E}}_i|$ follows hyper-geometric distribution, we have:
\begin{equation}\label{eq-recallr}
    E(\mathrm{R}@K_i|d) = \frac{K}{N}, \indent \frac{\partial E(\mathrm{R}@K|d)}{\partial d} = 0,
\end{equation}

\begin{equation}\label{eq-precisionr}
    E(\mathrm{P}@K|d_i) = \frac{\alpha d}{N}, \indent \frac{\partial E(\mathrm{P}@K|d)}{\partial d} = \frac{\alpha}{N},
\end{equation}

\begin{equation}\label{eq-f1r}
    E(\mathrm{F1}@K|d) = \frac{2K}{N}\frac{\alpha d}{K + \alpha d}, \indent \frac{\partial E(\mathrm{F1}@K|d)}{\partial d} = \frac{2\alpha K^2}{N}\frac{1}{(K + \alpha d)^2},
\end{equation}

\begin{equation}\label{eq-ndcgr}
\begin{split}
    & E(\mathrm{N}@K|d) = \frac{\alpha d}{N}, \indent \frac{\partial E(\mathrm{N}@K|d)}{\partial d} = \frac{\alpha}{N}.
\end{split}
\end{equation}
\end{thm}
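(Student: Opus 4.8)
The plan is to start from Theorem~\ref{thm-geomdist}, which tells us that $|\widetilde{\mathcal{E}}_i \cap \widehat{\mathcal{E}}_i| \sim \mathcal{HG}(N, K, |\widehat{\mathcal{E}}_i|)$ with mean $K |\widehat{\mathcal{E}}_i| / N$, and to push each node-centric metric through the linearity of expectation. For the Recall bound \eqref{eq-recallr}, I would write $E(\mathrm{R}@K_i \mid d) = E(|\widetilde{\mathcal{E}}_i \cap \widehat{\mathcal{E}}_i|)/|\widehat{\mathcal{E}}_i| = K/N$ directly from the hypergeometric mean; since this is independent of $d$, the derivative vanishes. For Precision \eqref{eq-precisionr}, I would use $E(\mathrm{P}@K \mid d_i) = E(|\widetilde{\mathcal{E}}_i \cap \widehat{\mathcal{E}}_i|)/K = |\widehat{\mathcal{E}}_i|/N$, and then invoke the modeling assumption that the expected number of ground-truth (testing) neighbors scales linearly with the training degree, $E(|\widehat{\mathcal{E}}_i| \mid d) = \alpha d$ for some proportionality constant $\alpha$ (this is presumably stated or justified near the theorem, reflecting that a node's testing-edge count is proportional to its training-edge count). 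Substituting gives $E(\mathrm{P}@K \mid d) = \alpha d / N$ and differentiating gives $\alpha/N$.

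For F1 \eqref{eq-f1r} the subtlety is that $\mathrm{F1}@K_i = 2|\widetilde{\mathcal{E}}_i \cap \widehat{\mathcal{E}}_i|/(K + |\widehat{\mathcal{E}}_i|)$ is a ratio whose denominator is also a function of $|\widehat{\mathcal{E}}_i|$; here I would treat $|\widehat{\mathcal{E}}_i|$ as (approximately) deterministic given $d$, i.e. replace $|\widehat{\mathcal{E}}_i|$ by its conditional mean $\alpha d$, so that $E(\mathrm{F1}@K \mid d) \approx \frac{2}{K + \alpha d} E(|\widetilde{\mathcal{E}}_i \cap \widehat{\mathcal{E}}_i| \mid d) = \frac{2}{K+\alpha d}\cdot\frac{K\alpha d}{N} = \frac{2K}{N}\cdot\frac{\alpha d}{K + \alpha d}$. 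The derivative $\frac{\partial}{\partial d}\frac{\alpha d}{K+\alpha d} = \frac{\alpha K}{(K+\alpha d)^2}$ then yields $\frac{2\alpha K^2}{N(K+\alpha d)^2}$ as claimed. For NDCG \eqref{eq-ndcgr}, I would use Lemma~\ref{lemma-equalprob}: under the untrained predictor, each position $k$ in the ranked list is a uniformly random unselected node, so $P(v_{\phi_i^k} \in \widehat{\mathcal{E}}_i) = |\widehat{\mathcal{E}}_i|/N$ for every $k$ (by symmetry/exchangeability of sampling without replacement). Hence $E(\mathrm{N}@K \mid d) = \frac{\sum_{k=1}^K \frac{1}{\log_2(k+1)} P(v_{\phi_i^k}\in\widehat{\mathcal{E}}_i)}{\sum_{k=1}^K \frac{1}{\log_2(k+1)}} = \frac{|\widehat{\mathcal{E}}_i|}{N} = \frac{\alpha d}{N}$, because the $\log$-discount weights cancel between numerator and denominator; differentiating gives $\alpha/N$.

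The main obstacle I anticipate is the F1 case: strictly, $E$ of a ratio is not the ratio of expectations, so the identity in \eqref{eq-f1r} is only exact if one either conditions on $|\widehat{\mathcal{E}}_i| = \alpha d$ exactly (taking the ``effective degree'' view) or argues the fluctuations of $|\widehat{\mathcal{E}}_i|$ around $\alpha d$ are negligible. I would make this precise by stating upfront that the analysis conditions on the node's ground-truth degree $|\widehat{\mathcal{E}}_i|$ and sets $|\widehat{\mathcal{E}}_i| = \alpha d$ as the modeling link between testing and training degree; then every step above is an exact consequence of the hypergeometric mean and Lemma~\ref{lemma-equalprob}, and the only remaining content is the elementary calculus of differentiating $\alpha d/N$, $K/N$, and $\frac{2K}{N}\frac{\alpha d}{K+\alpha d}$ with respect to $d$. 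The exchangeability argument for NDCG (that position $k$ is equally likely to be a hit for all $k$) is the one other place worth spelling out carefully, but it follows immediately from Lemma~\ref{lemma-equalprob} by induction on $k$.
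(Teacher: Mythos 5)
Your proposal is correct, and for Recall, Precision, and F1 it follows essentially the same route as the paper: apply the hypergeometric mean $E(|\widetilde{\mathcal{E}}_i\cap\widehat{\mathcal{E}}_i|) = K|\widehat{\mathcal{E}}_i|/N$ from Theorem~\ref{thm-geomdist}, identify $|\widehat{\mathcal{E}}_i|$ with $\alpha d$, and differentiate. You are more careful than the paper in two places: you explicitly name the modeling assumption $|\widehat{\mathcal{E}}_i|=\alpha d$ linking testing degree to training degree (the paper uses it silently), and you flag that the F1 step is only exact once one conditions on $|\widehat{\mathcal{E}}_i|$ being deterministic given $d$, since otherwise the expectation of the ratio is not the ratio of expectations; the paper performs exactly this substitution without comment. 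The genuine divergence is in the NDCG case. The paper conditions on the intersection size $i$, sums over all $\binom{K}{i}$ equally likely placements of the $i$ hits into the $K$ slots, and uses the count $\binom{K-1}{i-1}$ of placements occupying a fixed slot to collapse the double sum back to $\frac{1}{K}E(|\widetilde{\mathcal{E}}_i\cap\widehat{\mathcal{E}}_i|)=\alpha d/N$. Your argument reaches the same answer by exchangeability of sampling without replacement: the marginal law of the node in position $k$ is uniform over $\mathcal{V}$ for every $k$, so $P(v_{\phi_i^k}\in\widehat{\mathcal{E}}_i)=|\widehat{\mathcal{E}}_i|/N$ independently of $k$, and the logarithmic discount weights cancel between numerator and denominator by linearity of expectation. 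This is a strictly shorter and more transparent derivation; the paper's combinatorial sum is, in effect, a hand-computed verification of the same exchangeability fact (its inner ratio $\binom{K-1}{i-1}/\binom{K}{i}=i/K$ is precisely the conditional hit probability of a fixed slot given $i$ hits). Both are valid; yours generalizes more readily to any position-weighted metric.
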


\begin{proof}
    \begin{equation}
    E(\mathrm{R}@K_i|d) = E(\frac{|\widetilde{\mathcal{E}}_i\cap \widehat{\mathcal{E}}_i|}{|\widehat{\mathcal{E}}_i|}) = \frac{E(|\widetilde{\mathcal{E}}_i\cap \widehat{\mathcal{E}}_i|)}{|\widehat{\mathcal{E}}_i|} = \frac{\frac{|\widehat{\mathcal{E}}_i|}{|\mathcal{V}|}K}{|\widehat{\mathcal{E}}_i|} = \frac{K}{N}
\end{equation}

\begin{equation}
    E(\mathrm{P}@K_i|d) = E(\frac{|\widetilde{\mathcal{E}}_i\cap \widehat{\mathcal{E}}_i|}{K}) = \frac{E(|\widetilde{\mathcal{E}}_i\cap \widehat{\mathcal{E}}_i|)}{K} = \frac{\frac{|\widehat{\mathcal{E}}_i|}{|\mathcal{V}|}K}{K}=\frac{\alpha d}{N}
\end{equation}

\begin{equation}
\begin{split}
    E(\mathrm{F1}@K_i|d) &= E(\frac{2|\widetilde{\mathcal{E}}_i\cap \widehat{\mathcal{E}}_i|}{K + |\widehat{\mathcal{E}}_i|}) = \frac{2E(|\widetilde{\mathcal{E}}_i\cap \widehat{\mathcal{E}}_i|)}{K+\alpha d} = \frac{2K}{N}\frac{\alpha d}{K + \alpha d}
\end{split}
\end{equation}

\begin{equation}\label{eq-ndcg1}
    E(\mathrm{N}@K_i|d) = E(\frac{\sum_{k = 1}^{K}\frac{{\mathds{1}[v_{\phi^k}\in (\widetilde{\mathcal{E}}_i}\cap \widehat{\mathcal{E}}_i)]}{\log_{2}(k + 1)}}{\sum_{k = 1}^{K}{\log_2(k + 1)}}) = \frac{E(\sum_{k = 1}^{K}\frac{\mathds{1}[v_{\phi^k}\in (\widetilde{\mathcal{E}}_i\cap \widehat{\mathcal{E}}_i)]}{\log_2(k + 1)})}{\sum_{k = 1}^{K}\frac{1}{\log_2(k + 1)}}
\end{equation}

To calculate the numerator DCG, i.e., $E(\sum_{k = 1}^{K}\frac{\mathds{1}[v_{\phi^k}\in (\widetilde{\mathcal{E}}_i\cap \widehat{\mathcal{E}}_i)]}{\log_2(k + 1)})$ in~\eqref{eq-ndcg1}, we model the link prediction procedure as 1) randomly select $K$ nodes from the whole node space $\mathcal{V}$; 2) calculate $|\widetilde{\mathcal{E}}_i \cap \widehat{\mathcal{E}}_i|$, i.e., how many nodes among the selected nodes $\widetilde{\mathcal{E}}_i$ are in the ground-truth neighborhood list $\widehat{\mathcal{E}}_i$; 3) randomly select $|\widetilde{\mathcal{E}}_i \cap \widehat{\mathcal{E}}_i|$ slots to position nodes in $\widetilde{\mathcal{E}}_i \cap \widehat{\mathcal{E}}_i$ and calculate DCG. The above steps can be mathematically formulated as:
\begin{equation}\label{eq-ndcg2}
\sum_{i = 0}^{K}\frac{C(N - \alpha d, K - i)C(\alpha d, i)}{C(N, K)}\sum_{j=1}^{C(K, i)}p(\mathbf{O}^{(K, i)}_j)\sum_{k = 1}^{K}\frac{\mathds{1}[\mathbf{O}^{(K, i)}_{jk} = 1]}{\log_2(k + 1)},
\end{equation}

where $\mathbf{O}^{(K, i)}\in \{0, 1\}^{C(K, i)\times K}$ represents all $C(K, i)$ possible positional indices of putting $i$ nodes into $K$ candidate slots. Specifically $\mathbf{O}_j^{(K, i)}\in\{0, 1\}^K$ indicates the $j^{\text{th}}$ positional configuration of $i$ nodes where $\mathbf{O}_{jk}^{(K, i)} = 1$ if an node is positioned at $k^{\text{th}}$ slot and $\mathbf{O}_{jk}^{(K, i)} = 0$ otherwise. Since our link predictor has no bias in positioning nodes in the K slots by Lemma~\ref{lemma-equalprob}, we have $p(\mathbf{O}_j^{(K, i)}) = \frac{1}{C(K, i)}$ and~\eqref{eq-ndcg2} can be transformed as:
\begin{equation}\label{eq-ndcg3}
\sum_{i = 0}^{K}\frac{C(N - \alpha d, K - i)C(\alpha d, i)}{C(N, K)}\frac{1}{C(K, i)}\sum_{j=1}^{C(K, i)}\sum_{k = 1}^{K}\frac{\mathds{1}[\mathbf{O}^{(K, i)}_{jk} = 1]}{\log_2(k + 1)}.
\end{equation}

We know that only when the $k^{\text{th}}$ slot is positioned a node can we have $\mathbf{O}_{jk}^{(K, i)} = 1$ and among the total $C(K, i)$ selections, every candidate slot $k\in\{1, 2, ..., K\}$ would be selected $C(K - 1, i - 1)$ times to position a node, which hence leads to:
\begin{equation}\label{eq-ndcg4}
    \sum_{j = 1}^{C(K, i)}{\sum_{k = 1}^K{\frac{\mathds{1}[\mathbf{O}_{jk}^{(K, i)} = 1]}{\log_2(k + 1)}}} = \sum_{k = 1}^{K}\frac{C(K - 1, i - 1)}{\log_2(k + 1)}.
\end{equation}
We then substitute~\eqref{eq-ndcg4} into~\eqref{eq-ndcg3} as:
\begin{equation}\label{eq-ndcg5}
\begin{split}
    &\sum_{i = 0}^{K}\frac{C(N - \alpha d, K - i)C(\alpha d, i)}{C(N, K)}\frac{1}{C(K, i)}\sum_{k = 1}^{K}\frac{C(K - 1, i - 1)}{\log_2(k + 1)}\\
    &=\sum_{i = 0}^{K}\frac{C(N - \alpha d, K - i)C(\alpha d, i)}{C(N, K)}\frac{C(K - 1, i - 1)}{C(K, i)}\sum_{k = 1}^{K}\frac{1}{\log_2(k + 1)}.
\end{split}
\end{equation}
Further substituting~\eqref{eq-ndcg5} into~\eqref{eq-ndcg1}, we finally get:
\begin{equation}\label{eq-ndcg6}
\begin{split}
E(\mathrm{N}@K|d_i) &= \sum_{i = 0}^{K}\frac{C(N - \alpha d, K - i)C(\alpha d, i)}{C(N, K)}\frac{C(K - 1, i - 1)}{C(K, i)} \\
&= \sum_{i = 0}^{K}\frac{C(N - \alpha d, K - i)C(\alpha d, i)}{C(N, K)}\frac{\frac{(K - 1)!}{(i - 1)!(K - i)!}}{\frac{K!}{i!(K - i)!}} \\
&= \frac{1}{K}\underbrace{\sum_{i = 0}^{K}{i\frac{C(N - \alpha d, K - i)C(\alpha d, i)}{C(N, K)}}}_{E(|\widetilde{\mathcal{E}}_i\cap \widehat{\mathcal{E}}_i|)} = \frac{1}{K}\frac{\alpha d}{N}*K = \frac{\alpha d}{N}
\end{split}
\end{equation}

\end{proof}

Based on Theorem~\ref{thm-eval}, Precision, F1, and NDCG increase as node degree increases even when no observed links are used to train the link predictor, which informs the degree-related evaluation bias and causes the illusion that high-degree nodes are more advantageous than low-degree ones observed in some previous works~\citep{li2021user, rahmani2022experiments}. 

\subsection{Reweighting by LP Score Enhance 1-layer TC}
\begin{thm}\label{thm-wtc}
    Taking the normalization term $g(|\mathcal{H}_i^{1}|, |\mathcal{H}_j^{1}|) = |\mathcal{H}_i^{1}|$ and also assume that that higher link prediction score $\mathbf{S}_{ij}$ between $v_i$ and its neighbor $v_j$ corresponds to more number of connections between $v_j$ and the neighborhood $\mathcal{N}_i^{\mathrm{Tr}}$, i.e., $\mathbf{S}_{ij} > \mathbf{S}_{ik} \rightarrow |\mathcal{N}^{1, \mathrm{Tr}}_j\cap \mathcal{N}_{i}^{1, \mathrm{Tr}}| > |\mathcal{N}^{1, \mathrm{Tr}}_k\cap \mathcal{N}_{i}^{1, \mathrm{Tr}}|, \forall v_j, v_k \in \mathcal{N}^{\mathrm{Tr, 1}}_i$, then we have:
    \begin{equation}
        \widehat{C}_i^{1, \mathrm{Tr}} = \sum_{v_j\sim\mathcal{N}_i^{\mathrm{Tr}}}\frac{\mathbf{S}_{ij}|\mathcal{H}_i^{1}\cap \mathcal{H}_j^{1}|}{|\mathcal{H}_i^{1}|} \ge \mathbb{E}_{v_j\sim \mathcal{N}_i^{\mathrm{Tr}}}\frac{|\mathcal{H}_i^{1}\cap \mathcal{H}_j^{1}|}{|\mathcal{H}_i^{1}|} = C_i^{1, \mathrm{Tr}}
    \end{equation}
\end{thm}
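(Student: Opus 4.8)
The claimed inequality compares a \emph{score-weighted} average of per-neighbour quantities against their plain \emph{uniform} average, so the natural engine is Chebyshev's sum inequality (equivalently: nonnegativity of the covariance of two comonotone functions). First I would fix $v_i$ and enumerate its training neighbours as $v_{j_1},\dots,v_{j_n}$ with $n=|\mathcal{N}_i^{\mathrm{Tr}}|$, then abbreviate $s_\ell:=\mathbf{S}_{ij_\ell}$ for the reweighting coefficients and $c_\ell:=|\mathcal{H}_i^{1}\cap\mathcal{H}_{j_\ell}^{1}|/|\mathcal{H}_i^{1}|$ for the per-neighbour intersection ratios, using $g(|\mathcal{H}_i^{1}|,|\mathcal{H}_j^{1}|)=|\mathcal{H}_i^{1}|$ as in the hypothesis. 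Since the $\mathbf{S}_{ij}$ are the softmax-normalised weights supplied by the edge-reweighting construction, $\sum_{\ell=1}^{n}s_\ell=1$; with this, $\widehat{C}_i^{1,\mathrm{Tr}}=\sum_{\ell}s_\ell c_\ell$ and $C_i^{1,\mathrm{Tr}}=\mathbb{E}_{v_j\sim\mathcal{N}_i^{\mathrm{Tr}}}\,c=\tfrac1n\sum_{\ell}c_\ell$, so the goal reduces to $\sum_\ell s_\ell c_\ell\ge\tfrac1n\sum_\ell c_\ell$.

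Next I would use the hypothesis to establish that the two sequences $(s_\ell)$ and $(c_\ell)$ are similarly (comonotonically) ordered. Relabel the neighbours so that $s_1\le s_2\le\dots\le s_n$. Whenever $s_\ell<s_m$, the assumed implication $\mathbf{S}_{ij}>\mathbf{S}_{ik}\Rightarrow|\mathcal{N}^{1,\mathrm{Tr}}_j\cap\mathcal{N}^{1,\mathrm{Tr}}_i|>|\mathcal{N}^{1,\mathrm{Tr}}_k\cap\mathcal{N}^{1,\mathrm{Tr}}_i|$ (recalling $\mathcal{N}_j^{1,\mathrm{Tr}}=\mathcal{H}_j^{1}$) forces $c_\ell<c_m$; within each maximal block of indices sharing a common value of $s$ I would re-sort by $c$, after which $(c_\ell)$ is nondecreasing as well. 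Consequently $s_\ell-s_m$ and $c_\ell-c_m$ have the same sign for every pair $\ell,m$.

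Finally I would conclude via the elementary identity $\sum_{\ell,m=1}^{n}(s_\ell-s_m)(c_\ell-c_m)=2n\sum_\ell s_\ell c_\ell-2\big(\sum_\ell s_\ell\big)\big(\sum_\ell c_\ell\big)$, whose left-hand side is $\ge 0$ since every summand is a product of two like-signed numbers; substituting $\sum_\ell s_\ell=1$ and dividing by $n$ gives $\sum_\ell s_\ell c_\ell\ge\tfrac1n\sum_\ell c_\ell$, i.e.\ $\widehat{C}_i^{1,\mathrm{Tr}}\ge C_i^{1,\mathrm{Tr}}$. I expect no genuine mathematical obstacle here — the only points requiring care are bookkeeping: (i) making explicit that $\mathbf{S}_{ij}$ is normalised over $\mathcal{N}_i^{\mathrm{Tr}}$, without which the two sides of the inequality are not on the same scale, and (ii) handling ties among the scores when promoting the stated implication to an honest simultaneous ordering. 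A cleaner way to package (ii) is to phrase the whole argument as $\mathrm{Cov}(s_J,c_J)\ge 0$ for a uniformly random neighbour index $J$, since comonotonicity of the underlying real-valued maps already yields nonnegative covariance and the tie discussion disappears. Nothing beyond the definition of $C_i^{1,\mathrm{Tr}}$ (not the GNN or the random-projection material) is needed.
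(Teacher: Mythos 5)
Your proof is correct, and it is the argument the paper \emph{should} have given: the paper's own ``proof'' merely rewrites $C_i^{1,\mathrm{Tr}}$ and $\widehat{C}_i^{1,\mathrm{Tr}}$ with the normalization $g=|\mathcal{H}_i^1|$, restates the monotonicity hypothesis, and then asserts the inequality with no intervening step. Your Chebyshev-sum / nonnegative-covariance argument supplies exactly the missing link, and your closing remark that ties are handled automatically (a pair with $s_\ell=s_m$ contributes zero to $\sum_{\ell,m}(s_\ell-s_m)(c_\ell-c_m)$ regardless of $c$) means even the block re-sorting step is dispensable. Two observations. First, you are right that the hidden hypothesis $\sum_{v_j\in\mathcal{N}_i^{\mathrm{Tr}}}\mathbf{S}_{ij}=1$ is essential and unstated in the theorem: without it the two sides are on different scales and the claim is false (e.g.\ scale all scores toward zero). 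Be aware that in the paper's actual reweighting construction the softmax normalizes over all $n$ nodes, not just over $\mathcal{N}_i^{\mathrm{Tr}}$, so the normalization you need holds only after restricting and renormalizing to the neighborhood; the theorem is best read as taking $\mathbf{S}_{i\cdot}$ to be a probability vector supported on $\mathcal{N}_i^{\mathrm{Tr}}$. Second, a minor bookkeeping point: the paper's displayed $\widehat{C}_i^{1,\mathrm{Tr}}$ carries the prefactor $1/|\mathcal{H}_i^1|=1/n$ on both sides, so your reduction to $\sum_\ell s_\ell c_\ell\ge\frac1n\sum_\ell c_\ell$ matches it exactly once you note $\mathbb{E}_{v_j\sim\mathcal{N}_i^{\mathrm{Tr}}}$ denotes the uniform average over the $n$ neighbors. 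With those caveats recorded, your write-up is a complete and strictly more rigorous version of the paper's argument.
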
    

\begin{proof}
    By definition, we have $\mathcal{H}_i^1 = \mathcal{N}^{1, \mathrm{Tr}}_i$, then the computation of 1-layer TC$^{\mathrm{Tr}}$ is transformed as:
\begin{equation}\label{eq-TCC}
    C_i^{1, \mathrm{Tr}} = \mathbb{E}_{v_j \sim \mathcal{N}_i^{\mathrm{Tr}}}I(\mathcal{S}_i^{1}, \mathcal{S}_j^{1}) = \mathbb{E}_{v_j \sim \mathcal{N}_i^{\mathrm{Tr}}}\frac{|\mathcal{N}_i^{\mathrm{Tr}}\cap \mathcal{N}_j^{\mathrm{Tr}}|}{|\mathcal{N}_i^{\mathrm{Tr}}|} = \frac{1}{|\mathcal{N}_i^{\mathrm{Tr}}|}\mathbb{E}_{v_j \sim \mathcal{N}_i^{\mathrm{Tr}}}(|\mathcal{N}_i^{\mathrm{Tr}}\cap \mathcal{N}_j^{\mathrm{Tr}}|).
\end{equation}
On the other hand, we also transform weighted TC as:
\begin{equation}\label{eq-weightedTC}
    \widehat{C}_i^{1, \mathrm{Tr}} = \frac{1}{|\mathcal{N}_i^{\mathrm{Tr}}|}\sum_{v_j \sim \mathcal{N}_i^{\mathrm{Tr}}}(\mathbf{S}_{ij}|\mathcal{N}_i^{\mathrm{Tr}}\cap \mathcal{N}_j^{\mathrm{Tr}}|).
\end{equation}
By the relation that:
\begin{equation}
\mathbf{S}_{ij} > \mathbf{S}_{ik} \rightarrow |\mathcal{N}^{1, \mathrm{Tr}}_j\cap \mathcal{N}_{i}^{1, \mathrm{Tr}}| > |\mathcal{N}^{1, \mathrm{Tr}}_k\cap \mathcal{N}_{i}^{1, \mathrm{Tr}}|, \forall v_j, v_k \in \mathcal{N}^{\mathrm{Tr, 1}}_i,
\end{equation}
Then we have:
\begin{equation}
    \widehat{C}_i^{1, \text{Tr}} \ge C_i^{1, \text{Tr}}
\end{equation}
\end{proof}

Moreover, we include Figure~\ref{fig-tc-compare} to illustrate the idea of enhancing TC via assigning higher weights to edges connecting neighbors that have higher connections to the whole neighborhoods. We can see in this case, weighted TC in Figure~\ref{fig-tc-compare}(a) is naturally higher than the one in Figure~\ref{fig-tc-compare}(b)

\begin{figure}[htbp!]
     \centering
     \includegraphics[width=1\textwidth]{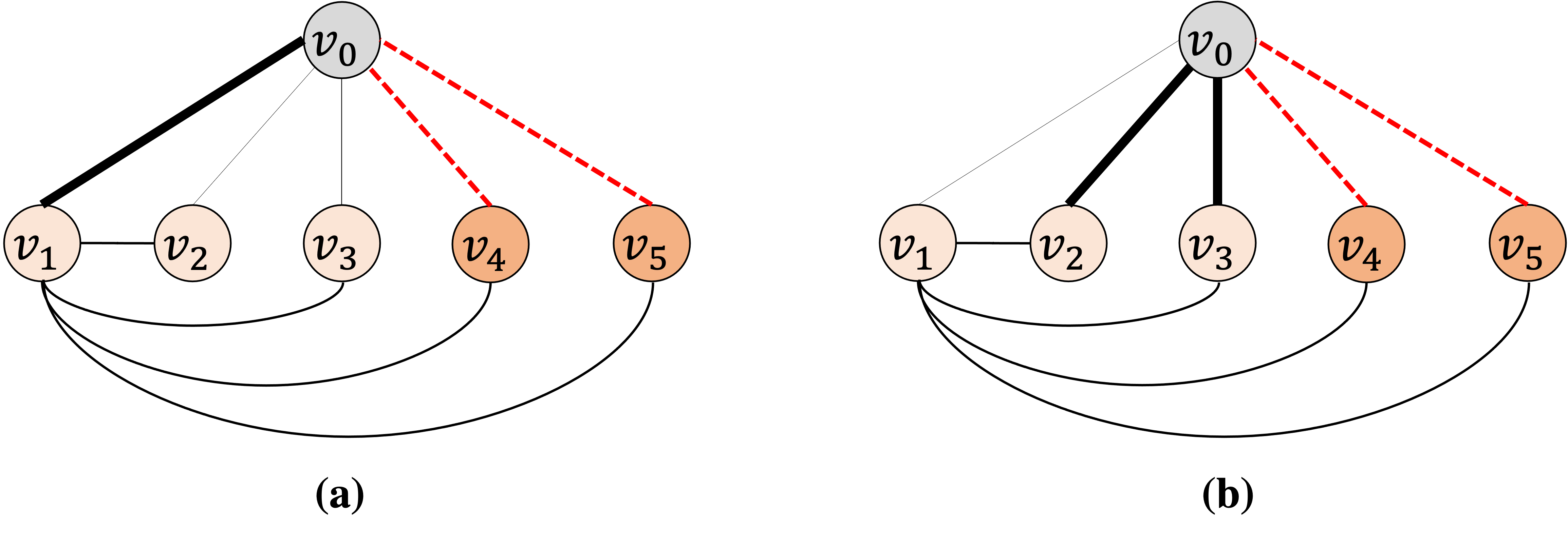}
     \caption{(a) Increase the weight of neighbors that have more connections with the whole neighborhood while (b) increase the weight of neighbors that have fewer connections with the whole neighborhood. (a) would increase the weighted TC while (b) would not}
     \label{fig-tc-compare}
\end{figure}

\section{Example demonstrating the advantages of TC over LCC}\label{app-adv-tc-lcc}
\begin{wrapfigure}{r}{0.4\textwidth}
     \centering
     \includegraphics[width=0.4\textwidth]{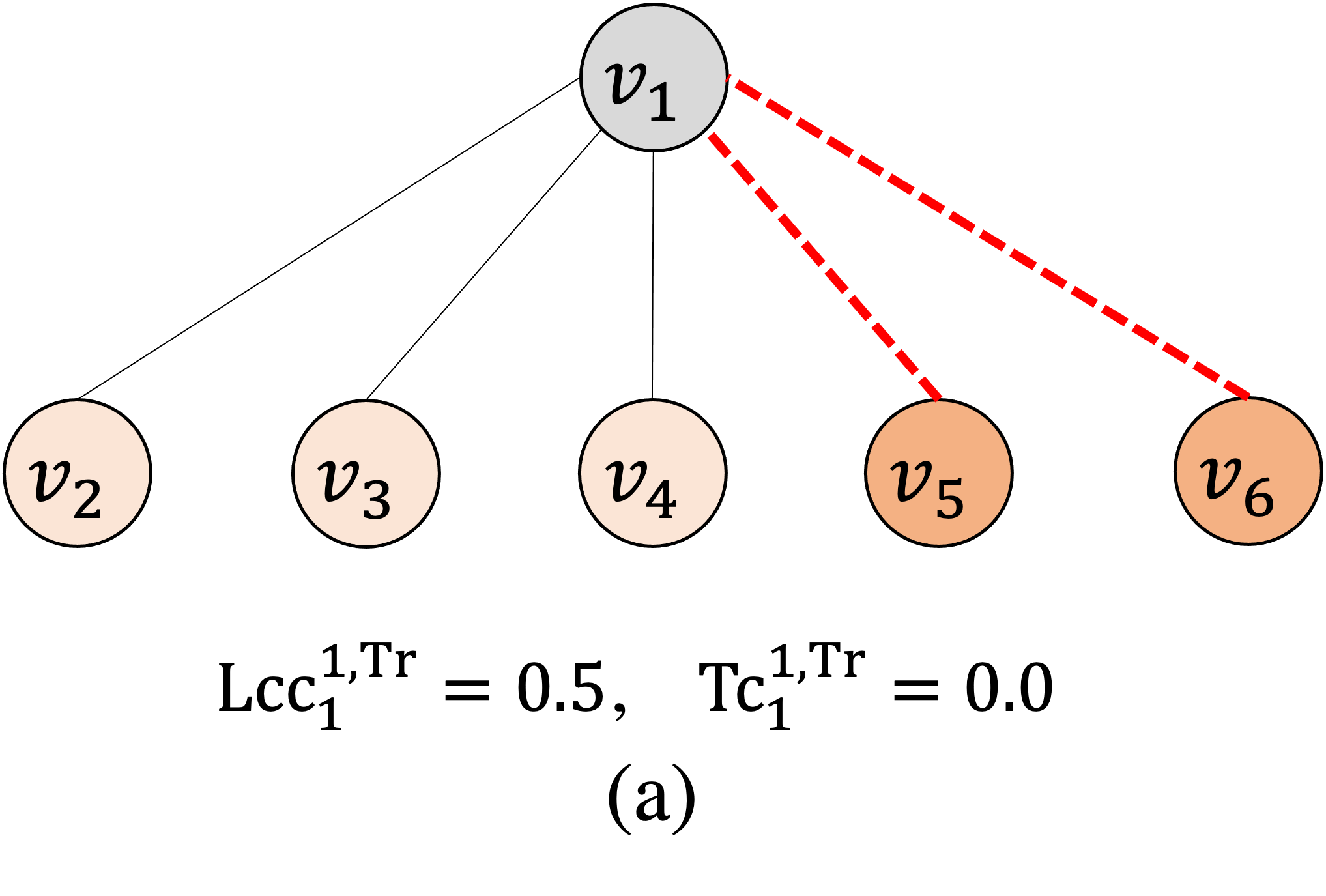}
    \caption{Comparison of TC and LCC}
    \label{fig-adv-lcc-tc}
\end{wrapfigure}
According to the definition of local clustering coefficient (LCC) and TC, we respectively calculate their values for node $v_1$ in Figure~\ref{fig-adv-lcc-tc}. $v_2, v_3, v_4$ do not have any connection among themselves, indicating node $v_1$ prefer interacting with nodes coming from significantly different domain/community. Subsequently, the incoming neighbors $v_5, v_6$ of $v_1$ are likely to also come from other communities and hence share no connections with $v_2, v_3, v_4$, which leads to the ill topological condition for predicting links of $v_1$. However, in this case, the clustering coefficient still maintains $0.5$ because of the connections between $v_1$ and $v_2/v_3/v_4$, which cannot precisely capture the ill-topology of $v_1$ in this case. Conversely, our TC$^\text{Tr}$ equals 0, reflecting the ill topological condition of $v_1$.

\section{Datasets and Experimental Settings}\label{app-experiment}
\vspace{-1ex}
This section introduces datasets and experimental settings used in this paper.

\subsection{Dataset Introduction and Statistics}\label{app-dataset}
\vspace{-0.75ex}
We use five widely employed datasets for evaluating the link prediction task, including four citation networks: Cora, Citeseer, Pubmed, and Citation2, and 1 human social network Collab. 
We further introduce two real-world animal social networks, Reptile and Vole, based on animal interactions.
\vspace{-2ex}
\begin{itemize}[leftmargin=*]
    \item \textbf{Cora/Citeseer/Pubmed}: Following~\citep{counterfactual, sketch, wang2023neural}, we randomly split edges into 70\%/10\%/20\% so that there is no topological distribution shift in these datasets. We use Hits@100 to evaluate the final performance.

    \item \textbf{Collab/Citation2}: We leverage the default edge splitting from OGBL~\citep{hu2020open}. These two datasets mimic the real-life link prediction scenario where testing edges later joined in the network than validation edges and further than training edges. This would cause the topological distribution shift observed in the \textbf{Obs.3} of Section~\ref{sec-applicationtc}. For Collab, different from~\citep{sketch, wang2023neural}, our setting does not allow validation edges to join the network for message-passing when evaluating link prediction performance. Therefore, the edges used for message-passing and supervision come from edges in the training set.

    \item \textbf{Reptile/Vole}: we obtain the dataset from \href{https://networkrepository.com/reptilia-tortoise-network-fi.php}{Network Repository}~\citep{networks}. To construct this network, a bipartite network was first constructed based on burrow use - an edge connecting a tortoise node to a burrow node indicated a burrow used by the individual. Social networks of desert tortoises were then constructed by the bipartite network into a single-mode projection of tortoise nodes. Node features are initialized by a trainable embedding layer, and we leverage the same edge splitting 70\%/10\%/20\% as Cora/Citeseer/Pubmed for training/evaluation.    
\end{itemize}
\vspace{-1ex}

\begin{table}[htbp!]
\centering
\scriptsize
\caption{Statistic of datasets used for evaluating link prediction.}
\vspace{-1ex}
\begin{tabular}{llccccc}
\hline
\textbf{Network Domain} & \textbf{Dataset} & \textbf{\# Nodes} & \textbf{\# Edges} & \textbf{Split Type} & \textbf{Metric} & \textbf{Split Ratio}\\
\hline
\multirow{4}{*}{\makecell{Citation \\ Network}} & Cora & 2,708 & 5,278 & Random & Hits@100 & 70/10/20\%\\
 & Citeseer & 3,327 & 4,676 & Random & Hits@100  & 70/10/20\%\\
 & Pubmed & 18,717 & 44,327 & Random & Hits@100  & 70/10/20\%\\
 & Citation2 & 2,927,963 & 30,561,187 & Time & MRR & Default\\
 \hline
Social Network & Collab & 235,868 & 1,285,465 & Time & Hits@50 & Default\\
\hline
\multirow{2}{*}{Animal Network} & Reptile & 787 & 1232 & Random & Hits@100 & 70/10/20\%\\
 & Vole & 1480 & 3935 & Random & Hits@100 & 70/10/20\%\\
\Xhline{2\arrayrulewidth}
\end{tabular}
\label{tab-data}
\vspace{-1ex}
\end{table}

\subsection{Hyperparameter Details}
\vspace{-0.75ex}
For all experiments, we select the best configuration on validation edges and report the model performance on testing edges. The search space for the hyperparameters of the GCN/SAGE/LightGCN baselines and their augmented variants GCN$_{rw}$/SAGE$_{rw}$ are: graph convolutional layer $\{1, 2, 3\}$, hidden dimension of graph encoder $\{64, 128, 256\}$, the learning rate of the encoder and predictor $\{0.001, 0.005, 0.01\}$, dropout $\{0.2, 0.5, 0.8\}$, training epoch $\{50, 100, 500, 1000\}$, batch size $\{256, 1152, 64*1024\}$~\citep{hu2020open, sketch, wang2023neural}, weights $\alpha \in \{0.5, 1, 2, 3, 4\}$, the update interval $\tau\in\{1, 2, 10, 20, 50\}$, warm up epochs $T^{\text{warm}}\in\{1, 2, 5, 10, 30, 50\}$. For baseline NCN\footnote{https://github.com/GraphPKU/NeuralCommonNeighbor}, we directly run their code using their default best-performing configurations on Cora/Citeseer/Pubmed/Collab but for Citation2, due to memory limitation, we directly take the result from the original paper. We use cosine similarity metric as the similarity function $\phi$ in computing ATC.

\section{Additional Results}\label{app-add-res}
\vspace{-1ex}
To demonstrate that the observations made previously in Section~\ref{sec-tc} can also generalize to other datasets, here we present the comprehensive results on all datasets we study in this paper as follows. 
\subsection{Link prediction performance grouped by TC$^{\mathrm{Te}}$}\label{app-bar-testtc}

\begin{figure}[htbp!]
     \centering
     \includegraphics[width=0.7\textwidth]{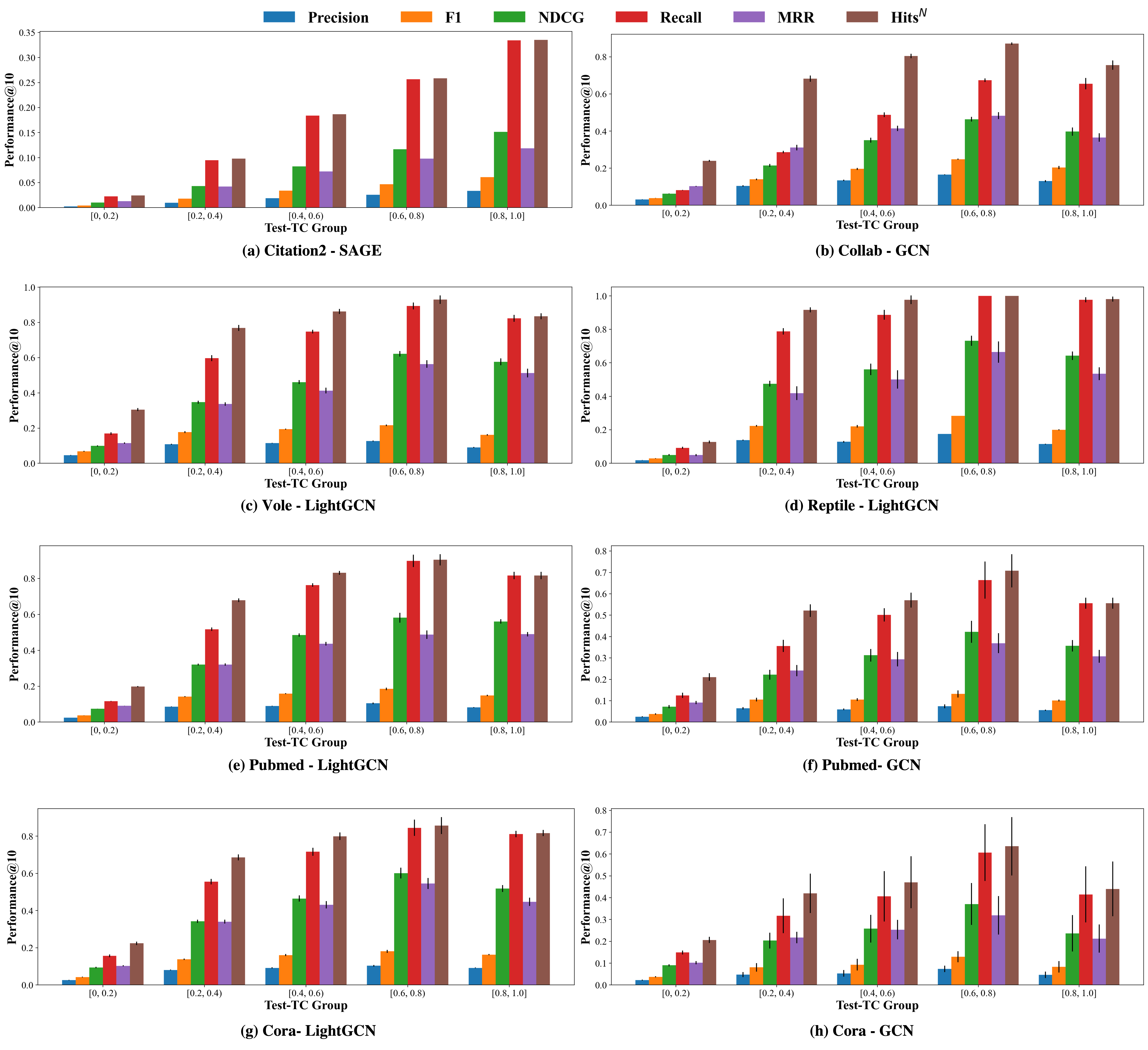}
     \caption{LP performance grouped by TC$^{\mathrm{Te}}$ for all nodes}
     \label{fig-test-tc-perform}
\end{figure}

\begin{figure}[htbp!]
     \centering
     \includegraphics[width=0.7\textwidth]{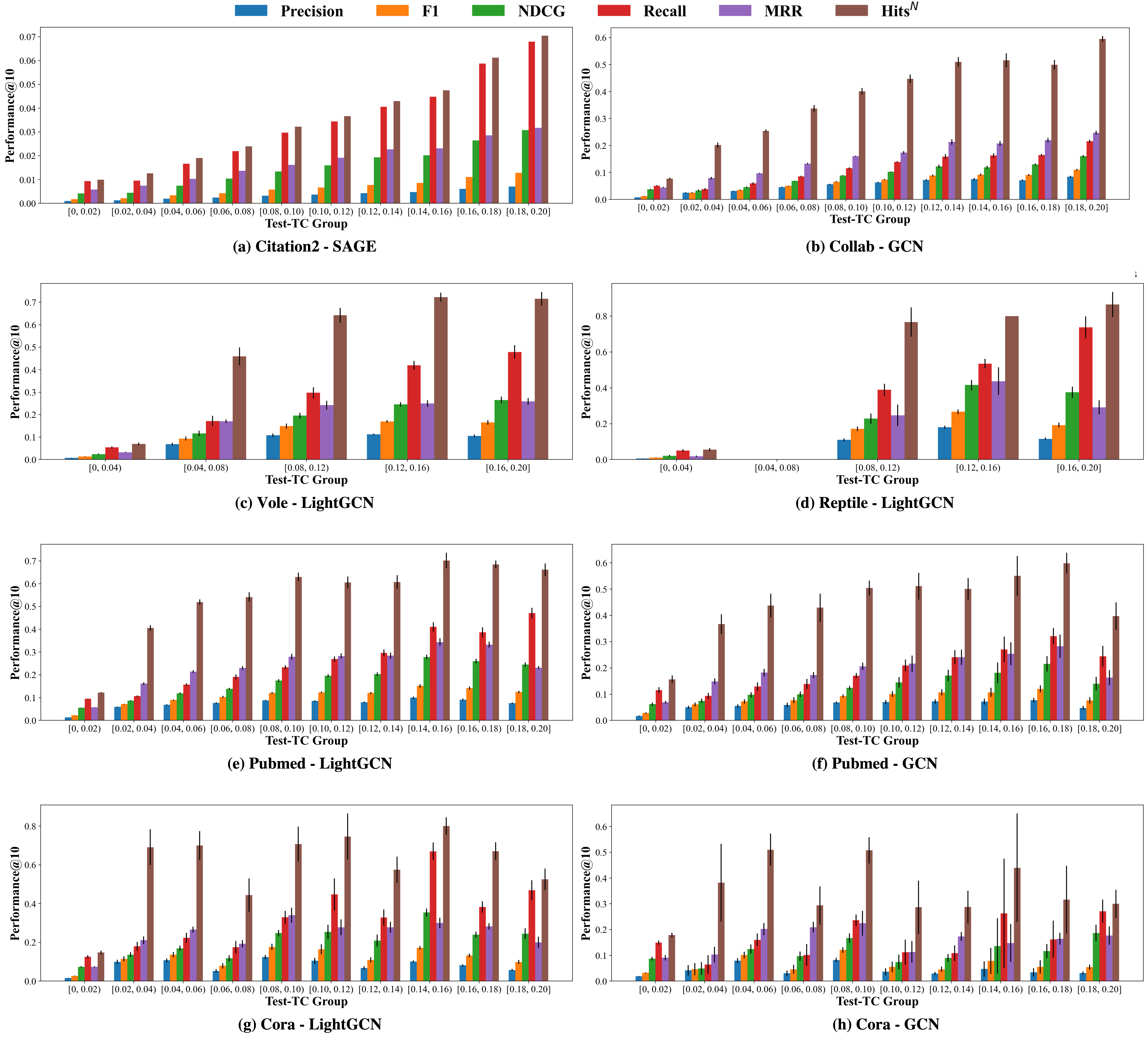}
     \caption{LP performance grouped by TC$^{\mathrm{Te}}$ for low TC$^{\mathrm{Te}}$ nodes}
     \label{fig-test-tc-cold-perform}
\end{figure}

\clearpage
\subsection{Link prediction performance grouped by TC$^{\mathrm{Tr}}$}\label{app-bar-traintc}
\begin{figure}[htbp!]
     \centering
     \includegraphics[width=0.7\textwidth]{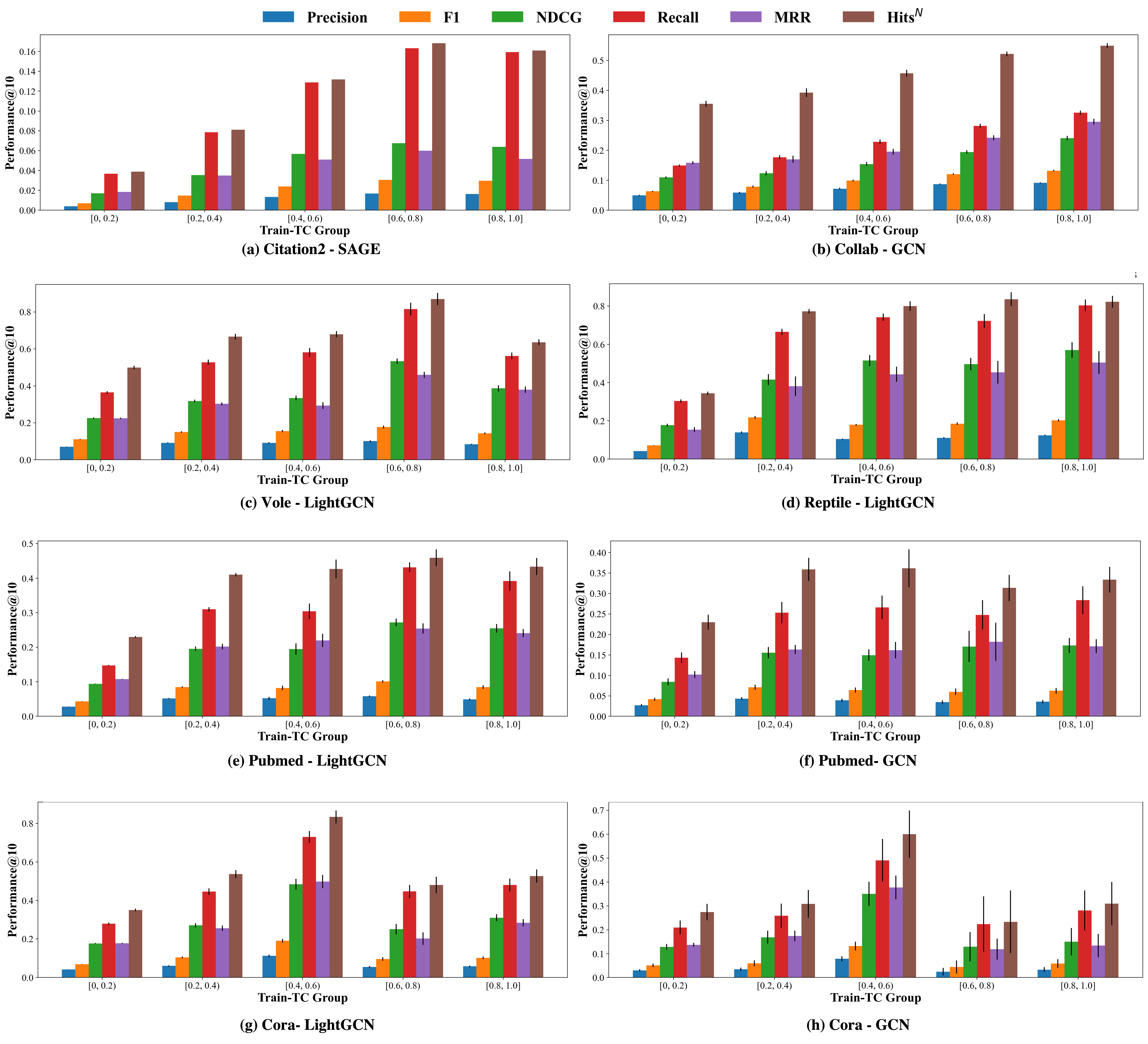}
     \caption{LP performance grouped by TC$^{\mathrm{Tr}}$ for all nodes}
     \label{fig-train-tc-perform}
\end{figure}

\begin{figure}[htbp!]
     \centering
     \includegraphics[width=0.7\textwidth]{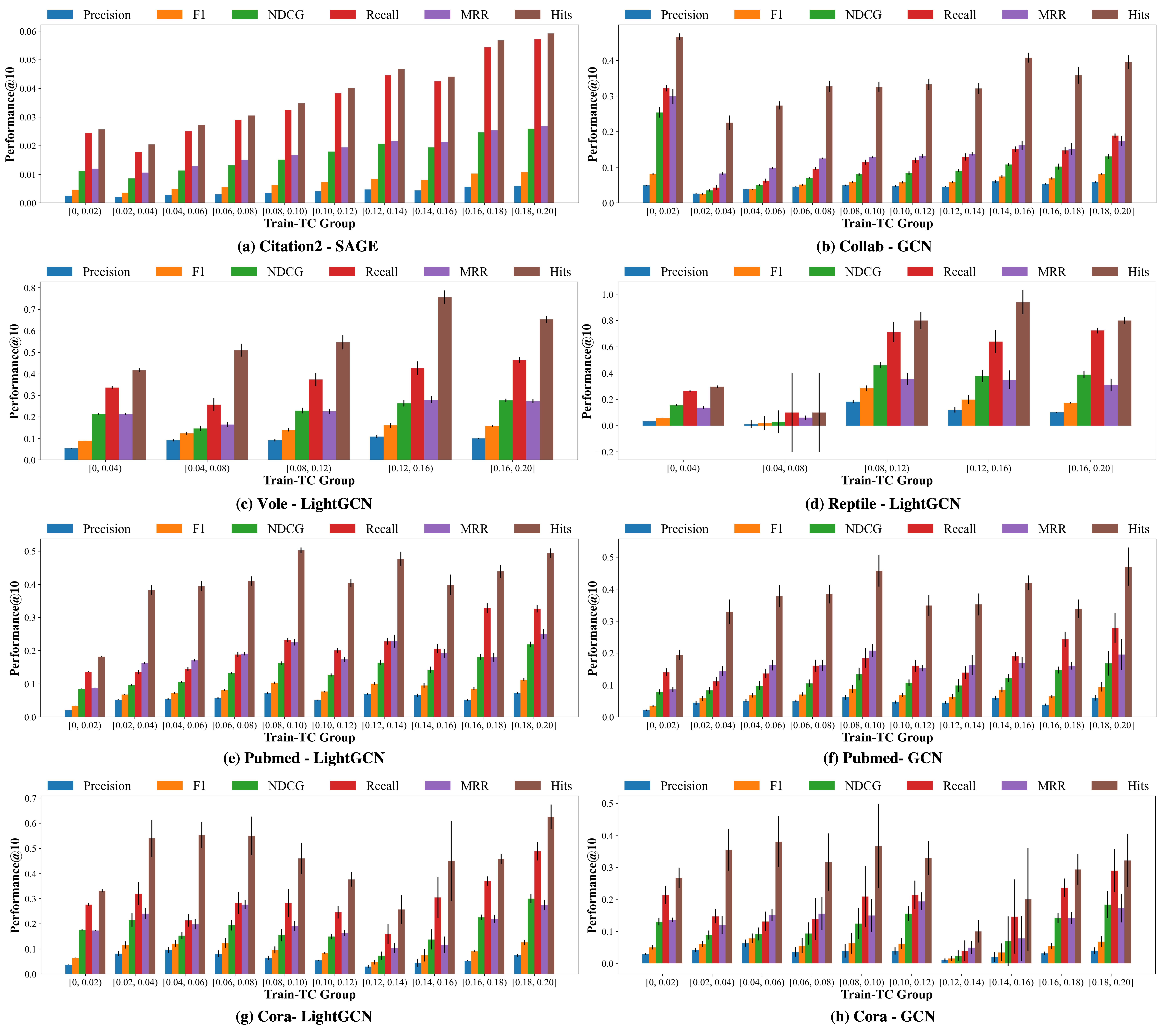}
     \caption{LP performance grouped by TC$^{\mathrm{Tr}}$ for low TC$^{\mathrm{Tr}}$ nodes}
     \label{fig-train-tc-cold-perform}
\end{figure}

\clearpage
\subsection{Link prediction performance grouped by Degree$^{\mathrm{Te}}$}\label{app-bar-testdegree}

\begin{figure}[htbp!]
     \centering
     \includegraphics[width=0.7\textwidth]{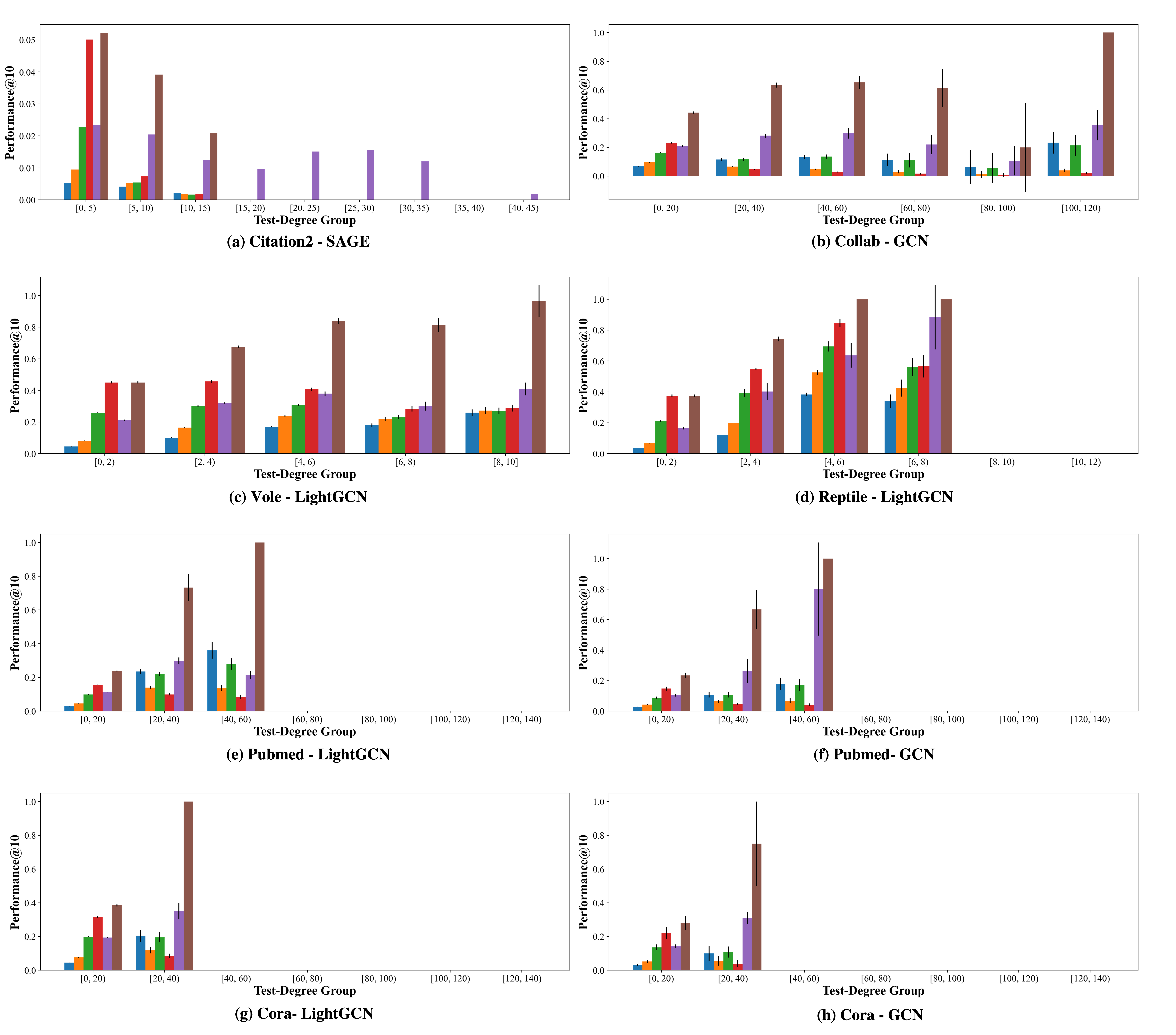}
     \caption{LP performance grouped by Degree$^{\mathrm{Te}}$ for all nodes}
     \label{fig-test-degree-perform}
\end{figure}

\begin{figure}[htbp!]
     \centering
     \includegraphics[width=0.7\textwidth]{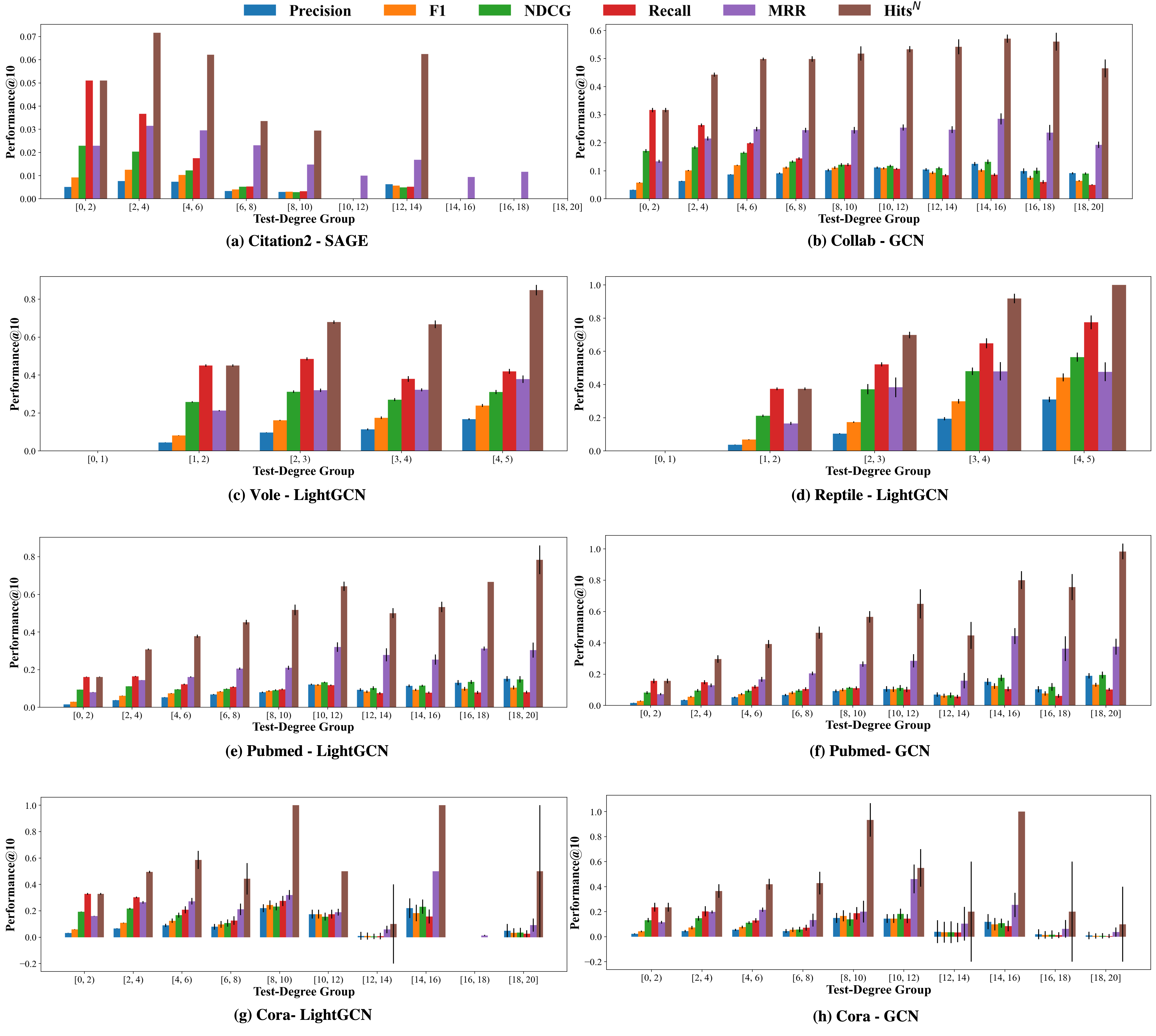}
     \caption{LP performance grouped by Degree$^{\mathrm{Te}}$ for low Test-Degree nodes}
     \label{fig-test-degree-cold-perform}
\end{figure}

\clearpage
\subsection{Link prediction performance grouped by Degree$^{\mathrm{Tr}}$}\label{app-bar-traindegree}

\begin{figure}[htbp!]
     \centering
     \includegraphics[width=0.7\textwidth]{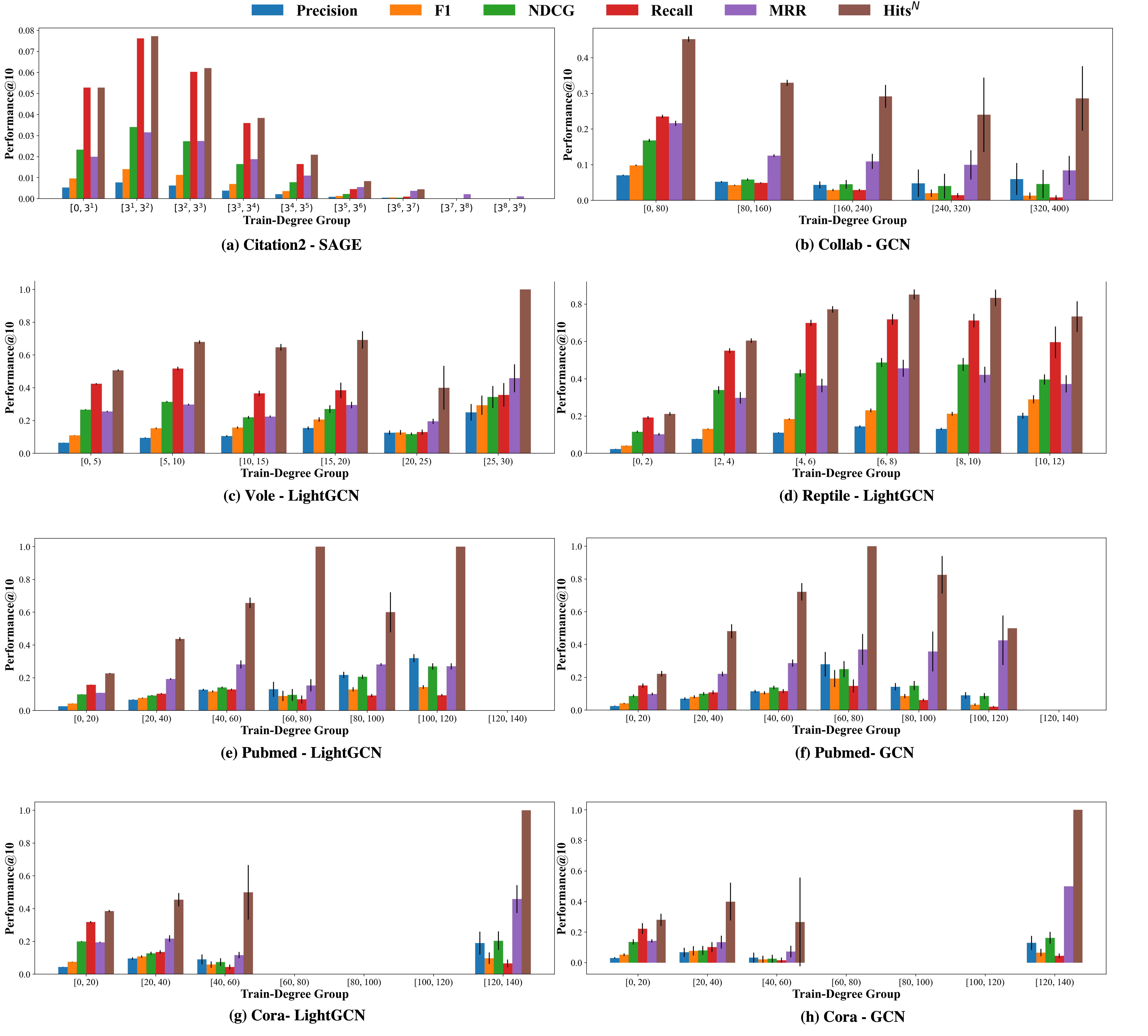}
     \caption{LP performance grouped by Degree$^{\mathrm{Tr}}$ for all nodes}
     \label{fig-train-degree-perform}
\end{figure}

\begin{figure}[htbp!]
     \centering
     \includegraphics[width=0.7\textwidth]{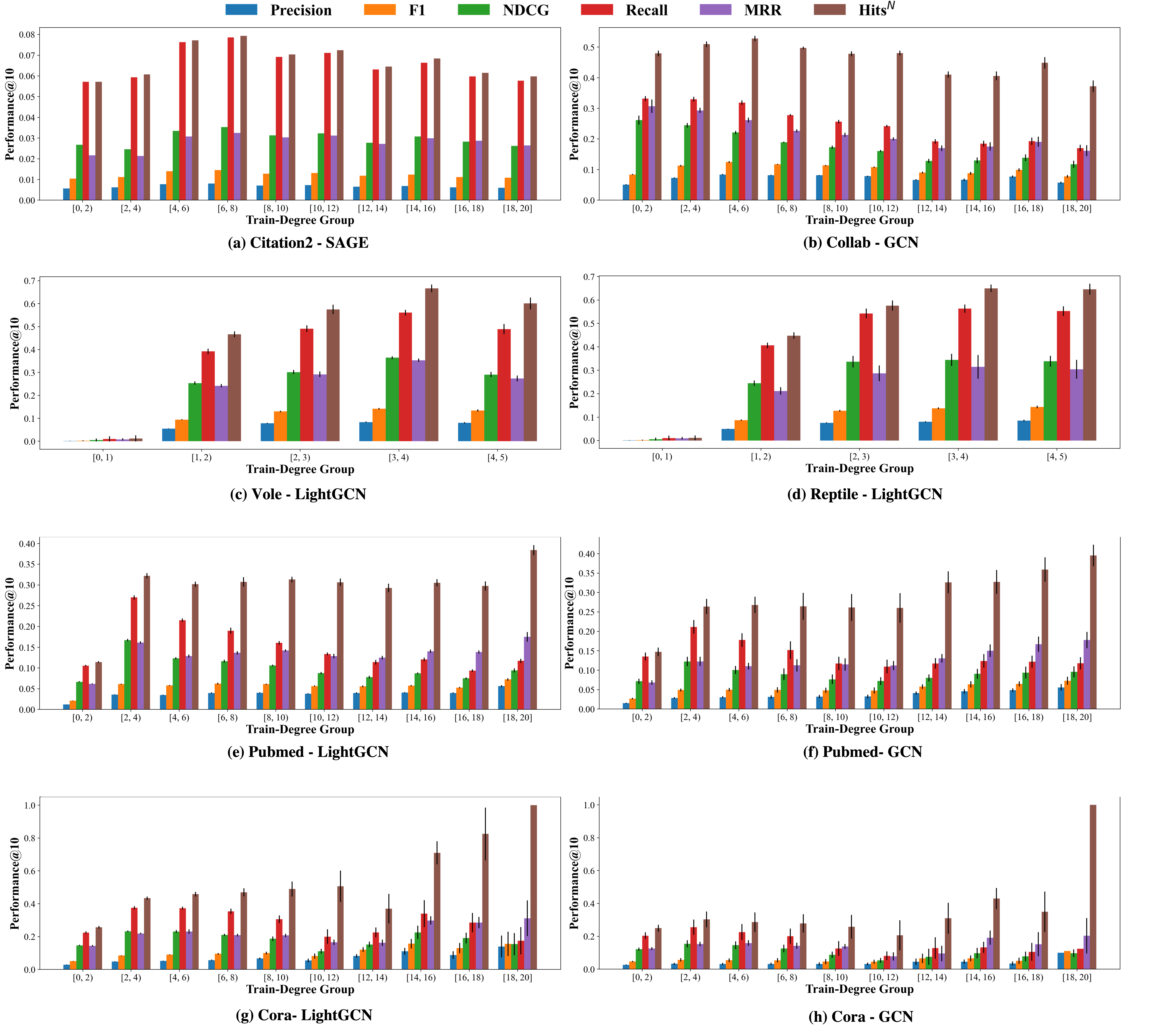}
     \caption{LP performance grouped by Degree$^{\mathrm{Tr}}$ for low Degree$^{\mathrm{Tr}}$ nodes}
     \label{fig-train-degree-cold-perform}
\end{figure}

\clearpage
\subsection{Relation between LP performance and TC at Graph-level}\label{app-relation-graph-lp}
\begin{figure}[htbp!]
     \centering
     \includegraphics[width=0.9\textwidth]{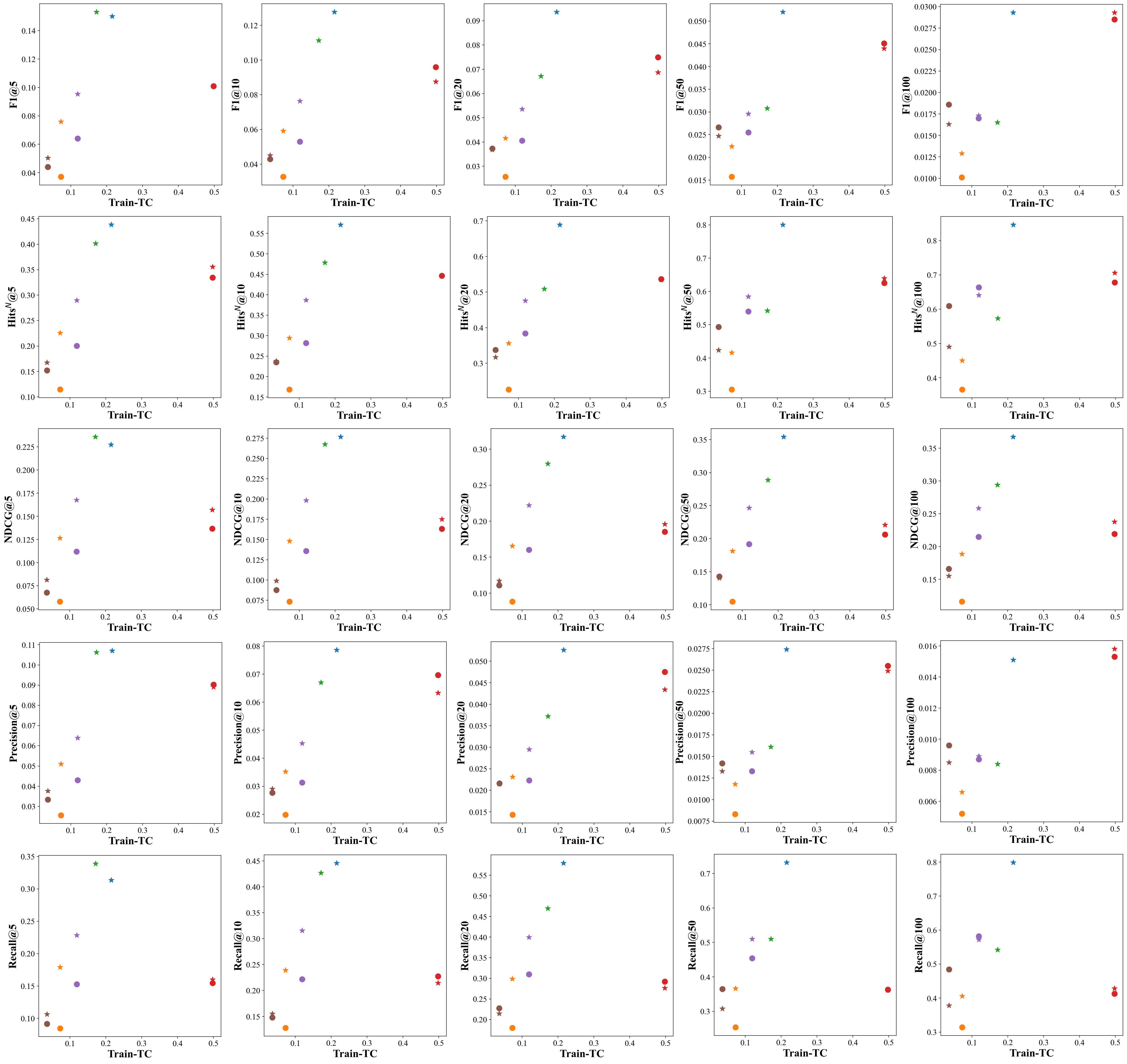}
     \caption{Relation between LP performance and TC at Graph-level}
     \label{fig-graph_corr_whole}
\end{figure}

\subsection{Relation between TC$^{\mathrm{Tr}}$ and TC$^{\mathrm{Te}}$}\label{app-relation-train-test-tc}
\begin{figure}[htbp!]
     \centering
     \includegraphics[width=0.7\textwidth]{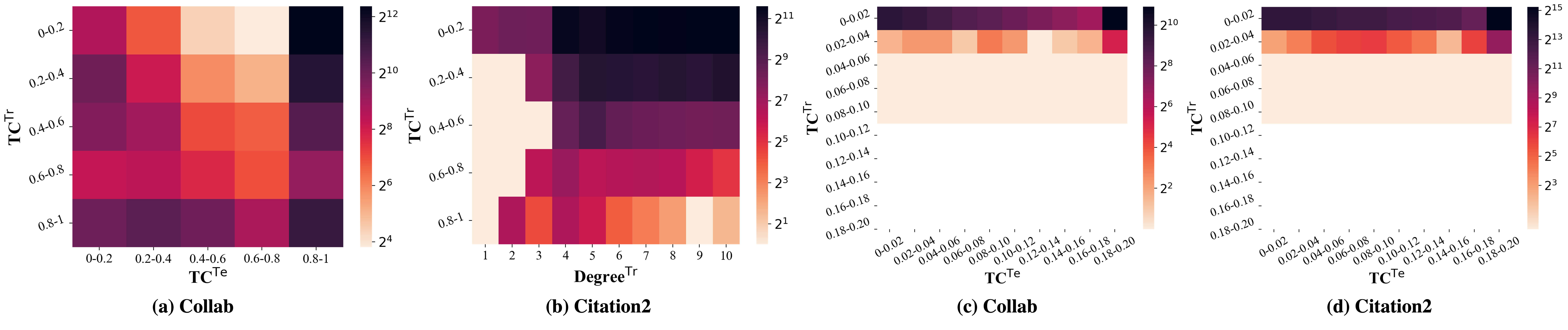}
     \caption{Relation between TC$^{\text{Tr}}$ and TC$^{\text{Te}}$ on Collab/Citation2}
     \label{fig-train-test-tc-rela}
\end{figure}

\subsection{Correlation of the performance with TC and Degree}\label{app-corr}

Here we present the comprehensive correlation of the performance with TC$^{\text{Tr}}$/TC$^{\text{Val}}$/TC$^{\text{Te}}$ and Degree$^{\text{Tr}}$. 
As the performance is evaluated under different K, we further define the absolute average/the typical average correlation across different K values to reflect the absolute correlation strength/the consistency of the correlation average:
\begin{equation}
    \text{Absolute Avg.}_{X@K} = \frac{1}{4}\sum_{k\in\{5, 10, 20, 50\}}|X@k|, \quad  ~~\text{Basic Avg.}_{X@K} = \frac{1}{4}\sum_{k\in\{5, 10, 20, 50\}}X@k \nonumber 
\end{equation}

\clearpage
\subsection{Difference in TC vs Difference in Performance before/after applying reweighting}\label{app-aug-diff}

\begin{figure}[htbp!]
     \centering
     \includegraphics[width=1\textwidth]{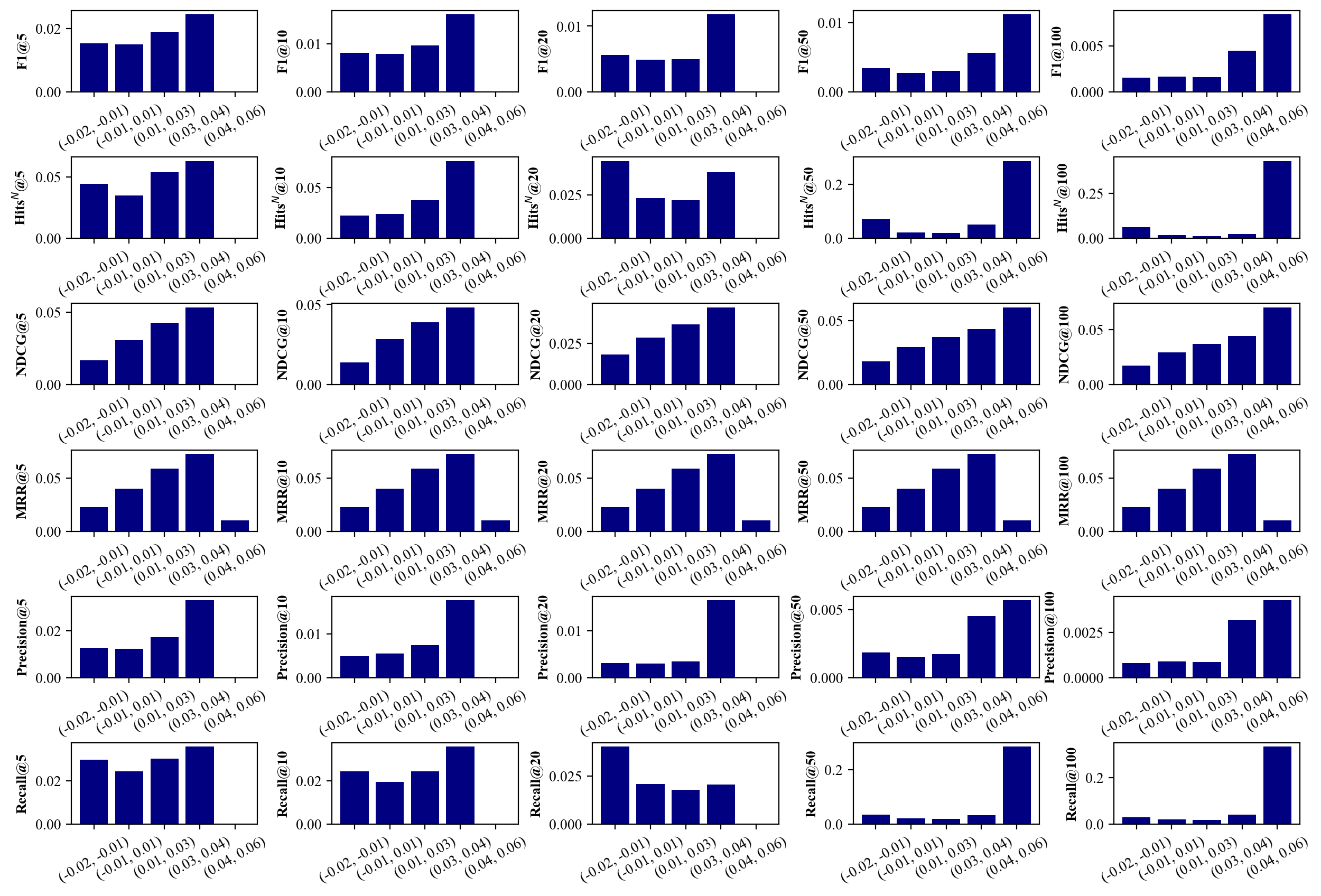}
     \caption{Relation between TC$^{\text{Tr}}$ and TC$^{\text{Te}}$ on Collab}
     \label{fig-collab-aug-diff-GCN}
\end{figure}

\begin{figure}[htbp!]
     \centering
     \includegraphics[width=1\textwidth]{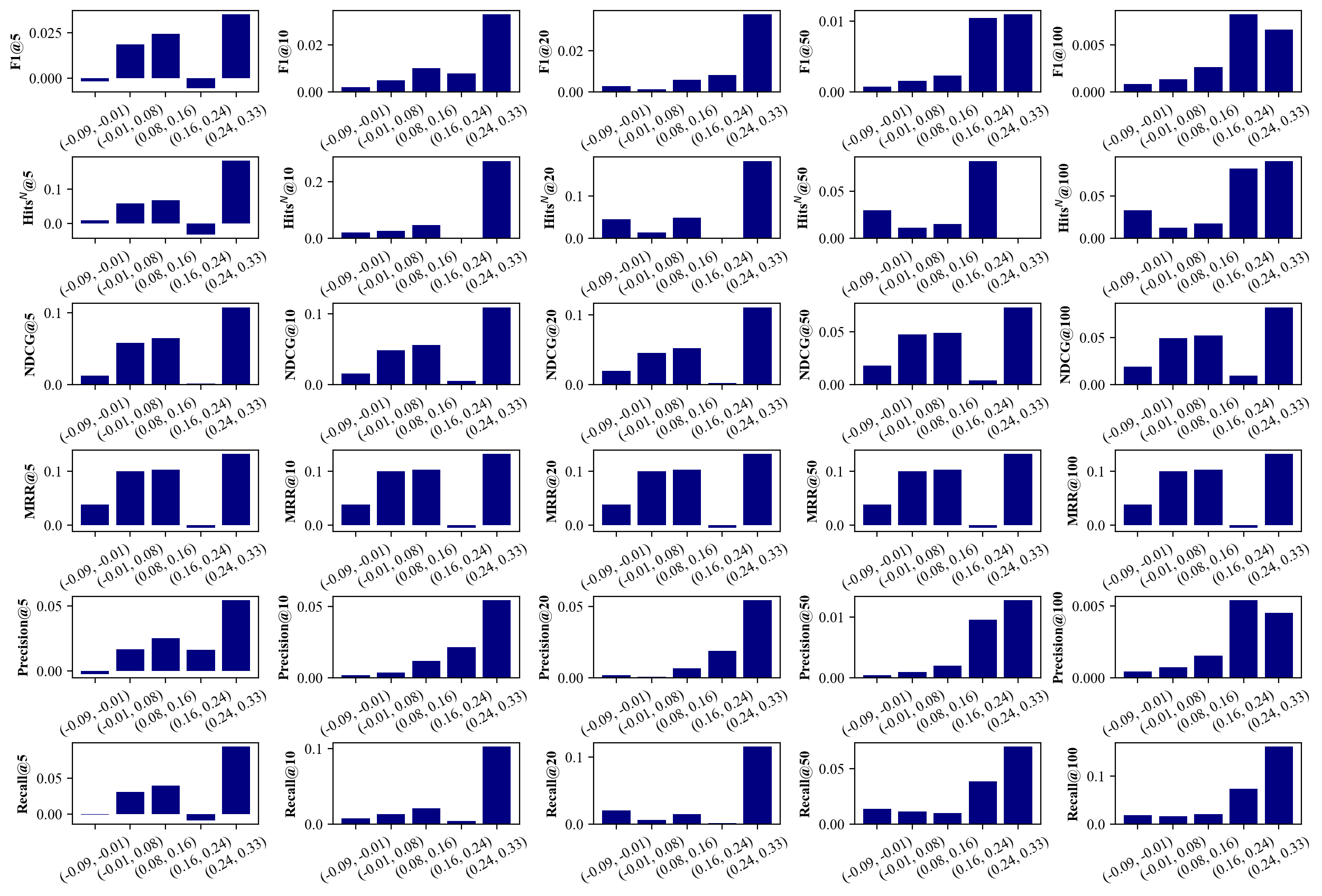}
     \caption{Relation between TC$^{\text{Tr}}$ and TC$^{\text{Te}}$ on Collab}
     \label{fig-collab-aug-diff-SAGE}
\end{figure}

\clearpage
\begin{table}[htbp!]
\centering
\setlength\tabcolsep{2.5pt}
\caption{The correlation between TC$^{\text{Tr}}$/TC$^{\text{Val}}$/TC$^{\text{Te}}$/Degree$^{\text{Tr}}$ and the GCN's LP performance on \textbf{Collab}. We note that the formal definitions of Absolute Avg. and Basic Avg. are provided in Section~\label{app-corr} and they represent the average absolute and simple average correlation, respectively, across the range of @K for the given metric; these are also then calculated overall.}
\label{tab-corr-collab}
\begin{tabular}{llcccc|cc}
\Xhline{2\arrayrulewidth}
 & Metric     & @5 & @10 & @20 & @50 & Absolute Avg. & Basic Avg.\\ 
 \Xhline{2\arrayrulewidth}
\multirow{6}{*}{TC$^\text{Tr}$}  
        &Precision & 0.2252 & 0.1925 & 0.1353 & 0.0578 & 0.1527 & 0.1527  \\ 
        &F1 & 0.2601 & 0.2364 & 0.1733 & 0.0790 & 0.1872 & 0.1872  \\ 
        &NDCG & 0.2279 & 0.2427 & 0.2375 & 0.2206 & 0.2322 & 0.2322  \\ 
        &Recall & 0.2296 & 0.2358 & 0.2156 & 0.1754 & 0.2141 & 0.2141  \\ 
        &Hits$^{N}$ & 0.2057 & 0.1800 & 0.1328 & 0.0717 & 0.1476 & 0.1476  \\ 
  & MRR        & \multicolumn{4}{c|}{0.2044} & 0.2044 & 0.2044 \\ 
  \cline{2-8}
  \multicolumn{6}{c}{} & 0.1867 &0.1867   \\
\Xhline{2\arrayrulewidth}
\multirow{6}{*}{TC$^\text{Val}$} 
        &Precision & 0.2573 & 0.2832 & 0.2788 & 0.2387 & 0.2645 & 0.2645  \\ 
        &F1 & 0.2425 & 0.2901 & 0.2991 & 0.2641 & 0.2740 & 0.2740  \\ 
        &NDCG & 0.2066 & 0.2330 & 0.2521 & 0.2624 & 0.2385 & 0.2385  \\ 
        &Recall & 0.1742 & 0.2179 & 0.2428 & 0.2514 & 0.2216 & 0.2216  \\ 
        &Hits$^{N}$ & 0.2445 & 0.2674 & 0.2720 & 0.2620 & 0.2615 & 0.2615 \\ 
  & MRR        & \multicolumn{4}{c|}{0.2350} & 0.2350 & 0.2350 \\ 
  \cline{2-8}
  \multicolumn{6}{c}{}&0.2520 & 0.2520  \\
\Xhline{2\arrayrulewidth}
\multirow{6}{*}{TC$^\text{Te}$}   
& Precision & 0.5184 & 0.5437 & 0.5107 & 0.4127 & 0.4964 & 0.4964  \\ 
        &F1 & 0.5858 & 0.6311 & 0.5964 & 0.4799 & 0.5733 & 0.5733  \\ 
        &NDCG & 0.5443 & 0.6282 & 0.6706 & 0.6902 & 0.6333 & 0.6333  \\ 
        &Recall & 0.5644 & 0.6753 & 0.7324 & 0.7533 & 0.6814 & 0.6814  \\ 
        &Hits$^{N}$ & 0.5272 & 0.5816 & 0.5924 & 0.5720 & 0.5683 & 0.5683 \\  
  & MRR        & \multicolumn{4}{c|}{0.5085} & 0.5085 & 0.5085 \\ 
  \cline{2-8}
  \multicolumn{6}{c}{} & 0.5905 & 0.5905  \\
\Xhline{2\arrayrulewidth}
\multirow{6}{*}{Degree$^\text{Tr}$} 
&Precision & -0.1261 & -0.0829 & 0.0006 & 0.1440 & 0.0884 & -0.0161  \\ 
        &F1 & -0.1997 & -0.1663 & -0.0813 & 0.0812 & 0.1321 & -0.0915  \\ 
        &NDCG & -0.1822 & -0.2017 & -0.1985 & -0.1750 & 0.1894 & -0.1894  \\ 
        &Recall & -0.2183 & -0.2288 & -0.2118 & -0.1681 & 0.2068 & -0.2068  \\ 
        &Hits$^{N}$ & -0.1395 & -0.1164 & -0.0658 & 0.0055 & 0.0818 & -0.0791 \\ 
  & MRR & \multicolumn{4}{c|}{-0.1349} & -0.1349 & -0.1349 \\ 
  \cline{2-8}
  \multicolumn{6}{c}{} & 0.1397 & -0.1166  \\
\Xhline{2\arrayrulewidth}
\multirow{6}{*}{Degree$^\text{Val}$} 
 &Precision & 0.0047 & 0.0472 & 0.1117 & 0.2141 & 0.0944 & 0.0944  \\ 
       & F1 & -0.0823 & -0.0469 & 0.0200 & 0.1416 & 0.0727 & 0.0081  \\ 
      &  NDCG & -0.0608 & -0.0803 & -0.0838 & -0.0736 & 0.0746 & -0.0746  \\ 
       & Recall & -0.1203 & -0.1296 & -0.1269 & -0.1100 & 0.1217 & -0.1217  \\ 
      &  Hits$^{N}$ & -0.0063 & 0.0171 & 0.0481 & 0.0848 & 0.0391 & 0.0359  \\ 
  & MRR & \multicolumn{4}{c|}{-0.0108} & -0.0108 & -0.0108 \\ 
  \cline{2-8}
  \multicolumn{6}{c}{} & 0.0805 & -0.0116\\
\Xhline{2\arrayrulewidth}
\multirow{6}{*}{Degree$^\text{Te}$} 
&Precision & 0.1075 & 0.1833 & 0.2924 & 0.4617 & 0.2612 & 0.2612  \\ 
        &F1 & -0.0669 & 0.0043 & 0.1249 & 0.3375 & 0.1334 & 0.1000  \\ 
        &NDCG & -0.034 & -0.0723 & -0.0814 & -0.0668 & 0.0636 & -0.0636  \\ 
       & Recall & -0.1678 & -0.1856 & -0.187 & -0.1724 & 0.1782 & -0.1782  \\ 
       & Hits$^{N}$ & 0.0785 & 0.1103 & 0.1407 & 0.1718 & 0.1253 & 0.1253  \\ 
  & MRR & \multicolumn{4}{c|}{0.0727} & 0.0727 & 0.0727 \\ 
  \cline{2-8}
  \multicolumn{6}{c}{} & 0.1524 &0.0489 \\
\Xhline{2\arrayrulewidth}
\multirow{6}{*}{Subgraph Density} 
  &Precision & 0.2199 & 0.1646 & 0.0875 & -0.0073 & 0.1198 & 0.1162  \\ 
     &   F1 & 0.2806 & 0.2259 & 0.1353 & 0.0161 & 0.1645 & 0.1645  \\ 
      &  NDCG & 0.2811 & 0.2891 & 0.2748 & 0.2491 & 0.2735 & 0.2735  \\ 
      &  Recall & 0.2911 & 0.2783 & 0.2399 & 0.1834 & 0.2482 & 0.2482  \\ 
   &     Hits$^{N}$ & 0.2265 & 0.1842 & 0.1196 & 0.0423 & 0.1432 & 0.1432  \\ 
  & MRR & \multicolumn{4}{c|}{0.2331} & 0.2331 & 0.2331\\ 
  \cline{2-8}
  \multicolumn{6}{c}{} &0.1898  &0.1891 \\
\Xhline{2\arrayrulewidth}
\end{tabular}
\end{table}

\clearpage
\begin{table}[htbp!]
\centering
\setlength\tabcolsep{2.5pt}
\caption{The correlation between TC$^{\text{Tr}}$/TC$^{\text{Val}}$/TC$^{\text{Te}}$/Degree$^{\text{Tr}}$ and the GCN's LP performance on \textbf{Citation2}. We note that the formal definitions of Absolute Avg. and Basic Avg. are provided in Section~\label{app-corr} and they represent the average absolute and simple average correlation, respectively, across the range of @K for the given metric; these are also then calculated overall.}
\label{tab-corr-citation2}
\begin{tabular}{llcccc|cc}
\Xhline{2\arrayrulewidth}
 & Metric & @5 & @10 & @20 & @50 & Absolute Avg. & Basic Avg.\\ 
 \Xhline{2\arrayrulewidth}
\multirow{6}{*}{TC$^\text{Tr}$}  
  & Precision  & 0.0839 & 0.1312 & 0.1784 & 0.2157 & 0.1523 & 0.1523\\
  & F1 & 0.0849 & 0.1323 & 0.1795 & 0.2165 & 0.1533 &  0.1533\\
  & NDCG & 0.0773 & 0.1164 & 0.1585 & 0.2012 & 0.1384 & 0.1384\\
  & Recall & 0.0860 & 0.1346 & 0.1845 & 0.2265 & 0.1579 & 0.1579\\
  & Hits$^{N}$ & 0.0840 & 0.1314 & 0.1791 & 0.2182 & 0.1532 & 0.1532\\
  & MRR & \multicolumn{4}{c|}{0.1229} & 0.1229 & 0.1229 \\ 
\cline{2-8}
\multicolumn{6}{c}{} & 0.1510 & 0.1510 \\
\Xhline{2\arrayrulewidth}
\multirow{6}{*}{TC$^\text{Val}$}  
  & Precision  & 0.0575 & 0.0868 & 0.1200 & 0.1479 & 0.1031 & 0.1031\\
  & F1 & 0.0581 & 0.0874 & 0.1206 & 0.1484 & 0.1036 & 0.1036 \\
  & NDCG & 0.0545 & 0.0790 & 0.1078 & 0.1377 & 0.0948 & 0.0948\\
  & Recall & 0.0586 & 0.0884 & 0.1231 & 0.1540 & 0.1060 & 0.1060\\
  & Hits$^{N}$ & 0.0574 & 0.0870 & 0.1206 & 0.1500 & 0.1038 & 0.1038\\
  & MRR & \multicolumn{4}{c|}{0.0846} & 0.0846 & 0.0846 \\ 
\cline{2-8}
\multicolumn{6}{c}{} & 0.1022 & 0.1022\\
\Xhline{2\arrayrulewidth}
\multirow{6}{*}{TC$^\text{Te}$}  
  & Precision  & 0.1797 & 0.2541 & 0.3313 & 0.3996 & 0.2912 & 0.2912\\
  & F1 & 0.1812 & 0.2558 & 0.3328 & 0.4008 & 0.2927 & 0.2927 \\
  & NDCG & 0.1706 & 0.2365 & 0.3071 & 0.3825 & 0.2742 & 0.2742\\
  & Recall & 0.1829 & 0.2599 & 0.3401 & 0.4141 & 0.2993 & 0.2993\\
  & Hits$^{N}$ & 0.1797 & 0.2550 & 0.3331 & 0.4048 & 0.2932 & 0.2932\\
  & MRR & \multicolumn{4}{c|}{0.2512} & 0.2512 & 0.2512 \\ 
\cline{2-8}
\multicolumn{6}{c}{} & 0.2901 & 0.2901 \\
\Xhline{2\arrayrulewidth}
\multirow{6}{*}{Degree$^\text{Tr}$}  
  & Precision  & -0.0288 & -0.0406 & -0.0536 & -0.0689 & 0.0480 & -0.0480\\
  & F1 & -0.0295 & -0.0415 & -0.0546 & -0.0699 & 0.0489 & -0.0489 \\
  & NDCG & -0.0285 & -0.0394 & -0.0522 & -0.0692 & 0.0473 & -0.0473\\
  & Recall & -0.0305 & -0.0436 & -0.0589 & -0.0791 & 0.0530 & -0.0530\\
  & Hits$^{N}$ & -0.0289 & -0.0408 & -0.0540 & -0.0708 & 0.0486 & -0.0486\\
  & MRR & \multicolumn{4}{c|}{-0.0421} & -0.0421 & -0.0421 \\ 
\cline{2-8}
\multicolumn{6}{c}{} & 0.0492 & 0.0492 \\
\Xhline{2\arrayrulewidth}
\multirow{6}{*}{Degree$^\text{Val}$}  
  & Precision  & 0.0161 & 0.0229 & 0.0300 & 0.0393 & 0.0271 & 0.0271\\
  & F1 & 0.0156 & 0.0220 & 0.0289 & 0.0381 & 0.0262 & 0.0262 \\
  & NDCG & 0.0150 & 0.0199 & 0.0248 & 0.0305 & 0.0226 & 0.0226\\
  & Recall & 0.0150 & 0.0203 & 0.0252 & 0.0301 & 0.0227 & 0.0227\\
  & Hits$^{N}$ & 0.0161 & 0.0232 & 0.0300 & 0.0384 & 0.0269 & 0.0269\\
  & MRR & \multicolumn{4}{c|}{0.0234} & 0.0234 & 0.0234 \\ 
\cline{2-8}
\multicolumn{6}{c}{} & 0.0251 & 0.0251\\
\Xhline{2\arrayrulewidth}
\multirow{6}{*}{Degree$^\text{Te}$}  
  & Precision  & 0.0060 & 0.0113 & 0.0190 & 0.0364 & 0.0182 & 0.0182\\
  & F1 & -0.0009 & 0.0047 & 0.0128 & 0.0314 & 0.0125 & 0.0120 \\
  & NDCG & -0.0086 & -0.0113 & -0.0147 & -0.0185 & 0.0133 & -0.0133\\
  & Recall & -0.0135 & -0.0185 & -0.0251 & -0.0344 & 0.0229 & -0.0229\\
  & Hits$^{N}$ & 0.0051 & 0.0081 & 0.0120 & 0.0159 & 0.0103 & 0.0103\\
  & MRR & \multicolumn{4}{c|}{0.0104} & 0.0104 & 0.0104 \\ 
\cline{2-8}
\multicolumn{6}{c}{} & 0.0154 & 0.0009\\
\Xhline{2\arrayrulewidth}
\multirow{6}{*}{Subgraph Density} & Precision & 0.0809 & 0.1217 & 0.1607 & 0.1916 & 0.1387 &0.1387\\
  & F1 & 0.0823 & 0.1231 & 0.1621 & 0.1926 & 0.1400 & 0.1400\\
  & NDCG & 0.0761 & 0.1111 & 0.1476 & 0.1853 &0.1300 &0.1300\\
  & Recall & 0.0842 & 0.1268 & 0.1691 & 0.2063 & 0.1466 & 0.1466\\
  & Hits$^{N}$ & 0.0811 & 0.1219 & 0.1618 & 0.1956 & 0.1401 & 0.1401\\
  & MRR & \multicolumn{4}{c|}{0.1144} & 0.1144 & 0.1144 \\ 
  \cline{2-8}
  \multicolumn{6}{c}{} &0.1391  &0.1391 \\
\Xhline{2\arrayrulewidth}
\end{tabular}
\end{table}

\clearpage
\begin{table}[htbp!]
\centering
\setlength\tabcolsep{2.5pt}
\caption{The correlation between TC$^{\text{Tr}}$/TC$^{\text{Val}}$/TC$^{\text{Te}}$/Degree$^{\text{Tr}}$ and the GCN's LP performance on \textbf{Cora}. We note that the formal definitions of Absolute Avg. and Basic Avg. are provided in Section~\label{app-corr} and they represent the average absolute and simple average correlation, respectively, across the range of @K for the given metric; these are also then calculated overall.}
\label{tab-corr-cora}
\begin{tabular}{llcccc|cc}
\Xhline{2\arrayrulewidth}
 & Metric & @5 & @10 & @20 & @50 & Absolute Avg. & Basic Avg.\\ 
\Xhline{2\arrayrulewidth}
\multirow{6}{*}{TC$^\text{Tr}$}  
 & Precision & 0.0985 & 0.1046 & 0.1238 & 0.1571 & 0.1210 & 0.1210  \\ 
  &      F1 & 0.0989 & 0.1042 & 0.1239 & 0.1597 & 0.1217 & 0.1217  \\ 
   &     NDCG & 0.0933 & 0.0990 & 0.1088 & 0.1306 & 0.1079 & 0.1079  \\ 
    &    Recall & 0.1020 & 0.1042 & 0.1162 & 0.1568 & 0.1198 & 0.1198  \\ 
     &   Hits$^{N}$ & 0.0961 & 0.1000 & 0.1226 & 0.1617 & 0.1201 & 0.1201 \\ 
  & MRR & \multicolumn{4}{c|}{0.0869} & 0.0869 & 0.0869 \\ 
\cline{2-8}
\multicolumn{6}{c}{} & 0.1181 &  0.1181\\
\Xhline{2\arrayrulewidth}
\multirow{6}{*}{TC$^\text{Val}$}  
   &Precision & 0.0342 & 0.0456 & 0.0840 & 0.0903 & 0.0635 & 0.0635  \\ 
        &F1 & 0.0296 & 0.0406 & 0.0820 & 0.0907 & 0.0607 & 0.0607  \\ 
        &NDCG & 0.0215 & 0.0259 & 0.0446 & 0.0526 & 0.0362 & 0.0362  \\ 
        &Recall & 0.0257 & 0.0322 & 0.0724 & 0.0841 & 0.0536 & 0.0536  \\ 
        &Hits$^N$ & 0.0331 & 0.0413 & 0.0742 & 0.0932 & 0.0605 & 0.0605  \\ 
  & MRR & \multicolumn{4}{c|}{0.0291} & 0.0291 & 0.0291 \\ 
\cline{2-8}
\multicolumn{6}{c}{} & 0.0549 & 0.0549\\
\Xhline{2\arrayrulewidth}
\multirow{6}{*}{TC$^\text{Te}$}  
 &Precision & 0.4694 & 0.4702 & 0.4667 & 0.3977 & 0.4510 & 0.4510  \\ 
        &F1 & 0.4952 & 0.4964 & 0.4948 & 0.4216 & 0.4770 & 0.4770  \\ 
        &NDCG & 0.4970 & 0.5239 & 0.5551 & 0.5759 & 0.5380 & 0.5380  \\ 
        &Recall & 0.4941 & 0.5109 & 0.5448 & 0.5347 & 0.5211 & 0.5211  \\ 
        &Hits$^N$ & 0.4749 & 0.4909 & 0.5130 & 0.4866 & 0.4914 & 0.4914 \\ 
  & MRR & \multicolumn{4}{c|}{0.4920} & 0.4920 & 0.4920 \\ 
\cline{2-8}
\multicolumn{6}{c}{} &  0.4957 & 0.4957\\
\Xhline{2\arrayrulewidth}
\multirow{6}{*}{Degree$^\text{Tr}$}  
           &Precision & 0.0751 & 0.0970 & 0.1701 & 0.3268 & 0.1673 & 0.1673  \\ 
        &F1 & -0.0039 & 0.0237 & 0.0938 & 0.2549 & 0.0941 & 0.0921  \\ 
        &NDCG & -0.0156 & -0.0276 & -0.0283 & -0.0191 & 0.0227 & -0.0227  \\ 
        &Recall & -0.0432 & -0.0547 & -0.0568 & -0.0529 & 0.0519 & -0.0519  \\ 
        &Hits$^N$ & 0.0656 & 0.0650 & 0.0862 & 0.1135 & 0.0826 & 0.0826 \\ 
  & MRR & \multicolumn{4}{c|}{0.0307} & 0.0307 & 0.0307 \\ 
\cline{2-8}
\multicolumn{6}{c}{} & 0.0837  &0.0837  \\
\Xhline{2\arrayrulewidth}
\multirow{6}{*}{Degree$^\text{Val}$}  
  &Precision & 0.0433 & 0.0623 & 0.1138 & 0.2248 & 0.1111 & 0.1111  \\ 
        &F1 & -0.0230 & 0.0012 & 0.0508 & 0.1634 & 0.0596 & 0.0481  \\ 
        &NDCG & -0.0235 & -0.0336 & -0.0369 & -0.0361 & 0.0325 & -0.0325  \\ 
        &Recall & -0.0570 & -0.0648 & -0.0689 & -0.0784 & 0.0673 & -0.0673  \\ 
        &Hits$^N$ & 0.0253 & 0.0222 & 0.0308 & 0.0431 & 0.0304 & 0.0304 \\ 
  & MRR & \multicolumn{4}{c|}{0.0144} & 0.0144 & 0.0144 \\ 
\cline{2-8}
\multicolumn{6}{c}{} & 0.0602 &0.0179 \\
\Xhline{2\arrayrulewidth}
\multirow{6}{*}{Degree$^\text{Te}$}  
  &Precision & 0.1669 & 0.2104 & 0.3046 & 0.4890 & 0.2927 & 0.2927  \\ 
        &F1 & 0.0537 & 0.1111 & 0.2127 & 0.4149 & 0.1981 & 0.1981  \\ 
        &NDCG & 0.0004 & -0.0104 & -0.0082 & 0.0060 & 0.0063 & -0.0031  \\ 
        &Recall & -0.0599 & -0.0702 & -0.0760 & -0.0781 & 0.0711 & -0.0711  \\ 
        &Hits$^N$ & 0.1406 & 0.1487 & 0.1624 & 0.1865 & 0.1596 & 0.1596 \\ 
  & MRR & \multicolumn{4}{c|}{0.1116} & 0.1116 & 0.1116 \\ 
\cline{2-8}
\multicolumn{6}{c}{}& 0.1455 &0.1153 \\
\Xhline{2\arrayrulewidth}
\multirow{6}{*}{Subgraph Density}  &Precision & 0.0794 & 0.0900 & 0.0796 & 0.0381 & 0.0718 & 0.0718  \\ 
        &F1 & 0.1088 & 0.1189 & 0.1066 & 0.0580 & 0.0981 & 0.0981  \\ 
        &NDCG & 0.1157 & 0.1378 & 0.1543 & 0.1674 & 0.1438 & 0.1438  \\ 
        &Recall & 0.1330 & 0.1690 & 0.2015 & 0.2272 & 0.1827 & 0.1827  \\ 
        &Hits$^N$ & 0.0851 & 0.1109 & 0.1257 & 0.1385 & 0.1151 & 0.1151 \\ 
  & MRR & \multicolumn{4}{c|}{0.0976} & 0.0976 & 0.0976 \\ 
  \cline{2-8}
  \multicolumn{6}{c}{} & 0.1223 &0.1223 \\
\Xhline{2\arrayrulewidth}
\end{tabular}
\end{table}

\clearpage
\begin{table}[htbp!]
\centering
\setlength\tabcolsep{2.5pt}
\caption{The correlation between TC$^{\text{Tr}}$/TC$^{\text{Val}}$/TC$^{\text{Te}}$/Degree$^{\text{Tr}}$ and the GCN's LP performance on \textbf{Citeseer}. We note that the formal definitions of Absolute Avg. and Basic Avg. are provided in Section~\label{app-corr} and they represent the average absolute and simple average correlation, respectively, across the range of @K for the given metric; these are also then calculated overall.}
\label{tab-corr-citeseer}
\begin{tabular}{llcccc|cc}
\Xhline{2\arrayrulewidth}
 & Metric & @5 & @10 & @20 & @50 & Absolute Avg. & Basic Avg.\\ 
 \Xhline{2\arrayrulewidth}
\multirow{6}{*}{TC$^\text{Tr}$}  
  &Precision & 0.3330 & 0.3735 & 0.3898 & 0.3830 & 0.3698 & 0.3698  \\ 
        &F1 & 0.3324 & 0.3803 & 0.4056 & 0.4049 & 0.3808 & 0.3808  \\ 
        &NDCG & 0.2831 & 0.3226 & 0.3570 & 0.3879 & 0.3377 & 0.3377  \\ 
        &Recall & 0.3001 & 0.3481 & 0.3920 & 0.4295 & 0.3674 & 0.3674  \\ 
        &Hits$^N$ & 0.3386 & 0.3901 & 0.4287 & 0.4603 & 0.4044 & 0.4044 \\ 
  & MRR & \multicolumn{4}{c|}{0.3194} & 0.3194 & 0.3194 \\ 
\cline{2-8}
\multicolumn{6}{c}{} & 0.3720 &0.3720 \\
\Xhline{2\arrayrulewidth}
\multirow{6}{*}{TC$^\text{Val}$}  
  &Precision & 0.2796 & 0.2962 & 0.3224 & 0.3229 & 0.3053 & 0.3053  \\ 
        &F1 & 0.2756 & 0.2947 & 0.3291 & 0.3365 & 0.3090 & 0.3090  \\ 
        &NDCG & 0.2508 & 0.2662 & 0.2929 & 0.3118 & 0.2804 & 0.2804  \\ 
        &Recall & 0.2491 & 0.2585 & 0.2928 & 0.3086 & 0.2773 & 0.2773  \\ 
        &Hits$^N$ & 0.2801 & 0.3049 & 0.338 & 0.3496 & 0.3182 & 0.3182 \\ 
  & MRR & \multicolumn{4}{c|}{0.2763} & 0.2763 & 0.2763 \\ 
\cline{2-8}
\multicolumn{6}{c}{} & 0.2980 & 0.2980\\
\Xhline{2\arrayrulewidth}
\multirow{6}{*}{TC$^\text{Te}$}  
  &Precision & 0.6786 & 0.698 & 0.6745 & 0.6220 & 0.6683 & 0.6683  \\ 
        &F1 & 0.7157 & 0.7385 & 0.7207 & 0.6678 & 0.7107 & 0.7107  \\ 
        &NDCG & 0.7037 & 0.7540 & 0.7946 & 0.8300 & 0.7706 & 0.7706  \\ 
        &Recall & 0.7299 & 0.7797 & 0.8258 & 0.8588 & 0.7986 & 0.7986  \\ 
        &Hits$^N$ & 0.7127 & 0.7595 & 0.7979 & 0.8216 & 0.7729 & 0.7729 \\ 
  & MRR & \multicolumn{4}{c|}{0.7070} & 0.7070 & 0.7070 \\ 
\cline{2-8}
\multicolumn{6}{c}{} & 0.7442 & 0.7442  \\
\Xhline{2\arrayrulewidth}
\multirow{6}{*}{Degree$^\text{Tr}$}  
   &Precision & 0.2472 & 0.3523 & 0.4591 & 0.5861 & 0.4112 & 0.4112  \\ 
        &F1 & 0.1867 & 0.2727 & 0.3872 & 0.5408 & 0.3469 & 0.3469  \\ 
        &NDCG & 0.1303 & 0.1645 & 0.2022 & 0.2475 & 0.1861 & 0.1861  \\ 
        &Recall & 0.1144 & 0.1532 & 0.2047 & 0.2591 & 0.1829 & 0.1829  \\ 
        &Hits$^N$ & 0.2538 & 0.3181 & 0.3581 & 0.3886 & 0.3297 & 0.3297 \\ 
  & MRR & \multicolumn{4}{c|}{0.2227} & 0.2227 & 0.2227 \\ 
\cline{2-8}
\multicolumn{6}{c}{} &0.2913  & 0.2913 \\
\Xhline{2\arrayrulewidth}
\multirow{6}{*}{Degree$^\text{Val}$}  
    &Precision & 0.1431 & 0.1866 & 0.2255 & 0.277 & 0.2081 & 0.2081  \\ 
        &F1 & 0.1147 & 0.1582 & 0.2053 & 0.2693 & 0.1869 & 0.1869  \\ 
        &NDCG & 0.0845 & 0.1014 & 0.1194 & 0.1429 & 0.1121 & 0.1121  \\ 
        &Recall & 0.0693 & 0.0880 & 0.1113 & 0.1411 & 0.1024 & 0.1024  \\ 
        &Hits$^N$ & 0.1438 & 0.1683 & 0.1857 & 0.2148 & 0.1782 & 0.1782 \\ 
  & MRR & \multicolumn{4}{c|}{0.1366} & 0.1366 & 0.1366 \\ 
\cline{2-8}
\multicolumn{6}{c}{} & 0.1575 & 0.1575\\
\Xhline{2\arrayrulewidth}
\multirow{6}{*}{Degree$^\text{Te}$}  
   &Precision & 0.3052 & 0.4412 & 0.5704 & 0.7223 & 0.5098 & 0.5098  \\ 
        &F1 & 0.1919 & 0.3133 & 0.4639 & 0.6597 & 0.4072 & 0.4072  \\ 
        &NDCG & 0.0949 & 0.1220 & 0.1548 & 0.1975 & 0.1423 & 0.1423  \\ 
        &Recall & 0.0323 & 0.0562 & 0.0909 & 0.1314 & 0.0777 & 0.0777  \\ 
        &Hits$^N$ & 0.2745 & 0.3258 & 0.3378 & 0.3369 & 0.3188 & 0.3188 \\ 
  & MRR & \multicolumn{4}{c|}{0.2444} & 0.2444 & 0.2444 \\ 
\cline{2-8}
\multicolumn{6}{c}{} &0.2911  &0.2911 \\
\Xhline{2\arrayrulewidth}
\multirow{6}{*}{Subgraph Density} 
&Precision & 0.1559 & 0.1412 & 0.1168 & 0.0858 & 0.1249 & 0.1249  \\ 
        &F1 & 0.1867 & 0.1699 & 0.1420 & 0.1035 & 0.1505 & 0.1505  \\ 
        &NDCG & 0.2006 & 0.2097 & 0.2176 & 0.2235 & 0.2129 & 0.2129  \\ 
        &Recall & 0.2218 & 0.2289 & 0.2411 & 0.2491 & 0.2352 & 0.2352  \\ 
        &Hits$^N$ & 0.1768 & 0.1799 & 0.1982 & 0.2097 & 0.1912 & 0.1912 \\ 
  & MRR & \multicolumn{4}{c|}{0.1759} & 0.1759 & 0.1759 \\ 
  \cline{2-8}
  \multicolumn{6}{c}{} & 0.1829 & 0.1829\\
\Xhline{2\arrayrulewidth}
\end{tabular}
\end{table}

\clearpage
\begin{table}[htbp!]
\centering
\setlength\tabcolsep{2.5pt}
\caption{The correlation between TC$^{\text{Tr}}$/TC$^{\text{Val}}$/TC$^{\text{Te}}$/Degree$^{\text{Tr}}$ and the GCN's LP performance on \textbf{Pubmed}. We note that the formal definitions of Absolute Avg. and Basic Avg. are provided in Section~\label{app-corr} and they represent the average absolute and simple average correlation, respectively, across the range of @K for the given metric; these are also then calculated overall.}
\label{tab-corr-pubmed}
\begin{tabular}{llcccc|cc}
\Xhline{2\arrayrulewidth}
 & Metric & @5 & @10 & @20 & @50 & Absolute Avg. & Basic Avg.\\ 
\Xhline{2\arrayrulewidth}
\multirow{6}{*}{TC$^\text{Tr}$}  
 &Precision & 0.1981 & 0.2358 & 0.2681 & 0.2924 & 0.2486 & 0.2486  \\ 
        &F1 & 0.1690 & 0.2216 & 0.2652 & 0.2961 & 0.2380 & 0.2380  \\ 
        &NDCG & 0.1195 & 0.1379 & 0.1600 & 0.1831 & 0.1501 & 0.1501  \\ 
        &Recall & 0.0917 & 0.1142 & 0.1336 & 0.1397 & 0.1198 & 0.1198  \\ 
        &Hits$^N$ & 0.1932 & 0.2267 & 0.2485 & 0.2513 & 0.2299 & 0.2299 \\ 
  & MRR & \multicolumn{4}{c|}{0.1920} & 0.1920 & 0.1920 \\ 
\cline{2-8}
\multicolumn{6}{c}{} & 0.1973 & 0.1973  \\
\Xhline{2\arrayrulewidth}
\multirow{6}{*}{TC$^\text{Val}$}  
 &Precision & 0.1769 & 0.2180 & 0.2653 & 0.3134 & 0.2434 & 0.2434  \\ 
        &F1 & 0.1253 & 0.1815 & 0.2462 & 0.3092 & 0.2156 & 0.2156  \\ 
        &NDCG & 0.0780 & 0.0846 & 0.1046 & 0.1303 & 0.0994 & 0.0994  \\ 
        &Recall & 0.0417 & 0.0503 & 0.0672 & 0.0804 & 0.0599 & 0.0599  \\ 
        &Hits$^N$ & 0.1627 & 0.1882 & 0.2068 & 0.2077 & 0.1914 & 0.1914 \\ 
  & MRR & \multicolumn{4}{c|}{0.1607} & 0.1607 & 0.1607 \\ 
\cline{2-8}
\multicolumn{6}{c}{} & 0.1619 & 0.1619 \\
\Xhline{2\arrayrulewidth}
\multirow{6}{*}{TC$^\text{Te}$}  
&Precision & 0.3769 & 0.3989 & 0.4078 & 0.3909 & 0.3936 & 0.3936  \\ 
        &F1 & 0.4011 & 0.4258 & 0.4329 & 0.4088 & 0.4172 & 0.4172  \\ 
        &NDCG & 0.3902 & 0.4231 & 0.4547 & 0.4870 & 0.4388 & 0.4388  \\ 
        &Recall & 0.3809 & 0.4080 & 0.4286 & 0.4335 & 0.4128 & 0.4128  \\ 
        &Hits$^N$ & 0.3923 & 0.4247 & 0.4463 & 0.4436 & 0.4267 & 0.4267 \\ 
  & MRR & \multicolumn{4}{c|}{0.4097} & 0.4097 & 0.4097 \\ 
\cline{2-8}
\multicolumn{6}{c}{} &  0.4178 & 0.4178  \\
\Xhline{2\arrayrulewidth}
\multirow{6}{*}{Degree$^\text{Tr}$}  
    &Precision & 0.2433 & 0.3108 & 0.3761 & 0.4849 & 0.3538 & 0.3538  \\ 
        &F1 & 0.1019 & 0.1970 & 0.2987 & 0.4456 & 0.2608 & 0.2608  \\ 
        &NDCG & 0.0477 & 0.0366 & 0.0441 & 0.0715 & 0.0500 & 0.0500  \\ 
        &Recall & -0.0402 & -0.0386 & -0.0385 & -0.0357 & 0.0383 & -0.0383  \\ 
        &Hits$^N$ & 0.2080 & 0.2404 & 0.2504 & 0.2612 & 0.2400 & 0.2400 \\ 
  & MRR & \multicolumn{4}{c|}{0.2051} & 0.2051 & 0.2051 \\ 
\cline{2-8}
\multicolumn{6}{c}{} & 0.1886 & 0.1733 \\
\Xhline{2\arrayrulewidth}
\multirow{6}{*}{Degree$^\text{Val}$}  
     &Precision & 0.1823 & 0.2290 & 0.2849 & 0.3681 & 0.2661 & 0.2661  \\ 
        &F1 & 0.0676 & 0.1368 & 0.2220 & 0.3359 & 0.1906 & 0.1906  \\ 
        &NDCG & 0.0293 & 0.0164 & 0.0221 & 0.0407 & 0.0271 & 0.0271  \\ 
        &Recall & -0.0429 & -0.0466 & -0.0459 & -0.0476 & 0.0458 & -0.0458  \\ 
        &Hits$^N$ & 0.1536 & 0.1749 & 0.1831 & 0.1872 & 0.1747 & 0.1747 \\ 
  & MRR & \multicolumn{4}{c|}{0.1573} & 0.1573 & 0.1573 \\ 
\cline{2-8}
\multicolumn{6}{c}{} &0.1408  & 0.1225 \\
\Xhline{2\arrayrulewidth}
\multirow{6}{*}{Degree$^\text{Te}$}  
  &Precision & 0.3073 & 0.3898 & 0.4719 & 0.6133 & 0.4456 & 0.4456  \\ 
        &F1 & 0.1251 & 0.2423 & 0.3716 & 0.5624 & 0.3254 & 0.3254  \\ 
        &NDCG & 0.0588 & 0.0406 & 0.0480 & 0.0821 & 0.0574 & 0.0574  \\ 
        &Recall & -0.0537 & -0.0565 & -0.0605 & -0.0575 & 0.0571 & -0.0571  \\ 
        &Hits$^N$ & 0.2615 & 0.2966 & 0.3030 & 0.3099 & 0.2928 & 0.2928 \\ 
  & MRR & \multicolumn{4}{c|}{0.2556} & 0.2556 & 0.2556 \\ 
\cline{2-8}
\multicolumn{6}{c}{} &0.2356&0.2128 \\
\Xhline{2\arrayrulewidth}
\multirow{6}{*}{Subgraph Density}
    &Precision & 0.1002 & 0.0746 & 0.0414 & -0.0146 & 0.0577 & 0.0504  \\ 
        &F1 & 0.1732 & 0.1319 & 0.0792 & 0.0030 & 0.0968 & 0.0968  \\ 
        &NDCG & 0.2146 & 0.2307 & 0.2357 & 0.2344 & 0.2289 & 0.2289  \\ 
        &Recall & 0.2475 & 0.2547 & 0.2540 & 0.2428 & 0.2498 & 0.2498  \\ 
        &Hits$^N$ & 0.1343 & 0.1330 & 0.1338 & 0.1288 & 0.1325 & 0.1325 \\ 
  & MRR & \multicolumn{4}{c|}{0.1430} & 0.1430 & 0.1430 \\ 
  \cline{2-8}
  \multicolumn{6}{c}{} & 0.1531 & 0.1517\\
\Xhline{2\arrayrulewidth}
\end{tabular}
\end{table}

\clearpage
\begin{table}[htbp!]
\centering
\setlength\tabcolsep{2.5pt}
\caption{The correlation between TC$^{\text{Tr}}$/TC$^{\text{Val}}$/TC$^{\text{Te}}$/Degree$^{\text{Tr}}$ and the GCN's LP performance on \textbf{Vole}. We note that the formal definitions of Absolute Avg. and Basic Avg. are provided in Section~\label{app-corr} and they represent the average absolute and simple average correlation, respectively, across the range of @K for the given metric; these are also then calculated overall.}
\label{tab-corr-vole}
\begin{tabular}{llcccc|cc}
\Xhline{2\arrayrulewidth}
 & Metric & @5 & @10 & @20 & @50 & Absolute Avg. & Basic Avg.\\ 
\Xhline{2\arrayrulewidth}
\multirow{6}{*}{TC$^\text{Tr}$}  
&Precision & 0.2725 & 0.2710 & 0.2648 & 0.2287 & 0.2593 & 0.2593  \\ 
        &F1 & 0.2985 & 0.2981 & 0.2869 & 0.2401 & 0.2809 & 0.2809  \\ 
        &NDCG & 0.2714 & 0.3012 & 0.3300 & 0.3497 & 0.3131 & 0.3131  \\ 
        &Recall & 0.2946 & 0.3267 & 0.3677 & 0.3917 & 0.3452 & 0.3452  \\ 
        &Hits$^N$ & 0.3113 & 0.3307 & 0.3694 & 0.3988 & 0.3526 & 0.3526 \\ 
  & MRR & \multicolumn{4}{c|}{0.2721} & 0.2721 & 0.2721 \\ 
\cline{2-8}
\multicolumn{6}{c}{} & 0.3102 & 0.3102  \\
\Xhline{2\arrayrulewidth}
\multirow{6}{*}{TC$^\text{Val}$}  
  &Precision & 0.1375 & 0.1717 & 0.1847 & 0.1721 & 0.1665 & 0.1665  \\ 
        &F1 & 0.1233 & 0.1690 & 0.1871 & 0.1739 & 0.1633 & 0.1633  \\ 
        &NDCG & 0.0931 & 0.1201 & 0.1403 & 0.1479 & 0.1254 & 0.1254  \\ 
        &Recall & 0.0825 & 0.1251 & 0.1570 & 0.1548 & 0.1299 & 0.1299  \\ 
        &Hits$^N$ & 0.1347 & 0.1558 & 0.1815 & 0.1814 & 0.1634 & 0.1634 \\ 
  & MRR & \multicolumn{4}{c|}{0.1219} & 0.1219 & 0.1219 \\ 
\cline{2-8}
\multicolumn{6}{c}{} & 0.1497 & 0.1497\\
\Xhline{2\arrayrulewidth}
\multirow{6}{*}{TC$^\text{Te}$}  
&Precision & 0.5547 & 0.4822 & 0.3937 & 0.2527 & 0.4208 & 0.4208  \\ 
        &F1 & 0.6498 & 0.5597 & 0.4449 & 0.2742 & 0.4822 & 0.4822  \\ 
        &NDCG & 0.7395 & 0.7712 & 0.7954 & 0.8030 & 0.7773 & 0.7773  \\ 
        &Recall & 0.7325 & 0.7384 & 0.7367 & 0.6812 & 0.7222 & 0.7222  \\ 
        &Hits$^N$ & 0.6470 & 0.6529 & 0.6452 & 0.6016 & 0.6367 & 0.6367 \\ 
  & MRR & \multicolumn{4}{c|}{0.6950} & 0.6950 & 0.6950 \\ 
\cline{2-8}
\multicolumn{6}{c}{} & 0.6078 & 0.6078 \\
\Xhline{2\arrayrulewidth}
\multirow{6}{*}{Degree$^\text{Tr}$}  
   &Precision & 0.2103 & 0.2728 & 0.3620 & 0.4508 & 0.3240 & 0.3240  \\ 
        &F1 & 0.1387 & 0.2253 & 0.3391 & 0.4508 & 0.2885 & 0.2885  \\ 
        &NDCG & 0.0180 & 0.0352 & 0.0760 & 0.1222 & 0.0629 & 0.0629  \\ 
        &Recall & 0.0238 & 0.0479 & 0.1111 & 0.1993 & 0.0955 & 0.0955  \\ 
        &Hits$^N$ & 0.1688 & 0.1977 & 0.2551 & 0.2989 & 0.2301 & 0.2301 \\ 
  & MRR & \multicolumn{4}{c|}{0.0512} & 0.0512 & 0.0512 \\ 
\cline{2-8}
\multicolumn{6}{c}{} & 0.2002 & 0.2002 \\
\Xhline{2\arrayrulewidth}
\multirow{6}{*}{Degree$^\text{Val}$}  
 &Precision & 0.0312 & 0.0747 & 0.1182 & 0.1758 & 0.1000 & 0.1000  \\ 
        &F1 & -0.0135 & 0.0414 & 0.0989 & 0.1685 & 0.0806 & 0.0738  \\ 
        &NDCG & -0.0527 & -0.0455 & -0.0336 & -0.0153 & 0.0368 & -0.0368  \\ 
        &Recall & -0.0670 & -0.0487 & -0.0309 & 0.0059 & 0.0381 & -0.0352  \\ 
        &Hits$^N$ & 0.0077 & 0.0180 & 0.0368 & 0.0599 & 0.0306 & 0.0306 \\ 
  & MRR & \multicolumn{4}{c|}{-0.0215} & -0.0215 & -0.0215 \\ 
\cline{2-8}
\multicolumn{6}{c}{} & 0.0572 & 0.0265\\
\Xhline{2\arrayrulewidth}
\multirow{6}{*}{Degree$^\text{Te}$}  
   &Precision & 0.3731 & 0.5111 & 0.6562 & 0.8126 & 0.5883 & 0.5883  \\ 
        &F1 & 0.2040 & 0.3944 & 0.5926 & 0.7916 & 0.4957 & 0.4957  \\ 
        &NDCG & 0.0004 & 0.0257 & 0.0722 & 0.1330 & 0.0578 & 0.0578  \\ 
        &Recall & -0.0942 & -0.0697 & -0.0301 & 0.0419 & 0.0590 & -0.0380  \\ 
        &Hits$^N$ & 0.2320 & 0.2604 & 0.2731 & 0.2529 & 0.2546 & 0.2546 \\ 
  & MRR & \multicolumn{4}{c|}{0.1642} & 0.1642 & 0.1642 \\ 
\cline{2-8}
\multicolumn{6}{c}{} & 0.2911 & 0.2717\\
\Xhline{2\arrayrulewidth}
\multirow{6}{*}{Subgraph Density}
   &Precision & 0.0744 & 0.0369 & -0.0119 & -0.0815 & 0.0512 & 0.0045  \\ 
        &F1 & 0.1372 & 0.0860 & 0.0187 & -0.0689 & 0.0777 & 0.0433  \\ 
        &NDCG & 0.2205 & 0.2341 & 0.2398 & 0.2398 & 0.2336 & 0.2336  \\ 
        &Recall & 0.2178 & 0.2366 & 0.2495 & 0.2545 & 0.2396 & 0.2396  \\ 
        &Hits$^N$ & 0.1206 & 0.1493 & 0.1688 & 0.2138 & 0.1631 & 0.1631 \\ 
  & MRR & \multicolumn{4}{c|}{0.2026} & 0.2026 & 0.2026 \\ 
  \cline{2-8}
  \multicolumn{6}{c}{} & 0.1530  & 0.1368\\
\Xhline{2\arrayrulewidth}
\end{tabular}
\end{table}

\clearpage
\begin{table}[htbp!]
\centering
\setlength\tabcolsep{2.5pt}
\caption{The correlation between TC$^{\text{Tr}}$/TC$^{\text{Val}}$/TC$^{\text{Te}}$/Degree$^{\text{Tr}}$ and the GCN's LP performance on \textbf{Reptile}. We note that the formal definitions of Absolute Avg. and Basic Avg. are provided in Section~\label{app-corr} and they represent the average absolute and simple average correlation, respectively, across the range of @K for the given metric; these are also then calculated overall.}
\label{tab-corr-reptile}
\begin{tabular}{llcccc|cc}
\Xhline{2\arrayrulewidth}
 & Metric & @5 & @10 & @20 & @50 & Absolute Avg. & Basic Avg.\\ 
\Xhline{2\arrayrulewidth}
\multirow{6}{*}{TC$^\text{Tr}$}  
 &Precision & 0.5189 & 0.5084 & 0.4977 & 0.5009 & 0.5065 & 0.5065  \\ 
        &F1 & 0.5420 & 0.5307 & 0.5146 & 0.5090 & 0.5241 & 0.5241  \\ 
        &NDCG & 0.5298 & 0.5502 & 0.5636 & 0.5741 & 0.5544 & 0.5544  \\ 
        &Recall & 0.5097 & 0.5176 & 0.5343 & 0.5475 & 0.5273 & 0.5273  \\ 
        &Hits$^N$ & 0.5208 & 0.5278 & 0.5407 & 0.5502 & 0.5349 & 0.5349 \\ 
  & MRR & \multicolumn{4}{c|}{0.5300} & 0.5300 & 0.5300 \\ 
\cline{2-8}
\multicolumn{6}{c}{} &0.5294 &0.5294  \\
\Xhline{2\arrayrulewidth}
\multirow{6}{*}{TC$^\text{Val}$}  
 &Precision & 0.3994 & 0.4316 & 0.4550 & 0.4647 & 0.4377 & 0.4377  \\ 
        &F1 & 0.3753 & 0.4250 & 0.4573 & 0.4670 & 0.4312 & 0.4312  \\ 
        &NDCG & 0.3183 & 0.3535 & 0.3790 & 0.3909 & 0.3604 & 0.3604  \\ 
        &Recall & 0.2670 & 0.3085 & 0.3525 & 0.3744 & 0.3256 & 0.3256  \\ 
        &Hits$^N$ & 0.3213 & 0.3483 & 0.3675 & 0.3840 & 0.3553 & 0.3553 \\ 
  & MRR & \multicolumn{4}{c|}{0.3666} & 0.3666 & 0.3666 \\ 
\cline{2-8}
\multicolumn{6}{c}{} & 0.3820 & 0.3820\\
\Xhline{2\arrayrulewidth}
\multirow{6}{*}{TC$^\text{Te}$}  
   &Precision & 0.7083 & 0.7000 & 0.6739 & 0.6506 & 0.6832 & 0.6832  \\ 
        &F1 & 0.7898 & 0.7629 & 0.7138 & 0.6678 & 0.7336 & 0.7336  \\ 
        &NDCG & 0.8475 & 0.8897 & 0.9029 & 0.9072 & 0.8868 & 0.8868  \\ 
        &Recall & 0.8573 & 0.8858 & 0.8931 & 0.8759 & 0.8780 & 0.8780  \\ 
        &Hits$^N$ & 0.8276 & 0.8566 & 0.8604 & 0.8495 & 0.8485 & 0.8485 \\
  & MRR & \multicolumn{4}{c|}{0.8163} & 0.8163 & 0.8163 \\ 
\cline{2-8}
\multicolumn{6}{c}{} & 0.8060 & 0.8060 \\
\Xhline{2\arrayrulewidth}
\multirow{6}{*}{Degree$^\text{Tr}$}  
    &Precision & 0.4998 & 0.5294 & 0.5664 & 0.5947 & 0.5476 & 0.5476  \\ 
        &F1 & 0.5082 & 0.5411 & 0.5788 & 0.6017 & 0.5575 & 0.5575  \\ 
        &NDCG & 0.4247 & 0.4572 & 0.4914 & 0.5120 & 0.4713 & 0.4713  \\ 
        &Recall & 0.4338 & 0.4598 & 0.5201 & 0.5615 & 0.4938 & 0.4938  \\ 
        &Hits$^N$ & 0.4998 & 0.5073 & 0.5391 & 0.5664 & 0.5282 & 0.5282 \\ 
  & MRR & \multicolumn{4}{c|}{0.4369} & 0.4369 & 0.4369 \\ 
\cline{2-8}
\multicolumn{6}{c}{} & 0.5197  & 0.5197 \\
\Xhline{2\arrayrulewidth}
\multirow{6}{*}{Degree$^\text{Val}$}  
   &Precision & 0.3185 & 0.3577 & 0.3797 & 0.3924 & 0.3621 & 0.3621  \\ 
        &F1 & 0.3022 & 0.3546 & 0.3840 & 0.3956 & 0.3591 & 0.3591  \\ 
        &NDCG & 0.2285 & 0.2617 & 0.2858 & 0.2985 & 0.2686 & 0.2686  \\ 
        &Recall & 0.1997 & 0.2384 & 0.2830 & 0.3093 & 0.2576 & 0.2576  \\ 
        &Hits$^N$ & 0.2729 & 0.2938 & 0.3165 & 0.3339 & 0.3043 & 0.3043 \\ 
  & MRR & \multicolumn{4}{c|}{0.2677} & 0.2677 & 0.2677 \\ 
\cline{2-8}
\multicolumn{6}{c}{} & 0.3103 & 0.3103\\
\Xhline{2\arrayrulewidth}
\multirow{6}{*}{Degree$^\text{Te}$}  
  &Precision & 0.6833 & 0.7492 & 0.7935 & 0.8118 & 0.7595 & 0.7595  \\ 
        &F1 & 0.5477 & 0.6726 & 0.7556 & 0.7968 & 0.6932 & 0.6932  \\ 
        &NDCG & 0.3062 & 0.3404 & 0.3676 & 0.3790 & 0.3483 & 0.3483  \\ 
        &Recall & 0.1840 & 0.2103 & 0.2429 & 0.2532 & 0.2226 & 0.2226  \\ 
        &Hits$^N$ & 0.3940 & 0.3555 & 0.3381 & 0.3283 & 0.3540 & 0.3540 \\ 
  & MRR & \multicolumn{4}{c|}{0.4468} & 0.4468 & 0.4468 \\ 
\cline{2-8}
\multicolumn{6}{c}{} & 0.4755 & 0.4755\\
\Xhline{2\arrayrulewidth}
\multirow{6}{*}{Subgraph Density} 
   &Precision & 0.2482 & 0.2491 & 0.2211 & 0.2022 & 0.2302 & 0.2302  \\ 
        &F1 & 0.2943 & 0.2849 & 0.2420 & 0.2108 & 0.2580 & 0.2580  \\ 
        &NDCG & 0.3560 & 0.3819 & 0.3792 & 0.3765 & 0.3734 & 0.3734  \\ 
        &Recall & 0.3588 & 0.3928 & 0.3777 & 0.3607 & 0.3725 & 0.3725  \\ 
        &Hits$^N$ & 0.3440 & 0.3891 & 0.3837 & 0.3745 & 0.3728 & 0.3728 \\ 
  & MRR & \multicolumn{4}{c|}{0.3510} & 0.3510 & 0.3510 \\ 
  \cline{2-8}
  \multicolumn{6}{c}{} & 0.3214  & 0.3214\\
\Xhline{2\arrayrulewidth}
\end{tabular}
\end{table}

\newpage
\section{Edge Reweighting Algorithm}\label{alg-augment}
Here we present our edge reweigting algorithm to enhance the link prediction performance by modifying the graph adjacency matrix in message-passing. We normalize the adjacency matrix to get $\widetilde{\mathbf{A}}$ and $\widehat{\mathbf{A}}$ as defined in the algorithm below.

\begin{algorithm}[htbp!]
 \DontPrintSemicolon
 \footnotesize
 \SetKwInOut{Input}{Input}
 \Input{The input training graph $(\mathbf{A}, \mathbf{X}, \mathcal{E}^{\text{tr}}, \mathbf{D})$, graph encoder $f_{\boldsymbol{\Theta}_f}$, link predictor $g_{\boldsymbol{\Theta}_g}$, update interval $\Delta$, training epochs $T$, warm up epochs $T^{\text{warm}}$ and weights $\gamma$ for combining the original adjacency matrix and the updated adjacency matrix.}

 \BlankLine
 Compute the normalized adjacency matrices $\widehat{\mathbf{A}} = \mathbf{D}^{-0.5}\mathbf{A}\mathbf{D}^{-0.5}, \widetilde{\mathbf{A}} = \mathbf{D}^{-1}\mathbf{A}$

 $\widetilde{\mathbf{A}}^0 = \widetilde{\mathbf{A}}$
 
\For{$\tau =1, \ldots, T$}{

    \BlankLine
    \If{$\tau \% \Delta \ne 0$ and $\tau \le T^{\text{warm}}$}{
    $\widetilde{\mathbf{A}}^{\tau} = \widetilde{\mathbf{A}}^{\tau - 1}$
    }
        
    \tcc{Message-passing and LP to update model parameters}
    \For{
    mini-batch of edges $\mathcal{E}^b \subseteq \mathcal{E}^{tr}$
    }
    {
    Sample negative edges $\mathcal{E}^{b,-}$, s.t., $|\mathcal{E}^{b,-}| = |\mathcal{E}^{b}|$\\
    
    Compute node embeddings $\mathbf{H}^{\tau} = f_{\boldsymbol{\Theta}_f^{\tau - 1}}(\widetilde{\mathbf{A}}^{\tau}, \mathbf{X})$\\
    
    Compute link prediction scores $\mathbf{E}^{\tau}_{ij} = g_{\boldsymbol{\Theta}^{\tau - 1}_g}(\mathbf{H}^{\tau}_i, \mathbf{H}^{\tau}_j), \forall (i, j)\in \mathcal{E}^b \cup \mathcal{E}^{\text{tr}}$\\

    $\mathcal{L}^{b, \tau} = -\frac{1}{|\mathcal{E}^b|}(\sum_{e_{ij}\in\mathcal{E}^b}{\log\mathbf{E}^{\tau}_{ij}} + \sum_{e_{mn}\in\mathcal{E}^{b, -}}\log(1 - \mathbf{E}^{\tau}_{mn}))$\\

    Update $\boldsymbol{\Theta}^{\tau}_g \leftarrow \boldsymbol{\Theta}^{\tau - 1}_g - \nabla_{\boldsymbol{\Theta}^{\tau - 1}_g}\mathcal{L}^{b, \tau}, ~~\boldsymbol{\Theta}^{\tau}_f \leftarrow \boldsymbol{\Theta}^{\tau - 1}_f - \nabla_{\boldsymbol{\Theta}^{\tau - 1}_f}\mathcal{L}^{b, \tau - 1}$
    }
    
    \BlankLine
    \tcc{Update adjacency matrix to enhance weighted TC}
    \If{$\tau \% \Delta == 0$ and $\tau > T^{\text{warm}}$}{
        Compute node embeddings $\mathbf{H}^{\tau} = f_{\boldsymbol{\Theta}^{\tau - 1}_f}(\widetilde{\mathbf{A}}^{\tau - 1}, \mathbf{X})$; then neighborhood embeddings $\mathbf{N}^{\tau} = \widetilde{\mathbf{A}}\mathbf{H}^{\tau}$ \\

        Compute the link prediction scores ${\mathbf{S}}^{\tau}_{ij} = \frac{\exp(g_{\boldsymbol{\Theta}_g^{\tau}}(\mathbf{N}^{\tau}_i, \mathbf{H}^{\tau}_j))}{\sum_{j = 1}^{n}{\exp(g_{\boldsymbol{\Theta}_g^{\tau}}(\mathbf{N}^{\tau}_i, \mathbf{H}^{\tau}_j))}}$ \\
        
        Update the adjacency matrix $\widetilde{\mathbf{A}}^{\tau} \leftarrow \widetilde{\mathbf{A}}^{\tau - 1} + \gamma\mathbf{S}^{\tau}$ \\
    }   
}

\textbf{Return:} $\widetilde{\mathbf{A}}^{\tau}, f_{\boldsymbol{\Theta}^{\tau}_f}, g_{\boldsymbol{\Theta}^{\tau}_g}$

\caption{Edge Reweighting to Boost LP performance}
\label{alg-algorithm}
\end{algorithm}

\end{document}